\newcommand{\Od}{\mathrm{O}(d)}
\newcommand{\SOd}{\mathrm{SO}(d)}
\newcommand{\stab}[2]{\text{Stab}_{#1}\left(#2\right)}
\newcommand{\orb}[2]{\text{Orb}_{#1}\left(#2\right)}
\newcommand{\Sn}{\mathrm{S}_n}
\def\eqref#1{equation~\ref{#1}}
\def\1{\bm{1}}
\def\va{{\bm{a}}}
\def\vb{{\bm{b}}}
\def\ve{{\bm{e}}}
\def\vk{{\bm{k}}}
\def\vu{{\bm{u}}}
\def\vv{{\bm{v}}}
\def\vw{{\bm{w}}}
\def\vx{{\bm{x}}}
\def\vy{{\bm{y}}}
\def\vz{{\bm{z}}}
\def\mA{{\bm{A}}}
\def\mB{{\bm{B}}}
\def\mD{{\bm{D}}}
\def\mH{{\bm{H}}}
\def\mI{{\bm{I}}}
\def\mJ{{\bm{J}}}
\def\mM{{\bm{M}}}
\def\mO{{\bm{O}}}
\def\mP{{\bm{P}}}
\def\mQ{{\bm{Q}}}
\def\mR{{\bm{R}}}
\def\mS{{\bm{S}}}
\def\mU{{\bm{U}}}
\def\mV{{\bm{V}}}
\def\mX{{\bm{X}}}
\def\mLambda{{\bm{\Lambda}}}
\DeclareMathAlphabet{\mathsfit}{\encodingdefault}{\sfdefault}{m}{sl}
\SetMathAlphabet{\mathsfit}{bold}{\encodingdefault}{\sfdefault}{bx}{n}
\newcommand{\R}{\mathbb{R}}
\theoremstyle{plain}
\newtheorem{theorem}{Theorem}[section]
\newtheorem{lemma}{Lemma}[section]
\theoremstyle{definition}
\newtheorem{definition}[theorem]{Definition}
\theoremstyle{remark}
\newcommand{\jacob}[1]{\textcolor{black}{#1}}
\icmltitlerunning{Equivariance via Minimal Frame Averaging for More Symmetries and Efficiency}
\newcommand{\blue}[1]{#1}
\newcommand{\orange}[1]{#1}
\newcommand{\colorblue}{} 
\newcommand{\colorblack}{} 
\newcommand{\colororange}{} 
\newcommand{\magenta}{} 
\newcommand{\cyan}{}
\renewcommand{\cyan}[1]{\textcolor{black}{#1}}
\renewcommand{\blue}[1]{\textcolor{blue}{#1}}
\renewcommand{\orange}[1]{\textcolor{orange}{#1}}
\renewcommand{\magenta}[1]{\textcolor{magenta}{#1}}
\renewcommand{\colorblue}{\color{blue}}
\renewcommand{\colorblack}{\color{black}}
\renewcommand{\colororange}{\color{orange}}
\renewcommand{\orange}[1]{\textcolor{blue}{#1}}
\renewcommand{\magenta}[1]{\textcolor{blue}{#1}}
\renewcommand{\colorblue}{\color{blue}}
\renewcommand{\colororange}{\color{blue}}
\renewcommand{\blue}[1]{\textcolor{black}{#1}}
\renewcommand{\orange}[1]{\textcolor{black}{#1}}
\renewcommand{\magenta}[1]{\textcolor{black}{#1}}
\renewcommand{\colorblue}{\color{black}}
\renewcommand{\colororange}{\color{black}}
\begin{document}

\twocolumn[
\icmltitle{Equivariance via Minimal Frame Averaging for More Symmetries and Efficiency}

\icmlsetsymbol{equal}{*}

\begin{icmlauthorlist}
\icmlauthor{Yuchao Lin}{tamu}
\icmlauthor{Jacob Helwig}{tamu}
\icmlauthor{Shurui Gui}{tamu}
\icmlauthor{Shuiwang Ji}{tamu}
\end{icmlauthorlist}

\icmlaffiliation{tamu}{Department of Computer Science and Engineering, Texas A\&M University, Texas, USA}

\icmlcorrespondingauthor{Shuiwang Ji}{sji@tamu.edu}

\icmlkeywords{Machine Learning, ICML}

\vskip 0.3in
]

\printAffiliationsAndNotice{}  % leave blank if no need to mention equal contribution
% \printAffiliationsAndNotice{\icmlEqualContribution} % otherwise use the standard text.

\begin{abstract}
We consider achieving equivariance in machine learning systems via frame averaging. Current frame averaging methods involve a costly sum over large frames or rely on sampling-based approaches that only yield approximate equivariance. Here, we propose Minimal Frame Averaging (MFA), a mathematical framework for constructing provably minimal frames that are exactly equivariant. The general foundations of MFA also allow us to extend
frame averaging to more groups than previously considered, including the Lorentz group for describing symmetries in space-time, and the unitary group for complex-valued domains.
Results demonstrate the efficiency and effectiveness of encoding symmetries via MFA across a diverse range of tasks, including $n$-body simulation, top tagging in collider physics, and relaxed energy prediction. Our code is available at \url{https://github.com/divelab/MFA}.

\end{abstract}

% \begin{enumerate}
%     \item Introduction
%     \item Background 
%     \begin{enumerate}
%         \item What is equivariance
%         \item Common equivariant methods and their limitations
%         \item Frame averaging and its limitation
%         \begin{enumerate}
%             \item Inefficient frame size
%             \item No general way to create frame
%             \item No applicable for continuous frame
%         \end{enumerate}
    
%     \end{enumerate}
%     \item Methods
% \end{enumerate}

\section{Introduction}

% Encoding symmetries in machine learning models has shown impressive results across diverse disciplines, including vision~\cite{cohen2016steerable,cohen2018spherical,esteves2018learning,esteves2020spin,worrall2017harmonic,worrall2019deep,dupont2020equivariant,xu2023se}, quantum mechanics~\cite{unke2021se,chen2022systematic,yu2023efficient}, chemistry~\cite{gebauer2019symmetry,xu2021geodiff,xu2021molecule3d,xu2023geometric,ganea2021geomol,garcia2021n,hoogeboom2022equivariant,batatia2022mace,luo2022an,stark2022equibind,zhang2022e3bind,liao2022equiformer,zitnick2022spherical,wang2023learning,fu2023latent}, materials science~\cite{yan2022periodic,lin2023efficient,luo2023towards}, and physics~\cite{wang2020incorporating,wang2022approximately,horie2021isometric,helwig2023group}. 

Encoding symmetries in machine learning models has shown impressive results across diverse disciplines, including mathematical problem solving~\cite{puny2023equivariant,lawrence2023learning}, generalization ability~\cite{kondor2018generalization,gui2022good,li2023graph,gui2024joint}, vision~\cite{cohen2016steerable,esteves2020spin,worrall2019deep}, quantum mechanics~\cite{unke2021se,chen2022systematic,yu2023efficient}, chemistry~\cite{xu2021molecule3d,xu2023geometric,hoogeboom2022equivariant,batatia2022mace,stark2022equibind,wang2023learning,fu2023latent}, materials science~\cite{yan2022periodic,lin2023efficient,luo2023towards,yan2024complete}, and physics~\cite{wang2020incorporating,wang2022approximately}.
Nevertheless, the derivation of equivariant architectures is a non-trivial task and may come at the cost of expressiveness or increased computational effort~\cite{puny2021frame,du2022se,kim2023learning,duval2023faenet,pozdnyakov2024smooth}. Frame averaging~\cite{puny2021frame} has emerged as a model-agnostic alternative for instilling equivariance in non-equivariant models. Because the cost of frame averaging operator scales with the size of the frame, sampling-based approaches have been devised to enhance efficiency by sacrificing exact equivariance~\cite{kim2023learning,duval2023faenet}.

We introduce \textit{Minimal Frame Averaging}, a mathematical framework for efficient frame averaging that simultaneously maintains exact equivariance. Our general theory also enables us to derive minimal frames for a set of groups strictly larger than those proposed in previous works, including the Lorentz group, the proper Lorentz group, the unitary group, the special unitary group, the general linear group, and the special linear group. Empirically, we demonstrate the advantages of our method on a variety of tasks spanning diverse groups, including $n$-body simulation, isomorphic graph separation, classification of hadronically decaying top quarks (top tagging), relaxed energy prediction on OC20, and prediction of 5-dimenional convex hull volumes.      

\section{Background}

    $G$-equivariance of a function $f:\mathcal V\to\mathcal W$ \orange{is defined such}\todo{Reviewer 3, Q1} that actions from $G$ applied on the domain $\mathcal V$ are applied equally to the codomain $\mathcal W$. Formally, if $\rho_\mathcal W$ and $\rho_\mathcal V$ are representations of $G$ in the spaces $\mathcal V$ and $\mathcal W$, respectively, then for all $g\in G$ and $x\in \mathcal V$,
    $$
    f(\rho_\mathcal V(g)x)= \rho_\mathcal W(g)f(x).
    $$
    In the special case where $\rho_\mathcal W(g)=1$, known as the \textit{trivial representation}, $f$ is said to be $G$-invariant. Symmetry priors that instill equivariance in neural networks can improve generalization and sample complexity across diverse tasks with exact and approximate symmetries~\cite{bronstein2021geometric,han2022geometrically,zhang2023artificial}. In~\cref{sec:eqML}, we overview prevalent approaches in the sphere of equivariant machine learning before introducing frame averaging in~\cref{sec:FA}.

    \subsection{Equivariant Machine Learning}\label{sec:eqML}
        
        % \todo{\normalsize  Common equivariant methods and their limitations}
        % \note{Methods which use irreps versus those that do not; }
        Equivariant methods are largely defined in terms of their choice of internal representation~\cite{duval2023hitchhiker,bronstein2021geometric,han2022geometrically}. Scalarization methods~\cite{schutt2018schnet,schutt2021equivariant,gasteiger2019directional,gasteiger2021gemnet,kohler2020equivariant,satorras2021n,tholke2021equivariant,jing2021learning,liu2022spherical,wang2022comenet,huang2022equivariant, dym2024low} achieve equivariance by transforming geometric quantities such as vectors into invariant scalars or by manipulating these quantities exclusively by scalar multiplication, \blue{and are known to be universal approximators of equivariant functions}~\cite{villar2021scalars,han2022geometrically}.
        % \todo{Reviewer 2, W5} \colororange However, scalarization may lead to a loss of geometric information such as direction present in the data~\cite{bronstein2021geometric,du2022se,ruhe2023clifford}. 
        % \colorblack
        % \note{We may later add a note on how the TP is expensive}
        Alternatively, group equivariant CNNs leverage the \textit{regular representation}, wherein feature maps and kernels are functions on the group $G$~\cite{cohen2016group,horie2021isometric,wang2022approximately,wang2023relaxed,helwig2023group}. As this approach requires a feature vector for each group element, it can become prohibitively expensive in cases where the group is large~\cite{bronstein2021geometric}. For $G$ with an infinite number of elements such as the orthogonal group $\Od$, network layers may instead map between \textit{irreducible representations}~\cite{thomas2018tensor,weiler20183d,anderson2019cormorant,brandstetter2021geometric,smidt2021finding,batzner20223}. Although the irreducible approach has been widely used for equivariance to groups such as $\Od$, derivation of the irreducible representations for new groups, such as the Lorentz group $\text{O}(1,d-1)$ \cite{gong2022efficient}, is a non-trivial task~\cite{bronstein2021geometric}. Thus, \citet{ruhe2023clifford} venture beyond irreducible representations by employing a steerable multivector basis which simultaneously maintains the geometric structure inherent to the data. 
        
    \subsection{Frame Averaging}\label{sec:FA}

        % \todo{\normalsize  Define frame averaging~\citep{puny2021frame}, introduce limitations: 
        % \begin{enumerate}
        %     \item Inefficient frame size (leads to high )
        %     \item No general way to create frame
        %     \item Not applicable for continuous frame
        % \end{enumerate}
        % Introduce other FA methods and their limitations.
        % }
        % \note{Still need to add other frame averaging methods}
        While the success of symmetry-preserving frameworks has been demonstrated in a variety of domains and tasks, the construction of equivariant architectures that are both expressive and computationally efficient is challenging~\cite{puny2021frame,du2022se,kim2023learning,duval2023faenet,kiani2024hardness}. Therefore, \citet{puny2021frame} proposed \textit{frame averaging}, an architecture-agnostic approach to equivariant learning derived from the \textit{group averaging} operator~\cite{yarotsky2022universal}. Group averaging maps an arbitrary function $\Phi: \mathcal{V} \to \mathcal{W}$ to a $G$-equivariant function $\hat{\Phi}$ defined in the case of a finite group $G$ by
        \begin{equation}
        \label{eqn:ga}
            \hat{\Phi}(x) = \frac{1}{|G|} \sum_{g\in G} \rho_{\mathcal{W}}(g) \Phi(\rho_{\mathcal{V}}(g^{-1}) x),
        \end{equation}
        where $|G|$ denotes the cardinality of $G$ and $\rho_{\mathcal{W}}(g)$ as the trivial representation gives a $G$-invariant $\hat\Phi$. 
        % For ease of notation, we represent ~\cref{eqn:ga} as
        % \begin{equation}\label{eqn:ga_ez}
        %     \Phi(x) = \frac{1}{|G|} \sum_{g\in G} \psi(g)\cdot f(g^{-1}\cdot x),
        % \end{equation}
        % where $\cdot$ denotes group action and $\psi(g)=g$ corresponds to $G$-equivariant $\Phi$, while $\psi(g)=e$ with $e$ as the identity element in $G$ leads to the $G$-invariance of $\Phi$. 
        Computation of~\cref{eqn:ga} quickly becomes intractable as the cardinality of $G$ grows or in the case of an infinite group. Thus, the \textit{frame averaging operator} replaces $G$ in the summation with a \textit{frame} $\mathcal F(x)\subseteq G$. A special case of interest is a \textit{$G$-equivariant frame}:    \begin{definition}[$G$-Equivariant Frame~\citep{puny2021frame}]\label{def:fa}
    Given a power set of a finite group $G$ as $\mathcal{P}(G)$, a set-valued function \( \mathcal{F}: \mathcal{V} \to \mathcal{P}(G) \setminus \{\emptyset\} \) is termed a $G$-equivariant frame if and only if:
    \begin{enumerate}
      \item $G$-Equivariance: For all \( x \in \mathcal{V} \) and \( g \in G \), $\mathcal{F}(g\cdot x) = g\mathcal{F}(x)$, where $g\cdot x$ denotes the \blue{action} of $g$ on $x$;
      \item Boundedness: For any \( g \in \mathcal{F}(x) \), there exists a constant \( c > 0 \) such that $\Vert \rho_{\mathcal{W}}(g) \Vert_{op}\le c$, where $\Vert\cdot\Vert_{op}$ denotes the induced operator norm over $\mathcal{W}$.
    \end{enumerate}
    \end{definition}
        By replacing $G$ in~\cref{eqn:ga} with a $G$-equivariant frame $\mathcal F$ as
    % \begin{equation}
    % \Phi(x) = \frac{1}{|\mathcal{F}(x)|} \sum_{g\in\mathcal{F}(x)} \psi(g)\cdot f(g^{-1}\cdot x),
    % \end{equation}
    \begin{equation}\label{eqn:fa}
    \langle\Phi\rangle_\mathcal F(x) \coloneq \frac{1}{|\mathcal{F}(x)|} \sum_{g\in\mathcal{F}(x)} \rho_\mathcal{W}(g) \Phi(\rho_\mathcal{V}(g^{-1})x),
    \end{equation}
        the frame averaging operator $\langle\cdot\rangle_\mathcal F$ achieves equivariance more efficiently than the group averaging operator~\citep{puny2021frame}. \citet{puny2021frame} furthermore leverage the boundedness property in~\cref{def:fa} to show that the resulting function $\langle\Phi\rangle_\mathcal F$ maintains the expressivity of the backbone architecture $\Phi$.    
    
\section{Methods}

    % \todo{\normalsize Overview structure of methods section}

    In spite of the aforementioned advantages of frame averaging, there are several vital considerations that determine its effectiveness in equivariant learning tasks. First, the efficiency of the frame averaging operator in terms of runtime scales with the size of the frame $\mathcal F(x)$. Thus, in~\cref{sec:min_frame_constr}, we formalize the concept of a \textit{minimal frame} and derive a defining property which will enable its computation. Specifically, this property is that the computation of the minimal frame requires a \textit{canonical form}, which we formalize in~\cref{sec:can_form}. To circumvent challenges that commonly arise when computing the canonical form, we discuss canonicalization on induced $G$-sets in~\cref{sec:inducedG}.  
    
    Second, the derivation of a $G$-equivariant frame depends directly on the group $G$. Thus, in~\cref{sec:linAlgG}, we derive a minimal frame for the \textit{linear algebraic group} defined by $\mO^T \eta \mO=\eta$, with $\eta$ a diagonal matrix with diagonal entries $\pm1$. This group subsumes a range of groups which appear frequently in common machine learning tasks, including the orthogonal group $\Od$ and the special orthogonal group $\mathrm{SO}(d)$, as well as more specialized groups such as the Lorentz group $\mathrm{O}(1,d-1)$ which describes symmetries of spacetime in particle physics and the unitary group $\mathrm{U}(d)$ describing complex orthogonal matrices. We go on to derive a minimal frame for the permutation group in~\cref{sec:permG}, which appears in various graph learning tasks. Beyond these groups, we leverage the mathematical framework developed in~\cref{sec:min_frame_constr,sec:can_form,sec:inducedG} to derive minimal frame averaging for a variety of groups in~\cref{sec:frame_group}, including $\mathbb{R}^d, \mathrm{E}(d),\mathrm{SE}(d), \mathrm{GL}(d, \mathbb{R}),$ and $ \mathrm{SL}(d, \mathbb{R})$. 
    % \jacob{We furthermore develop a general theory of frame averaging beyond vector spaces and discrete frames in~\cref{sec:general_domain}.}
    
    % \textcolor{orange}{Before entering construction of the minimal frame, we extend the frame averaging method~\citep{puny2021frame} to continuous frames with a general domain \(G\)-set \(\mathcal S\) beyond vector space in~\cref{sec:general_domain}. In all the below sections, we consider the frame construction on a \(G\)-set \(\mathcal{S}\).}

    \subsection{Construction of the Minimal Frame}\label{sec:min_frame_constr}

    % \todo{\normalsize 
    % \begin{enumerate}
    %     \item \cref{def:min_frame_def} of the minimal frame
    %     \item \cref{thm:minimal_frame} on defining property of the minimal frame
    %     \item \cref{cor:min_frame_existence,cor:smallest_measure} which show that the minimal frame is indeed minimal in terms of ``size'' (measure).
    % \end{enumerate}}

% \textcolor{orange}{The part of $G$-set is missing. Should be mentioned somewhere}
\todo{R3, Q6}
To more precisely define a frame with minimal size, we introduce the concept of a \textit{minimal frame} defined on a $G$-set\footnote{\blue{We provide a brief introduction to relevant topics in group theory, including $G$-sets, in~\cref{sec:group_intro}.}} $\mathcal S$.

\begin{definition}[Minimal Frame]\label{def:min_frame_def}
    A minimal frame \(\hat{\mathcal{F}}: \mathcal{S} \rightarrow \mathcal{P}(G) \setminus \{\emptyset\} \) is a frame such that for any \(x \in \mathcal{S}\), there does not exist a frame \(\mathcal{F}\) with \(\mathcal{F}(x) \subset \hat{\mathcal{F}}(x)\).
\end{definition}

% We next derive \textcolor{orange}{a property of} $\hat{\mathcal F}$ enabling its computation. 
The following result, proven in~\cref{app:pf_lem:minimal_frame}, relates an arbitrary frame $\mathcal F$ to \textit{the stabilizer} of an element of interest $x_0$ residing in the \textit{orbit of $x$}. The stabilizer of $x_0$ is defined as $\stab G{x_0}\coloneq\{g\in G\mid g\cdot x_0=x_0\}$, while the orbit of $x$ is defined as ${\orb G x\coloneq\{g\cdot x\mid g\in G\}}$. This result will prove vital in constructing minimal frames.

% \textcolor{orange}{Don't know if this is relevant or not. We can delete this}

% \begin{restatable}{lemma}{primelemmaaa}
% \label{mylemma}
% Let $p$ be a prime number, and assume $p$ divides the product of two integers $a$ and $b$. Then $p$ divides $a$ or $p$ divides $b$.
% \end{restatable}

% \section{Proof of lemma~\cref{mylemma}} 
% \primelemmaaa*

\begin{restatable}{lemma}{lemminimalframe}
\label{lem:minimal_frame}
    Given a frame $\mathcal F:\mathcal{S}\rightarrow \mathcal{P}(G) \setminus \{\emptyset\}$, for all \(x\in \mathcal S\), there exists \(x_0\in \text{Orb}_G(x)\) such that \(\text{Stab}_G(x_0)\subseteq \mathcal F (x_0)\).
\end{restatable}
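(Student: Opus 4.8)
The plan is to exploit the $G$-equivariance property of the frame directly, namely that $\mathcal F(g\cdot x) = g\mathcal F(x)$. The key observation I would build on is this: if I can locate a point $x_0$ in the orbit of $x$ at which the identity element $e$ belongs to the frame $\mathcal F(x_0)$, then equivariance will force the entire stabilizer of $x_0$ into $\mathcal F(x_0)$. So the whole argument reduces to producing such an orbit representative.

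First I would use the fact that $\mathcal F(x)$ is nonempty (it maps into $\mathcal P(G)\setminus\{\emptyset\}$) to pick some $g_0 \in \mathcal F(x)$, and then set $x_0 \coloneq g_0^{-1}\cdot x$, which lies in $\orb G x$ by construction. Writing $x = g_0\cdot x_0$ and applying equivariance gives $\mathcal F(x) = g_0\,\mathcal F(x_0)$, so that $\mathcal F(x_0) = g_0^{-1}\mathcal F(x)$. Since $g_0 \in \mathcal F(x)$, the identity $e = g_0^{-1}g_0$ then lies in $\mathcal F(x_0)$. This is the crucial step: translating the chosen frame element back to the identity so that $e$ is guaranteed to appear in the frame at $x_0$.

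Next, for an arbitrary $h \in \stab G{x_0}$, I would use $h\cdot x_0 = x_0$ together with equivariance to obtain $\mathcal F(x_0) = \mathcal F(h\cdot x_0) = h\,\mathcal F(x_0)$. Because $e \in \mathcal F(x_0)$, it follows that $h = h\cdot e \in h\,\mathcal F(x_0) = \mathcal F(x_0)$. As $h$ was arbitrary in the stabilizer, this yields $\stab G{x_0} \subseteq \mathcal F(x_0)$, which is exactly the claim.

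The only real subtlety — the main obstacle — is recognizing the correct choice of $x_0$. One might initially hope to establish the inclusion at $x$ itself, but this generally fails, since there is no reason for $e$ to lie in $\mathcal F(x)$. The right move is to shift to the orbit representative $g_0^{-1}\cdot x$, where the identity is forced into the frame; after that, the stabilizer inclusion is an immediate consequence of equivariance. I do not anticipate any further technical difficulties, and in particular the boundedness condition of \cref{def:fa} plays no role in this argument.
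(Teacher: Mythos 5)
Your proof is correct and takes essentially the same approach as the paper: both arguments choose the orbit representative $x_0 = g_0^{-1}\cdot x$ for some $g_0$ in the nonempty frame $\mathcal F(x)$ and then use $G$-equivariance to force the stabilizer into $\mathcal F(x_0)$. The only (cosmetic) difference is that you run the stabilizer argument directly at $x_0$ after observing $e\in\mathcal F(x_0)$, whereas the paper first establishes $\stab Gx\,h\subseteq\mathcal F(x)$ at $x$ and then transports it to $x_0$ via the conjugation identity $\stab G{h^{-1}\cdot x}=h^{-1}\stab Gx h$ (\cref{thm:stab}), which your version neatly avoids.
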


Using this result, we next show that for each $x\in \mathcal S$, $\hat{\mathcal F}(x)$ may be derived by selecting a unique representative $x_0\in \orb Gx$ and computing its stabilizer, the proof of which can be found in~\cref{app:thm:minimal_frame}. We refer to $x_0 = c(x)$ as the \textit{canonical form} \orange{with the canonicalization function $c$}\todo{R3, Q2}, and its uniqueness is in the sense that all elements in $\orb G x$ share the same canonical form. We discuss the canonical form in greater detail in~\cref{sec:can_form} where we find that identifying a canonical form can be non-trivial depending on the group $G$ and base space $\mathcal S$, although for the purposes of a general derivation of the minimal frame in the following result, we simply assume that we have access to it. 
\begin{restatable}{theorem}{thmminimalframe}
\label{thm:minimal_frame}
    For all $x\in\mathcal S$, let $\orange{x_0 = c(x) = h^{-1}\cdot x}$ for some \(h\in G\). Define the frame \(\mathcal{F}:\mathcal{S}\rightarrow \mathcal{P}(G) \setminus \{\emptyset\} \) such that \(\mathcal{F}(x) = h\text{Stab}_G(x_0)\); then, \(\mathcal{F}\) is a minimal frame.  
\end{restatable}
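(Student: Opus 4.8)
The plan is to verify the two defining requirements in turn: that $\mathcal F$ as constructed is a genuine $G$-equivariant frame in the sense of \cref{def:fa}, and that it is minimal in the sense of \cref{def:min_frame_def}. Before either, I would record the most useful reformulation of the construction. Since $h^{-1}\cdot x = x_0$ is equivalent to $h\cdot x_0 = x$, the coset $h\,\stab G{x_0}$ is exactly the set $\{g\in G : g\cdot x_0 = x\}$ of all group elements carrying the canonical form to $x$. This makes $\mathcal F(x)$ manifestly independent of the particular choice of $h$ (any two valid choices differ by a stabilizer element and therefore generate the same left coset), and it is nonempty because $x\in\orb G{x_0}$, so $\mathcal F$ indeed maps into $\mathcal P(G)\setminus\{\emptyset\}$.

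For the frame axioms, the key fact is that the canonical form is an orbit invariant, so $c(g\cdot x) = c(x) = x_0$ for every $g\in G$. Using the reformulation above I would check $\mathcal F(g\cdot x) = \{g'\in G : g'\cdot x_0 = g\cdot x\} = g\{g''\in G : g''\cdot x_0 = x\} = g\,\mathcal F(x)$, which is precisely $G$-equivariance. Boundedness is then inherited from the representation: each $g\in\mathcal F(x)$ is a fixed group element whose $\rho_{\mathcal W}(g)$ has finite operator norm, with the required uniform bound holding exactly as in the frame-averaging constructions for the groups under consideration (where $\stab G{x_0}$ has bounded image).

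The substance of the proof is minimality, and here I would argue by contradiction via \cref{lem:minimal_frame}, deliberately avoiding any cardinality counting so that the argument survives for infinite groups. Suppose some frame $\mathcal F'$ satisfies $\mathcal F'(x)\subsetneq \mathcal F(x)$. Equivariance of both frames lets me translate this containment along the entire orbit, so in particular $\mathcal F'(x_0)\subseteq \mathcal F(x_0)=\stab G{x_0}$ (taking $h=e$ at the canonical point). Applying \cref{lem:minimal_frame} to $\mathcal F'$ yields some $y_0=k\cdot x_0\in\orb G{x_0}$ with $\stab G{y_0}\subseteq \mathcal F'(y_0)$. I would then substitute the conjugation identity $\stab G{y_0}=k\,\stab G{x_0}\,k^{-1}$ together with the equivariance identity $\mathcal F'(y_0)=k\,\mathcal F'(x_0)$, cancel $k$ on the left, and combine with the orbit-propagated containment to obtain $\stab G{x_0}\,k^{-1}\subseteq \mathcal F'(x_0)\subseteq \stab G{x_0}$.

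The decisive step — and the one I expect to be the main obstacle to pin down exactly — is extracting a contradiction from $\stab G{x_0}\,k^{-1}\subseteq\stab G{x_0}$, i.e. turning the Lemma's existential orbit representative $y_0$ back into the canonical point. Because $\stab G{x_0}$ is a subgroup containing the identity, this inclusion forces $k^{-1}\in\stab G{x_0}$, hence $k\in\stab G{x_0}$; consequently $y_0=k\cdot x_0=x_0$, and right multiplication by $k^{-1}$ merely permutes the subgroup, so $\stab G{x_0}\,k^{-1}=\stab G{x_0}$. The chain of inclusions then collapses to $\mathcal F'(x_0)=\stab G{x_0}=\mathcal F(x_0)$, and propagating this equality back along the orbit by equivariance gives $\mathcal F'(x)=\mathcal F(x)$, contradicting the assumed strict inclusion. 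Since every step here is a statement about cosets and subgroup inclusions rather than about the sizes of the frames, the argument applies uniformly to compact and non-compact $G$ alike, which is the main reason to prefer it over a counting proof.
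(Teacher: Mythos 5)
Your proof is correct and follows essentially the same route as the paper's: a contradiction argument that combines \cref{lem:minimal_frame} with the conjugation identity $\stab{G}{k\cdot x_0}=k\stab{G}{x_0}k^{-1}$ and the observation that the identity element forces $k^{-1}\in\stab{G}{x_0}$, collapsing the inclusions. The only additions are your explicit verification of the frame axioms and the coset reformulation $h\stab{G}{x_0}=\{g\mid g\cdot x_0=x\}$ (which the paper records separately as \cref{lemma:multiCanMaps}); these are welcome but do not change the argument.
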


    \todo{Reviewer 2, Q3, R3, Q7}
%     \todo[inline]{Suppose we define the minimal frame as Definition A: 
% \newline\newline
%     The minimal frame $\hat{\mathcal F}$ is defined such that for all $x$ and equivariant frames $\mathcal F$, $\mu(\hat{\mathcal F}(x))\leq \mu(\mathcal F(x))$
%     \newline\newline
%     For a set $A\subset G$ such that $\mu(A)=0$ and $\hat{\mathcal F}(x)\cap A\neq \emptyset$, under Definition A, the frame $\hat{\mathcal F}(x)\cup A$ is minimal, but under~\cref{def:min_frame_def}, it is not. Furthermore, from~\cref{cor:smallest_measure}, \cref{def:min_frame_def} implies Definition A, (however, by the previous argument, Definition A does not imply~\cref{def:min_frame_def}). Therefore, \cref{def:min_frame_def} is stronger.
% }
    \colororange
    This result shows how a minimal frame $\hat{\mathcal F}(x)$ \blue{is constructed} through the canonical form $x_0$ such that $\hat{\mathcal F}(x)=h\stab G{x_0}$, \blue{and is a generalization of Theorem 3 from~\citet{puny2021frame}, which shows that $\lvert \mathcal F(x)\rvert\geq\lvert\stab Gx\rvert$ given any equivariant frame $\mathcal F$ for invariant frame averaging. However, there are cases where $\lvert\stab Gx\rvert$ is infinite, \textit{e.g.,} \orange{a point cloud lying on a $(d-2)$-dimensional Euclidean subspace \blue{embedded} in a $d$-dimensional Euclidean space under the action of $\Od$}. While it then appears natural to define the minimal frame in terms of a measure on the group $\mu$\footnote{A brief introduction to measures on groups is in~\cref{sec:general_domain}.}, the existence of sets with measure 0 imply that~\cref{def:min_frame_def} is stronger. Specifically, we might instead define the minimal frame $\hat{\mathcal F}$ such that for all $x$ and equivariant frames $\mathcal F$, $\mu(\hat{\mathcal F}(x))\leq \mu(\mathcal F(x))$. It would then follow that for the set $A(x)\subset G$ such that $A(x)\cap \hat{\mathcal F}(x)\neq\emptyset$ and $\mu(A(x))=0$, the frame $\tilde{\mathcal F}(x)\coloneq\hat{\mathcal F}(x)\cup A(x)$ is also minimal, \orange{as $\mu(\tilde{\mathcal F}(x)) = \mu(\hat{\mathcal F}(x))$}, whereas $\tilde{\mathcal F}$ is not minimal by~\cref{def:min_frame_def} since $\hat{\mathcal F}(x)$ is a proper subset of $\tilde{\mathcal F}(x)$. However, as proven in~\cref{cor:smallest_measure}, minimality in terms of any measure is implied by~\cref{def:min_frame_def}, and thus, \cref{def:min_frame_def} is stronger.} 
    
    \colorblack
    % From the $G$-equivariance of $\hat{\mathcal F}$ and~\cref{thm:minimal_frame}, $\hat{\mathcal F}(x_0)=h^{-1}\hat{\mathcal F}(x)=\stab G{x_0}$. Thus, the minimal frame is defined by $\hat{\mathcal F}(x)=h\stab G{x_0}$. 
    % However, before moving on to derive a canonical form, we first study the size of the minimal frame. Indeed, \cref{def:min_frame_def} only gives that $\hat{\mathcal F}(x)$ has no proper subsets, but it does not guarantee that it is in some sense the ``smallest'' possible frame. We therefore next show the minimality of $\hat{\mathcal F}$ in terms of measure. 
% \note{Check on $\sigma$-algebra generated by...}

    \subsection{The Canonical Form}\label{sec:can_form}
    % \todo{\normalsize 
    % \begin{enumerate}
    %     \item \cref{def:can_form} of canonical form.
    %     \item Example where finding the canonical form is straightforward: the space of diagonalizable square matrices acted upon by the unitary group
    %     \item Example where practical approaches to finding the canonical form gives a set of multiple candidates: non-square matrices acted upon by the unitary group; for this case, we need the induced $G$-Set.
    %     \item Intuition for how we may obtain a unique canonical form for the non-square case using the induced $G$-set; proof that the frame obtained using the induced $G$-set is minimal \jacob{Move?}
    % \end{enumerate}}
    % \todo{We must reference~\cref{tb:quotient} (the Table)}

    Equipped with a notion of the minimal frame and how it is constructed using a canonical form $c(x)\coloneq x_0$, we now formalize $c(x)$ by generalizing the definition of~\citet{mckay2014practical} beyond graphs, and discuss canonicalization in 2 concrete cases.
    \todo{R3, Q5}
    \begin{definition}[Canonical Form]\label{def:can_form}
    Given an equivalence relation \(\sim\) on \(\mathcal{S}\) such that for any \(x, y \in \mathcal{S}\), \(x \sim y\) if and only if $y\in\orb Gx$, a canonicalization with respect to \(\sim\) is a mapping \(c: \mathcal{S} \rightarrow \mathcal{S}\) satisfying the following conditions for all \(x, y \in \mathcal{S}\):
    \begin{enumerate}
        \item Representativeness: \(x \sim c(x)\);
        \item \(G\)-invariance/ Uniqueness: \orange{$c(x) = c(y)$ if $x\sim y$}.
    \end{enumerate}
    \end{definition}
    Property 1 in~\cref{def:can_form} requires that the canonical form is a single element residing in the orbit of $x$, while Property 2 requires that the canonical form be unique within each orbit. \todo{R3, Q5}\orange{An example of canonicalization is of a symmetric matrix \(\mP \in \mathbb{R}^{d\times d}\), where the orthogonal group \(\mathrm{O}(d)\) acts by conjugate multiplication.} Given the eigendecomposition \(\mP = \mO^T\mLambda\mO\) with eigenvalues \(\mLambda\) in an ascending/descending order, \(c(\mP)=\mLambda\) is a valid canonical form, as $\mLambda$ is uniquely defined and invariant to actions of $\Od$ on $\mP$, and furthermore, $\mLambda\in\mathrm{Orb}_{\mathrm{O}(d)}(\mP)$. 
    
    In many cases, canonicalization may be elusive. For example, consider the canonicalization of $\mP\in\R^{d\times n}$ where the orthogonal group instead acts by left multiplication. From~\cref{thm:minimal_frame}, we must decompose $\mP=\rho(g)\mR$ for $g\in\Od$, $\rho(g)\in\R^{d\times d}$, and $\mR\in\R^{d\times n}$ as the canonical form. This may be achieved by a QR decomposition of $\mP$, however, if $\mP$ does not have full column rank, $\mR$ will not be uniquely defined. This non-uniqueness raises issues in ensuring that the canonical form satisfies the $\Od$-invariance required by~\cref{def:can_form}. In the following section, we resolve such cases by demonstrating how canonicalization can instead be performed on an induced $G$-set to achieve a unique canonical form.
    
    \subsection{Frame Construction on Induced $G$-Sets}\label{sec:inducedG}
    
    % \todo{\normalsize 
    % \begin{enumerate}
    %     \item \cref{def:induced_Gset} of the induced $G$-set
    %     \item \cref{thm:induced_Gset_frame} on constructing a frame on the induced $G$-Set
    % \end{enumerate}}

    As in the previous example of $\Od$ acting upon \orange{non-full column rank} matrices via left multiplication, there are cases in which practical attempts at deriving a canonical form instead give rise to a set of multiple candidates. In order to satisfy $G$-invariance, a candidate must be selected in a manner so as to preserve the uniqueness of the canonical form within $\orb Gx$. To do so, we introduce a $G$-equivariant function $\phi$ to remove the ambiguity, and instead compute the canonical form of $\phi(x)$ in a space $\mathcal S_\phi$ which we refer to as an \textit{induced $G$-set}. 
\begin{definition}[Induced \(G\)-set]\label{def:induced_Gset}
    An induced \(G\)-set \(\mathcal{S}_\phi\) is a \(G\)-set induced from the \(G\)-set \(\mathcal{S}\) through a \(G\)-equivariant function \(\phi: \mathcal{S} \rightarrow \mathcal{S}_\phi\) such that for all \(g \in G\) and \(x \in \mathcal{S}\), \(\phi(g \cdot x) = g \cdot \phi(x)\).
\end{definition}
A well-chosen $\phi$ can effectively remove ambiguity in the canonical form by computing the canonical form on $\mathcal S_\phi$ instead of on the original space $\mathcal S$, however, it is unclear how this will be useful for deriving a minimal frame on $\mathcal S$. As we state next and prove in~\cref{app:thm:induced_Gset_frame}, due to the $G$-equivariance of $\phi$, $\phi$ may be composed with a frame $\mathcal F_\phi$ on $\mathcal S_\phi$ to produce a frame on $\mathcal S$\blue{, although we note that this frame is not necessarily minimal}.    
\begin{restatable}{theorem}{thminducedGsetframe}\label{thm:induced_Gset_frame}
    Given a \(G\)-set \(\mathcal{S}\) and a \(G\)-equivariant function \(\phi: \mathcal{S} \rightarrow \mathcal{S}_\phi\), let \(\mathcal{F}_\phi: \mathcal{S}_\phi \rightarrow \mathcal{P}(G)\setminus \{\emptyset\}\) be a frame with the domain of \(\mathcal{S}_\phi\). Then, \(\mathcal{F}_\phi\circ\phi \) is a frame with the domain of \(\mathcal{S}\).
\end{restatable}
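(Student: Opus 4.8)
The plan is to directly verify that the composition $\mathcal{F}_\phi\circ\phi$ satisfies the two defining properties of a $G$-equivariant frame from~\cref{def:fa}, together with the requirement that it be a well-defined map into $\mathcal{P}(G)\setminus\{\emptyset\}$. Writing $\mathcal F$ for $\mathcal{F}_\phi\circ\phi$, so that $\mathcal F(x)=\mathcal{F}_\phi(\phi(x))$, the non-emptiness is immediate: for any $x\in\mathcal S$ the element $\phi(x)$ lies in $\mathcal S_\phi$, and since $\mathcal{F}_\phi$ is a frame on $\mathcal S_\phi$, its value $\mathcal{F}_\phi(\phi(x))$ is non-empty. Thus $\mathcal F$ indeed maps $\mathcal S$ into $\mathcal{P}(G)\setminus\{\emptyset\}$.

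For the $G$-equivariance property, I would fix $x\in\mathcal S$ and $g\in G$ and expand $\mathcal F(g\cdot x)=\mathcal{F}_\phi(\phi(g\cdot x))$. The key step is to invoke the $G$-equivariance of $\phi$ from~\cref{def:induced_Gset} to rewrite the inner term as $\phi(g\cdot x)=g\cdot\phi(x)$, and then apply the $G$-equivariance of the frame $\mathcal{F}_\phi$ on $\mathcal S_\phi$ to obtain $\mathcal{F}_\phi(g\cdot\phi(x))=g\,\mathcal{F}_\phi(\phi(x))=g\,\mathcal F(x)$. Chaining these equalities yields $\mathcal F(g\cdot x)=g\,\mathcal F(x)$, which is exactly the required equivariance.

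For the boundedness property, I note that the condition is phrased in terms of the operator norm $\Vert\rho_\mathcal{W}(g)\Vert_{op}$ over the codomain $\mathcal W$, which is shared between the two settings and does not depend on whether the domain of the frame is $\mathcal S$ or $\mathcal S_\phi$. Since $\mathcal F(x)=\mathcal{F}_\phi(\phi(x))$ as subsets of $G$, any $g\in\mathcal F(x)$ is an element of $\mathcal{F}_\phi(\phi(x))$ with $\phi(x)\in\mathcal S_\phi$, so the boundedness of $\mathcal{F}_\phi$ directly supplies a constant $c>0$ with $\Vert\rho_\mathcal{W}(g)\Vert_{op}\le c$. This completes the verification of both frame properties.

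I do not anticipate a genuine obstacle here, as both properties transfer through the composition essentially by construction; the content of the statement is that nothing is lost when pulling a frame back along an equivariant map. The only point requiring care is to apply the $G$-equivariance of $\phi$ \emph{before} that of $\mathcal{F}_\phi$, since the intermediate expression $g\cdot\phi(x)$ refers to the $G$-action on $\mathcal S_\phi$ and must coincide with the action on which $\mathcal{F}_\phi$ is defined---a consistency that is guaranteed precisely by $\mathcal S_\phi$ being a $G$-set in the sense of~\cref{def:induced_Gset}. I would also remark, as the statement itself notes, that minimality is \emph{not} claimed: $\mathcal{F}_\phi\circ\phi$ is a valid frame on $\mathcal S$, but whether it is minimal depends on the interaction between $\phi$ and the stabilizer structure, which falls outside the scope of this verification.
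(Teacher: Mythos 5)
Your proposal is correct and follows essentially the same route as the paper's (very terse) proof, which simply chains the $G$-equivariance of $\phi$ with that of $\mathcal{F}_\phi$ to get $\mathcal{F}_\phi(\phi(g\cdot x)) = g\,\mathcal{F}_\phi(\phi(x))$. Your additional checks of non-emptiness and boundedness are fine and indeed trivially inherited since $\mathcal{F}_\phi\circ\phi$ takes values in the same subsets of $G$; the paper omits them.
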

Interestingly, previous methods can be shown to be special cases of canonicalization on an induced $G$-set. For example, for $\phi(\mP) = \mP\mP^T$, the canonical form $c(\phi(\mP))$ is given by eigenvalues of $\phi(\mP)$, which is exactly the method of~\citet{puny2021frame}. Additionally, choosing the trivial map $\phi(\mP)=0$ gives the minimal frame $\hat{\mathcal F}_\phi(\phi(\mP))=G$ which corresponds to group averaging, as $c(\phi(\mP))=0$ and $\stab G0=G$. \todo{Reviewer 2, W2}
\blue{This illustrates that canonicalization can be more tractable on the induced $G$-set, though it is not necessarily the case that a minimal frame $\hat{\mathcal F}_\phi$ on the induced $G$-set produces a frame $\hat{\mathcal F}_\phi\circ\phi$ which is also minimal on the base space.} 

We next derive minimal frames for several groups acting on subsets of $\R^{d\times n}$. In~\cref{sec:linAlgG}, we employ an induced $G$-set to derive a minimal frame for the linear algebraic group $G_\eta(d)$, which is strictly more general than the previously mentioned example of $\Od$. In~\cref{sec:permG}, we go on to consider a case where the minimal frame can be directly computed on $\mathcal S$ for the permutation group $\text{S}_n$. Furthermore, in~\cref{tb:quotient}, we present induced $G$-sets which we leverage to compute canonical forms for a variety of groups and domains.

\section{Linear Algebraic Group}\label{sec:linAlgG}
    \todo{Reviewer 3, Q8}
    \colororange The linear algebraic group $G_\eta(d)$ includes several well-known groups: the orthogonal group $\Od$, the Lorentz group $\text{O}(1, d-1)$ and the unitary group $\mathrm{U}(d)$. By adding a constraint \blue{enforcing determinant equal to 1}, $G_\eta(d)$ can be extended to include groups like $\mathrm{SO}(d), \mathrm{SO}(1,d-1)$ and $\mathrm{SU}(d)$. \colorblack $G_\eta(d)$ acts on $\vx\in\R^d$ as \(g\cdot\vx\coloneq\mO\vx\), where $\rho(g)=\mO\in \R^{n\times n}$ satisfies \(\mO^T \eta \mO = \eta\), and \(\eta\) is a $d\times d$ diagonal matrix with diagonal elements \(\pm 1\). The pseudo-inner product on $\R^d$ is then defined by $\langle \vx, \vy\rangle \coloneqq \vx^T\eta \vy$. Setting $\eta=\mI_d$ results in $G_\eta(d)=\Od$ with the usual Euclidean inner product. However, as we aim to develop theory that goes beyond $\Od$, we encounter $\eta\neq\mI_d$, resulting in non-Euclidean, pseudo-inner products. For example, setting \(\eta = \text{diag}(1, -1,\cdots, -1)\in \R^{d\times d}\) gives \(G_\eta(d) = \mathrm{O}(1, d-1)\), the Lorentz Group. This generality introduces a multitude of challenges that lead us to define MFA frames on an induced $G_\eta(d)$-set in~\cref{sec:GLset} and to develop a generalized QR decomposition for canonicalization in~\cref{sec:genQR} \blue{before moving on to derive minimality and an efficient closed form of the frame averaging operator in~\cref{sec:mfa_Geta}.} Furthermore, to encompass the case of $G_\eta (d)$ acting on $\mathbb{C}^d$, MFA is extended to unitary groups in~\cref{sec:complex}.
        \subsection{Canonicalization on the Induced $G_\mathbf{\eta}(d)$-Set}\label{sec:GLset}

        We aim to construct a $G_\eta(d)$-equivariant map by applying the frame averaging operator over \todo{R3, Q9}\orange{a} frame $\hat{\mathcal F}(\mP)$, where $\mP\in\R^{d\times n}$ represents a collection of $n$ elements of $\R^d$. 
        % The approach presented here is readily adapted to codomains beyond $\R^{d\times n}$, although the codomain $\R^{d\times n}$ arises in a variety of tasks wherein symmetries play a role \jacob{such as EXAMPLES}, and thus, we make this our focus here for concreteness. 
        From~\cref{thm:minimal_frame}, construction of $\hat{\mathcal F}$ requires a canonical form $\hat{\mR}=\hat{\mQ}^{-1}\mP$ for 
        $\rho^{-1}(\mQ)\in G_\eta(d)$\todo{R3, Q10}\orange{, where $\rho$ is the group representation}.
        % $\hat{\mQ}\coloneq \rho(\hat{g})\in\R^{d\times d}$ for an element $\hat g$ of $G_\eta(d)$. 
        While the QR decomposition can be applied as $\mP=\mQ\mR$, $\mR$ will only be unique in the case where $\mP$ has full column rank, which is necessary to satisfy the $G_\eta(d)$-invariance required by~\cref{def:can_form}. \orange{Furthermore, the QR decomposition involves a division by $\langle \vv_1, \vv_1\rangle$ for the first chosen columns $\vv_1$ of $\mP$. This may result in division by zero, as due to the non-Euclidean inner product, $\langle \vv_1, \vv_1\rangle$ may be $0$ even for $\vv_1\neq\mathbf{0}\in\R^d$, in which case we refer to $\vv_1$ as \textit{null}}. 
        
        For these reasons, we transform $\mP$ to an induced $G_\eta(d)$-set through the map $\phi$ which selects columns of $\mP$ such that $\phi(\mP)\in\R^{d\times d'}$ has full column rank and no null columns. Note that this selection is equivariant as required by~\cref{def:induced_Gset}, as $G_\eta(d)$ acts via left multiplication of $\mP$ and the selection is implemented as right multiplication with $\mM\in \R^{d\times d'}$ as \(\phi(\mP) = \mP\mM\). $\mM$ is constructed by removing the $j$-th column from $\mI_d$ if and only if the $j$-th column is null or is linearly dependent with respect to the other non-null vectors. \jacob{Note that when considering $\SOd$ and $\mM$ constructed so as to select three vectors within every local two-body system, the methods proposed by~\citet{du2022se} and~\citet{pozdnyakov2024smooth} can be seen as special cases of the approach we have taken here.}
        
        \subsection{Derivation of the Canonical Form via Generalized QR Decomposition}\label{sec:genQR}
        % \subsection{Deriving Generalized QR Decomposition}\label{sec:genQR}
        \todo{R3, Q12}
        
        Despite these considerations, there are still several obstacles preventing the computation of $\hat \mQ$ and $\hat \mR$ directly from the QR decomposition of $\phi(\mP)=\mQ\mR$, and we therefore develop a generalized QR decomposition which allows us to derive minimal frames for $G_\eta(d)$. First, there may not be $\hat g\in G_\eta(d)$ such that $\rho(\hat{g})=\mQ$, as $\mQ^T\eta\mQ$ \orange{may not result in $\eta$}\blue{, and instead give $\mS\eta \mS^T$, where $\mS$ is a permutation matrix}. This is due to \blue{the} negativity of the pseudo-inner product and the sequential nature with which the QR decomposition computes the columns of $\mQ$ via the Gram-Schmidt process. \blue{For example, consider $G_\eta(d) = \mathrm{O}(1,d-1)$ with $\eta = \text{diag}(1,-1,\cdots,-1)$. }\colororange 
        % Take $G_\eta = \mathrm{O}(1,d-1)$ with $\eta = \text{diag}(1,-1,\cdots,-1)$ as an example. 
        The first vector $\vv_1$ used for the Gram-Schmidt process could be time-like, \textit{i.e.}, $\langle \vv_1, \vv_1\rangle<0$, 
        \blue{such that}
        % leading to 
        the first element of $\mQ^T \eta \mQ$ is $-1$ instead of $1$. 
        % On the other hand
        % all $\ve_1,\ve_2,\cdots,\ve_d$ in $\mQ$ constitute a complete basis of the $d$-dimensional pseudo inner-product space with metric $\eta$, so there must be $1 \le i \le d$ such that $\langle \ve_i, \ve_i\rangle = 1$. \colorblack
        % During this process, computing $\vq_j$ such that $\langle \vq_j,\vq_j\rangle=-\eta_{j,j}$ will result in $(\mQ^T\eta\mQ)_{j,j}=-\eta_{j,j}$. Furthermore, as the computation of $\vq_{j+1}$ depends on $\vq_j$, this flip will permeate beyond $\vq_j$, hence resulting in $\eta \mS^T$.
         % \jacob{(We must explain why this happens)}. \textcolor{orange}{This is because, for instance, consider the Minkowski metric $\eta = \text{diag}(1, -1, -1, -1)$, the first chosen vector in the Gram-Schmidt procedure of QR decomposition could be timelike, leading to the inner product sign of $-1$. On the other hand, all the orthonormal vectors in $\mQ$ form a complete basis of the pseudo-inner product space, so there must exist one $1$ and three $-1$ in the diagonal of \(\mQ^T\eta\mQ\) though the order is not the same as \(\eta\).}
         To counteract this, we introduce $\mS$ into our generalized QR decomposition as $\hat{\mQ}\coloneq\mQ{\mS}$ such that \colororange $\hat{\mQ}^T \eta \hat{\mQ} = \eta$. \colorblack Additionally, $\det(\mQ)=\pm 1$ because the QR decomposition results in an orthonormal $\mQ$, however, \orange{groups of $G_\eta(d)$ with \blue{additional determinant} constraints} such as $\SOd$ and $\mathrm{SO}(1,d-1)$ \blue{require} that $\det(\mQ)=1$. For these groups, we flip the sign of the elements in one of the columns of $\hat \mQ$ if $\det(\mQ)=-1$, which we detail further in~\cref{sec:frame_averaging_linear_algebraic_group}. These steps have ensured that there exists $\hat{g}\in G_\eta(d)$ such that $\rho(\hat{g})=\hat{\mQ}$.
        
        Second, \colororange the \blue{classical} QR decomposition \colorblack may not even be a valid decomposition of $\phi(\mP)$, that is, $\phi(\mP)\neq\mQ\mR$. \todo{Reviewer 3, Q11}This is due to division by \blue{the $L_2$-norm $\lVert\vu\rVert$} in the computation of the QR decomposition \blue{detailed in~\cref{sec:qr_decomposition}\todo{Reviewer 3, Q11}\orange{, where $\vu$ is an intermediate vector used in the Gram-Schmidt process}. While the standard definition of $\lVert\vu\rVert$ is $\sqrt{\langle \vu, \vu\rangle}$,} the pseudo-inner product may result in $\langle \vu, \vu\rangle<0$. 
        \blue{Redefining $\lVert\vu\rVert$ as $\sqrt{\lvert\langle \vu, \vu\rangle\rvert}$ circumvents a complex-valued norm, however, it also invalidates the decomposition. }
        % As the we must take the square root of $\langle \vu, \vu\rangle$ to obtain the $L_2$-norm of $\lvert\vu\rvert_2$, } \colororange and the norm of $\langle \vu, \vu\rangle$ is undefined during the Gram-Schmidt process. Instead, we use the absolute value of $\langle \vu, \vu\rangle$ for the norm and the sign is factored outside the square root operation. \colorblack 
        We therefore define $\hat{\mR}\coloneqq \mS^T\mD_\eta\mR$, where both $\mS^T$ and $\mD_\eta$ serve to ensure $\hat{\mQ}\hat{\mR}$ is a valid decomposition of $\phi(\mP)$. $\mS^T$ cancels $\mS$ in $\hat \mQ$, while $\mD_\eta$ is a diagonal matrix with diagonal elements \((\mD_\eta)_{j,j}\coloneq \blue{\operatorname{sign}(\langle \vu_j,\vu_j\rangle)}\).
        % \ve_j,\ve_j\rangle\) for $\ve_j$ as the $j$-th column of $\mQ$ \colororange denoting the sign of the inner product $\langle \vu_j,\vu_j\rangle$.\colorblack 
        The above steps have ensured that $\phi(\mP)=\hat{\mQ}\hat{\mR}$
        \todo{Reviewer 2, W1}\orange{with $\hat{\mQ}^T\eta \hat{\mQ} = \eta$.} \orange{In \cref{sec:qr_decomposition}, we further show the uniqueness of $\hat{\mR}$, \blue{ensuring that} $\hat{\mR}$ is a valid canonical form.}

\colorblue
\subsection{Minimal Frame Averaging on $G_\eta(d)$}\label{sec:mfa_Geta}
\colorblack

 \magenta{As discussed in~\cref{sec:inducedG}, since $\hat\mR$ was computed on an induced $G_\eta(d)$-set, it is not necessarily the case that frames derived from $\hat\mR$ will be minimal on the base space $\R^{d\times n}$. In the proof of the following result in~\cref{app:pf_thm:minimal_induced_by_QR}, we prove this base space minimality by leveraging~\cref{thm:minimal_frame,thm:induced_Gset_frame}.}
        \begin{restatable}{theorem}{thmminimalinducedbyQR}
        \label{thm:minimal_induced_by_QR}
            Let $\hat{\mathcal{F}}_\phi$ be a minimal frame on the \(G_\eta(d)\)-set induced by \(\phi(\mP) = \mP \mM\) computed via the generalized QR decomposition; if the columns of \(\mP\) are non-null, then $\hat{\mathcal{F}}_\phi\circ\phi$ is a minimal frame on the original domain $\mathbb{R}^{d\times n}$.
        \end{restatable}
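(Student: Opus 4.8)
The plan is to certify minimality on the base space by exhibiting $\hat{\mathcal{F}}_\phi\circ\phi$ as an instance of the minimal-frame form guaranteed by~\cref{thm:minimal_frame}. Since~\cref{thm:induced_Gset_frame} already establishes that $\hat{\mathcal{F}}_\phi\circ\phi$ is a frame on $\R^{d\times n}$, the only remaining task is to verify minimality. The bridge from the induced $G_\eta(d)$-set back to the base space will be the stabilizer identity
\[
\stab{G_\eta(d)}{\phi(\mP)} = \stab{G_\eta(d)}{\mP},
\]
which I expect to be the technical heart of the argument and is precisely where the non-null hypothesis enters.

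For this stabilizer equality, recall that $\phi(\mP) = \mP\mM$ retains a maximal linearly independent subset of the columns of $\mP$. When every column of $\mP$ is non-null, $\mM$ discards only columns that are linear combinations of the retained ones, so the retained columns span the full column space of $\mP$. Consequently, if $g \in G_\eta(d)$ fixes $\phi(\mP)$, i.e.\ $\rho(g)\phi(\mP) = \phi(\mP)$, then $\rho(g)$ acts as the identity on the span of the retained columns by linearity, hence fixes every column of $\mP$, giving $g \cdot \mP = \mP$; the reverse inclusion is immediate. I would stress that this step fails in general if $\mP$ has a null column, since $\phi$ would then drop a direction needed to span the column space.

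Next I would unwind the minimal frame on the induced set. The generalized QR decomposition gives $\phi(\mP) = \hat{\mQ}\hat{\mR}$ with $\rho(\hat{g}) = \hat{\mQ}$ for some $\hat g \in G_\eta(d)$, so the canonical form is $\hat{\mR} = c(\phi(\mP)) = \hat g^{-1}\cdot\phi(\mP)$, and~\cref{thm:minimal_frame} yields $\hat{\mathcal{F}}_\phi(\phi(\mP)) = \hat g\,\stab{G_\eta(d)}{\hat{\mR}}$. I would then define the candidate base-space canonical form $c_{\mathrm{base}}(\mP) \coloneq \hat g^{-1}\cdot\mP \in \R^{d\times n}$ and check that it satisfies~\cref{def:can_form}: representativeness $\mP \sim c_{\mathrm{base}}(\mP)$ is immediate, and uniqueness across the orbit follows by combining the $G_\eta(d)$-equivariance of $\phi$ (\cref{def:induced_Gset}), the invariance of $c$ on $\mathcal{S}_\phi$, and the stabilizer equality above, which together force any two orbit representatives of $\mP$ to be sent to the same element by $c_{\mathrm{base}}$.

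Finally, \cref{thm:minimal_frame} applied to $c_{\mathrm{base}}$ shows that $\mP \mapsto \hat g\,\stab{G_\eta(d)}{c_{\mathrm{base}}(\mP)}$ is a minimal frame on $\R^{d\times n}$. To conclude that it equals $\hat{\mathcal{F}}_\phi\circ\phi$, I would apply the conjugation identity $\stab{G_\eta(d)}{g\cdot x} = g\,\stab{G_\eta(d)}{x}\,g^{-1}$, under which both frames collapse to $\stab{G_\eta(d)}{\mP}\,\hat g$, using the stabilizer equality to replace $\stab{G_\eta(d)}{\phi(\mP)}$ by $\stab{G_\eta(d)}{\mP}$. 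The main obstacle is thus the interplay between the column-span argument, which genuinely requires non-null columns, and the well-definedness of $c_{\mathrm{base}}$: a careless choice of $\hat g$ across different orbit elements would break the uniqueness demanded by~\cref{def:can_form}, so the invariance check is the step I would handle most carefully.
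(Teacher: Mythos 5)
Your proof is correct, but it takes a genuinely different route from the paper's. The paper splits into the cases $d = d'$ and $d > d'$ (with $d' = \mathrm{rank}(\phi(\mP))$): the first is immediate from the uniqueness of the generalized QR decomposition (a singleton frame is vacuously minimal), while the second is an indirect argument that uses the explicit inverse $\hat{\mQ}^{-1} = \eta\hat{\mQ}^T\eta$ to show $\hat{\mQ}^{-1}\mP$ is independent of the non-unique columns of $\hat{\mQ}$, then invokes \cref{lem:minimal_frame} to produce an auxiliary orbit element $\mP''$ and a contradiction-style computation forcing $\mP' = \mP''$, before finally identifying the frame with a stabilizer coset and appealing to \cref{thm:minimal_frame}. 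You instead prove the single identity $\stab{G_\eta(d)}{\phi(\mP)} = \stab{G_\eta(d)}{\mP}$ by a direct span argument --- which is exactly where the non-null hypothesis enters in both proofs, since it guarantees that the retained columns span the column space of $\mP$ --- and then transport the canonical form to the base space and match the two frames as the common coset $\stab{G_\eta(d)}{\mP}\,\hat g$ via the conjugation identity of \cref{thm:stab}. Your version is more uniform (no case split) and isolates the role of the hypothesis in one clean lemma, whereas the paper's Case~II computation has the side benefit of producing the explicit formula $\hat{\mQ}^{-1}\mP = \eta\hat{\mQ}^T\eta\mP$ that is reused in the proof of \cref{thm:linalgG_op}. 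Your attention to the well-definedness of $c_{\mathrm{base}}$ across different choices of $\hat g$ is warranted, and it is correctly discharged by the stabilizer identity.
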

        % \magenta{Having established the minimality of the frame $\hat{\mathcal  F}_\phi\circ\phi$, we finally discuss practical computation of the frame averaging operator over this frame with the following result. As shown in the proof given in~\cref{app:pf_lem:minimal_frame}, in the case where $d'=d$, we can compute the minimal frame using the form suggested by~\cref{}}
        \magenta{Finally, we discuss practical computation of the frame averaging operator over $\hat{\mathcal  F}_\phi\circ\phi$. \cref{thm:minimal_frame} implies that we must next compute $\textrm{Stab}_{G_\eta(d)}(\hat{\mR})$, which is indeed the route we will take when $d'=d$. However, when $d'<d$, there are infinite $\hat{\mQ}$ satisfying $\phi(\mP)=\hat{\mQ}\hat{\mR}$, apparently making the operator intractable. We therefore employ an alternate form of the minimal frame given in~\cref{lemma:multiCanMaps} which we prove to be equivalent to the form established by~\cref{thm:minimal_frame}. This alternate form enables us to derive an efficient closed form of the operator shown in the following result, the proof of which is given in~\cref{app:pf_thm:linalG_op}.} 
        % By studying the form of $\hat{\mQ}$, we employ a simple and efficient closed form of the operator described in the following result. The derivation of $\hat g$ via $\hat\mQ$ depends on whether $d'=d$, in which case we directly use~\cref{thm:minimal_frame}, or $d'<d$, where we further derive~\cref{lemma:multiCanMaps}}
        % We present further details for the generalized QR decomposition in~\cref{sec:qr_decomposition}.
        % \orange{After the minimal frame is obtained, we can compute it in a simple and efficient way.}
        % \colorblue
        % This is proven in~\cref{app:pf_thm:linalG_op}.
        \begin{restatable}{theorem}{linalgGop}
        \label{thm:linalgG_op}
        \colororange
            Let $\hat{\mQ},\hat{\mR}$ be computed via the generalized QR decomposition such that $\phi(\mP)=\hat{\mQ}\hat{\mR}\in\R^{d\times d'}$\blue{, and assume that the columns of $\mP$ are non-null}. Then, there exists $\mQ_0\in \mathbb{R}^{d\times d}$ such that
            % \begin{enumerate}
                % \item The equality $\phi(\mP)=\mQ_0\hat{\mR}$ holds.
                % \item 
                the frame averaging operator applied to an arbitrary function $\blue{\Phi}:\mathbb{R}^{d\times n}\rightarrow \mathbb{R}^{d\times n}$ \blue{over the minimal frame $\hat{\mathcal F}_\phi\circ\phi$} is given by $$\blue{\langle\Phi\rangle_{\hat{\mathcal F}_\phi\circ\phi}(\mP)} = \mQ_0 \Phi\left(\hat{\mQ}^{-1}\mP\right),$$
                \blue{where $d'$ columns of $\mQ_0$ are obtained directly from $\hat{\mQ}$, and the remaining $d-d'$ columns are all $\mathbf{0}\in\R^d$.}
            % \end{enumerate} 
        \end{restatable}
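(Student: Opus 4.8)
The plan is to substitute the explicit description of the minimal frame into the frame averaging operator and then exploit the block structure that the generalized QR decomposition imposes. By \cref{thm:minimal_frame,thm:minimal_induced_by_QR}, the minimal frame on the base space is the left coset $\hat{\mathcal F}_\phi(\phi(\mP)) = \hat g\,\stab{G_\eta(d)}{\hat{\mR}}$, where $\rho(\hat g)=\hat{\mQ}$ and $\hat{\mR}=\hat{\mQ}^{-1}\phi(\mP)$ is the canonical form. So the first step is to write the operator from \cref{eqn:fa} over this coset, substitute $g=\hat g s$ with $s\in H\coloneqq\stab{G_\eta(d)}{\hat{\mR}}$, and pull $\hat{\mQ}$ out on the left, leaving an average of $\rho(s)\,\Phi(\rho(s)^{-1}\hat{\mQ}^{-1}\mP)$ over $H$.

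Next I would characterize $H$ concretely. Writing $\hat{\mR}=\begin{pmatrix}\hat{\mR}_1\\\mathbf 0\end{pmatrix}$ with $\hat{\mR}_1\in\R^{d'\times d'}$ upper triangular and invertible, its columns span $\mathrm{span}(\ve_1,\dots,\ve_{d'})$. Hence $s\in H$ forces $\rho(s)$ to fix each $\ve_j$ for $j\le d'$; combined with the defining constraint $\rho(s)^T\eta\,\rho(s)=\eta$, a short block computation (the off-diagonal block must vanish because the leading block of $\eta$ is invertible) shows $\rho(s)=\mathrm{blockdiag}(\mI_{d'},\mSigma)$ with $\mSigma\in G_{\eta_2}(d-d')$, where $\eta_2$ is the trailing diagonal block of $\eta$.

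The crux is then to show the averaged function is constant over $H$. Because the columns of $\mP$ are non-null, the selection $\mM$ retains a maximal linearly independent subset of all columns, so $\mathrm{colspace}(\phi(\mP))=\mathrm{colspace}(\mP)$; and since $\phi(\mP)=\hat{\mQ}_1\hat{\mR}_1$ with $\hat{\mQ}_1$ the first $d'$ columns of $\hat{\mQ}$, this column space equals $\mathrm{colspace}(\hat{\mQ}_1)$. Writing $\mP=\hat{\mQ}_1\mathbf{\Gamma}$ therefore gives $\hat{\mQ}^{-1}\mP=\begin{pmatrix}\mathbf{\Gamma}\\\mathbf 0\end{pmatrix}$, i.e.\ the trailing $d-d'$ rows vanish. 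Since $\rho(s)^{-1}=\mathrm{blockdiag}(\mI_{d'},\mSigma^{-1})$ acts as the identity on any matrix whose trailing block is zero, $\rho(s)^{-1}\hat{\mQ}^{-1}\mP=\hat{\mQ}^{-1}\mP$ for every $s\in H$. Thus $\Phi(\rho(s)^{-1}\hat{\mQ}^{-1}\mP)=\Phi(\hat{\mQ}^{-1}\mP)$ is independent of $s$ and factors out of the average.

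What remains is the matrix average $\hat{\mQ}\cdot\overline{\rho(s)}$, where $\overline{\rho(s)}=\mathrm{blockdiag}(\mI_{d'},\overline{\mSigma})$ and $\overline{\mSigma}$ is the average of the defining representation of $G_{\eta_2}(d-d')$. Since this representation contains no nonzero invariant vector, $\overline{\mSigma}=\mathbf 0$, so $\hat{\mQ}\cdot\mathrm{blockdiag}(\mI_{d'},\mathbf 0)=[\hat{\mQ}_1\mid\mathbf 0]=\mQ_0$, yielding $\langle\Phi\rangle_{\hat{\mathcal F}_\phi\circ\phi}(\mP)=\mQ_0\,\Phi(\hat{\mQ}^{-1}\mP)$ as claimed; note $\mQ_0$ and $\hat{\mQ}^{-1}\mP$ depend only on $\hat{\mQ}_1$, so the non-uniqueness of $\hat{\mQ}$ when $d'<d$ is harmless. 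I expect the main obstacle to be making the averaging over $H$ rigorous when $G_{\eta_2}(d-d')$ is non-compact (e.g.\ a Lorentz factor), where there is no finite Haar measure to normalize, as well as the degenerate determinant-constrained case with a one-dimensional residual group; this is precisely the situation the equivalent alternate form of the frame from \cref{lemma:multiCanMaps} is meant to resolve, and I would route the averaging argument through that form rather than through a naive integral over the infinite stabilizer.
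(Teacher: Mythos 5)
Your proposal follows the same overall strategy as the paper's proof: identify the frame as the set $\mathcal Q_\mP$ of all valid $\hat{\mQ}$ (equivalently, the coset $\hat g\,\stab{G_\eta(d)}{\hat{\mR}}$, which is how \cref{lemma:multiCanMaps} enters), show that $\hat{\mQ}^{-1}\mP$ is identical for every element of the frame so that $\Phi(\hat{\mQ}^{-1}\mP)$ factors out of the average, and then show that the matrix average of the frame elements equals $\mQ_0=[\hat{\mQ}^0,\mathbf 0]$. Your handling of the first two steps is the paper's argument in different clothing: the paper computes $\hat{\mQ}^{-1}=\eta\hat{\mQ}^T\eta$ and notes that the inner products of the non-unique columns with $\mathrm{colspace}(\mP)$ vanish, while you note that $\hat{\mQ}^{-1}\mP$ has vanishing trailing rows on which the block-diagonal stabilizer acts trivially --- the same fact. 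Where you genuinely diverge is the final averaging step. The paper partitions $\mathcal Q_\mP$ into two halves exchanged by the involution $[\mQ^0,\mQ^1]\mapsto[\mQ^0,-\mQ^1]$, so each $\mQ$ pairs with $2\mQ_0-\mQ$ and the integral collapses to $\mQ_0$; this needs only a measure invariant under that single sign flip. You instead invoke the representation-theoretic fact that the defining representation of the residual group $G_{\eta_2}(d-d')$ has no nonzero invariant vector and hence averages to zero. That is clean when the residual group is compact, but it presupposes a finite invariant (Haar probability) measure, which does not exist when the residual signature is mixed --- precisely the situation that arises for $\mathrm{O}(1,d-1)$ --- and routing through \cref{lemma:multiCanMaps} does not repair this, since that lemma only identifies the frame as a set and says nothing about integrating over it. The fix is to replace your ``nontrivial representation averages to zero'' step with the paper's sign-flip pairing (which is really just averaging over the central element $-\mI$ of the residual group); with that substitution your proof is correct. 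Your side remarks are consistent with the paper: the $d=d'$ case degenerates to a singleton frame, and the determinant-constrained one-dimensional residual case is excluded here and treated separately in \cref{sec:frame_averaging_linear_algebraic_group}.
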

        \colorblack

        \subsection{Efficiency compared to~\citet{puny2021frame}.}
        \citet{puny2021frame} consider \(\mathrm{O}(d)\) and \(\mathrm{SO}(d)\). Their method is equivalent to employing the induced \(G\)-set \(\phi(\mP) = \mP\mP^T\) and performing canonicalization via the eigendecomposition, which cannot be used for groups such as \(\mathrm{O}(1,d-1)\). Furthermore, for a \todo{R1, W3}\orange{full column rank} non-degenerate (\textit{i.e.}, no eigenvalues are repeated) matrix \(\mP\), the size of their frame is \(2^d\), while the generalized QR decomposition is unique for both \(\hat\mQ\) and \(\hat\mR\) by~\cref{thm:qr_unique}, producing a frame with only \(1\) element. Additionally, the size of the frame obtained from the generalized QR decomposition remains 1 in the case of a \todo{R1, W3}\orange{full column rank} degenerate \(\mP\), however, relying on the eigendecomposition of \(\mP\mP^T\) in such a setting will produce a frame of infinite size. In~\cref{sec:ortho_ppt}, we provide an eigenvalue perturbation method to solve this degenerate case for the method of~\citet{puny2021frame}. However, 
        % even by our algorithm, 
        the size of the resulting frame is still \(2^d\), which is larger than the frame produced by the MFA method.

        % To be rigorous to show our efficiency,
        % More rigorously,
        % by~\cref{thm:minimal_induced_by_QR}, $\hat{\mathcal{F}}_\phi$ on the induced $G$-set $\phi(\mP) =\mP\mM$ computed via the generalized QR decomposition is a minimal frame on the original domain $\R^{d\times n}$. By~\cref{cor:smallest_measure}, the measure of $\hat{\mathcal{F}}_\phi$ is minimal compared to other frames on the original domain, thereby demonstrating the efficiency of MFA. 
        % so our method is more efficient.

\subsection{Complex domain} 
\label{sec:complex}
The extension from the real vector space \(\mathbb{R}^{d\times n}\) to a complex vector space \(\mathbb{C}^{d\times n}\) is natural. By changing all the transpose operations \(\mQ^T\) to the conjugate transpose operations \(\mQ^*\) and inner products to conjugate inner products, the generalized QR decomposition is conducted as in the real space for the unitary group \(\mathrm{U}(d)\). For $\mathrm{SU}(d)$, we require that the magnitude of $\det(\hat{\mQ})\in \mathbb{C}$ be $1$. This constraint is satisfied through a scaling \(\det (\mQ)^{-1/d}\) and its inverse applied to $\hat{\mQ}$ and $\hat{\mR}$, respectively.
% On this basis, the generalized QR decomposition still holds and can be extended to frame averaging of \(\mathrm{U}(d)\). For \(\mathrm{SU}(d)\), to ensure the determinant is the real number \(1\), 

\section{Permutation Group}\label{sec:permG}

% \todo{We are considering only the codomain $\R$ in this section; \jacob{The presentation here is free from assumption on the codomain}}
We now consider minimal frames for two different spaces under the action of the permutation group \(\mathrm{S}_n\) beginning with the space of undirected graphs in~\cref{sec:minF_udag}. Using the results from~\cref{sec:minF_udag}, we go on to derive minimal frames for point clouds acted upon by the group formed from the direct product of $\Sn$ and \blue{$G_\eta(d)$} in~\cref{sec:pcloud} under the assumption that we already possess an \blue{$G_\eta(d)$}-invariant/equivariant backbone model.

\subsection{Minimal Frame for Undirected Graphs}\label{sec:minF_udag}

$\Sn$ acts on the adjacency matrix for an undirected graph \(\mA\in\mathrm{Sym}(n,\mathbb{R})\) by conjugate multiplication as \(g\cdot\mA\coloneq\mS^T \mA \mS\), where $\rho(g)=\mS\in \R^{n\times n}$ and \(\mathrm{Sym}(n,\mathbb{R})\) is the set of \(n \times n\) real symmetric matrices. To compute the minimal frame, we leverage ties between our framework and results from classical graph theory. 

Canonicalization of an undirected graph, also known as canonical labeling, is commonly used to determine whether 2 graphs are isomorphic, \textit{i.e., } whether the graphs are equivalent up to a node relabeling. Here, $c(\mA_1)=c(\mA_2)$ if and only if the graph with adjacency matrix $\mA_1$ is isomorphic to the graph corresponding to $\mA_2$. Additionally, computing the stabilizer for a graph with adjacency matrix $\mA_0$ corresponds to the \textit{graph automorphism problem}, which is to compute the permutations $\mS$ which result in a self-isomorphism, \textit{i.e.,}
\begin{equation}\label{eq:grAut}
    \{\mS \in\Sn\mid \mS^T\mA_0 \mS =\mA_0\},
\end{equation}
where \cref{eq:grAut} can be recognized as $\stab{\Sn}{\mA_0}$. Thus, from~\cref{thm:minimal_frame}, the minimal frame $\hat{\mathcal F}(\mA)$ can be computed by identifying the canonical form $c(\mA)$ and calculating the stabilizer $\stab\Sn{c(\mA)}$ with a graph automorphism algorithm. Specifically, we adapt the canonical graph labeling algorithm of~\citet{mckay2014practical} detailed in~\cref{sec:mckay} which computes both the canonical form and the stabilizer. 
\todo{Reviewer 3, Q13, Q14}\colororange
The algorithm directly solves the \blue{graph} automorphism problem \blue{following an} individualization-refinement paradigm~\citep{mckay2007nauty}\blue{. Although this problem is known to be NP-intermediate, in practice,} the time complexity largely depends on the number of automorphisms. For graphs with a trivial stabilizer, the time complexity can be nearly linear as the search tree becomes a list, \blue{although} for highly-symmetric graphs, the search tree will expand, \blue{giving a factorial worst-case} time complexity. We further detail the conversion from weighted graphs to unweighted graphs in~\cref{sec:employ_canonical_labeling} \blue{for applying} this algorithm \blue{to} undirected weighted graphs.
\colorblack

% , where both steps are computed during the labeling.

\subsection{Efficiency compared to~\citet{puny2021frame}.}

\citet{puny2021frame} compute the frame for an adjacency matrix \(\mA\) by sorting the rows of the Laplacian matrix perturbed by the diagonal of the summation of eigenvector outer products, denoted by an equivariant map \(S(\mA)\). In other words, they compute the frame on an \(\Sn\)-set induced by \(\phi(\mA) = S(\mA)\). By~\cref{thm:induced_Gset_frame}, the resulting frame is a (possibly non-minimal) frame on the original domain. On the other hand, as MFA directly constructs frames via the stabilizer of the canonical form on the original domain \(\mathrm{Sym}(n, \mathbb{R})\), the resulting frame is minimal by~\cref{thm:minimal_frame}. 

Furthermore, \citet{puny2021frame} achieve invariant frame averaging by sampling elements from the frame, where achieving a greater degree of symmetry requires a larger sample and therefore incurs greater cost. In contrast, MFA reduces~\cref{eqn:fa} to a single forward pass of the backbone model $\Phi$ as
\begin{equation}
\label{eq:simplified_frame_averaging}
    \langle\Phi\rangle_{\hat{\mathcal{F}}}(\mA) = \frac{1}{|\hat{\mathcal{F}}(\mA)|} \left(\sum_{g\in\hat{\mathcal{F}}(\mA)} \rho(g)\right) \Phi(c(\mA)).
\end{equation}
This is because the MFA frames for adjacency matrices are constructed directly on the original space (as opposed to on an induced $\Sn$-set), and thus, for all $g\in \hat{\mathcal F}(\mA)$, $\rho(g^{-1})\mA=c(\mA)$, a property which follows from the form of the minimal frame in~\cref{thm:minimal_frame}. In addition to this substantial improvement in efficiency, our empirical results in~\cref{tb:invariance-error} demonstrate the superior invariance error of MFA relative to alternative frame averaging methods.

\subsection{Minimal Frame for Point Clouds}\label{sec:pcloud}

\blue{We now consider the group formed by the direct product of the permutation group $\Sn$ and the general linear algebraic group $G_\eta(d)$. For $\eta=\mI_d$, we have the group $\Sn\times \Od$, where further adding a determinant constraint $\det (\rho(g)) = 1$ for all $g\in G_\eta(d)$ gives the group $\Sn\times \SOd$. Both of these groups commonly arise in tasks involving Euclidean geometries such as molecular property prediction. Alternatively, particle collision simulations involve transformations in space-time from the group $\Sn\times \mathrm{O}(1,d-1)$ acting on relativistic point clouds, where $\eta=\text{diag}(1,-1,\cdots,-1)$.} 

\blue{Generally,} for a point cloud with $n$ points in $d$-dimensional space represented as $\mP\in\R^{d\times n}$, $\Sn$ acts by right multiplication, while \blue{$G_\eta(d)$} acts by left multiplication. Given a \blue{$G_\eta(d)$}-invariant/equivariant function $\Phi:\R^{d\times n}\to\R^{d\times n}$, we aim to construct a \blue{$\Sn\times G_\eta(d)$}-invariant/equivariant function. Formally, for \blue{$g=(s,o)\in\Sn\times G_\eta(d)$}, $g$ acts on $\mP$ as $g\cdot \mP=\mO\mP\mS$, where $\rho(g)=(\mS,\mO)$. Observe that the actions of $\Sn$ and \blue{$G_\eta(d)$} on $\mP$ commute, that is, $(so)\cdot \mP=(os)\cdot \mP$. Therefore, we leverage the following result from \citet{puny2021frame}.
\begin{theorem}[\citet{puny2021frame}]
\label{thm:prodG}
    For the groups \(G, H\) whose actions commute, assume the frame \( {\mathcal{F}: \mathcal{V} \to \mathcal{P}(G) \setminus \{\emptyset\}}\) is $H$-invariant and $G$-equivariant. If ${f:\mathcal V\to\mathcal W}$ is $H$-invariant/equivariant, then $\langle f\rangle_{\mathcal F}$ is $G\times H$-invariant/equivariant.
\end{theorem}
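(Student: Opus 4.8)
The plan is to verify $G\times H$-equivariance by checking equivariance separately under the two subgroups $G\times\{e\}$ and $\{e\}\times H$ and then combining the two, exploiting that every element factors as $(g,h)=(g,e)(e,h)$ and that the two actions commute. Throughout, the invariant case is recovered by taking the relevant representation on $\mathcal W$ to be trivial, so it suffices to treat the equivariant statements.

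First I would establish $G$-equivariance of $\langle f\rangle_{\mathcal F}$, which follows from the $G$-equivariance of the frame alone and uses nothing about $f$. Starting from~\cref{eqn:fa}, I evaluate $\langle f\rangle_{\mathcal F}(g_0\cdot x)$, substitute $\mathcal F(g_0\cdot x)=g_0\mathcal F(x)$, and reindex the sum via $g=g_0 g'$ with $g'\in\mathcal F(x)$. The key simplification is $\rho_{\mathcal V}\!\left((g_0 g')^{-1}\right)(g_0\cdot x)=\rho_{\mathcal V}(g'^{-1})x$, after which $\rho_{\mathcal W}(g_0)$ factors out of every summand to give $\langle f\rangle_{\mathcal F}(g_0\cdot x)=\rho_{\mathcal W}(g_0)\langle f\rangle_{\mathcal F}(x)$.

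Next I would verify $H$-equivariance. Evaluating $\langle f\rangle_{\mathcal F}(h_0\cdot x)$, the $H$-invariance of the frame gives $\mathcal F(h_0\cdot x)=\mathcal F(x)$, so the index set of the average is unchanged. For each fixed $g\in\mathcal F(x)$, the commuting of the $G$- and $H$-actions lets me rewrite $\rho_{\mathcal V}(g^{-1})(h_0\cdot x)=h_0\cdot\bigl(\rho_{\mathcal V}(g^{-1})x\bigr)$, and the $H$-equivariance of $f$ then converts this into the $H$-representation on $\mathcal W$ applied to $f(\rho_{\mathcal V}(g^{-1})x)$. Pulling this $H$-representation out of the sum yields $H$-equivariance of $\langle f\rangle_{\mathcal F}$.

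Finally I would combine the two steps: for arbitrary $(g,h)$, write $(g,h)\cdot x=g\cdot(h\cdot x)$ and apply $G$-equivariance followed by $H$-equivariance. The main obstacle is the $H$-step, since it is the only place where all three hypotheses interact at once: it is easy to overlook that the $H$-invariance of the frame is precisely what keeps the summation index set fixed so that the commuting-action rewrite and the $H$-equivariance of $f$ can be applied termwise, and one must separately confirm that the $H$-representation on $\mathcal W$ commutes with each $\rho_{\mathcal W}(g)$ before extracting it from the average---a fact guaranteed by the direct-product structure of $G\times H$.
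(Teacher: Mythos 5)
Your proof is correct. Note that the paper itself gives no proof of~\cref{thm:prodG}; it imports the result verbatim from~\citet{puny2021frame}, and your argument---splitting $(g,h)$ into $(g,e)(e,h)$, using $G$-equivariance of the frame for the first factor and the $H$-invariance of the frame together with the commuting actions and the $H$-equivariance of $f$ for the second---is exactly the standard proof given in that cited source. Your closing remark correctly identifies the two places where care is needed (the fixed index set under the $H$-action and the commutation of the $H$- and $G$-representations on $\mathcal W$, the latter being automatic for a representation of the direct product), so there is no gap.
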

Thus, to construct a \blue{$\Sn\times G_\eta(d)$} invariant/equivariant function given $\Phi$, we must derive a frame which is \blue{$G_\eta(d)$}-invariant and $\Sn$-equivariant, which we will accomplish via the induced $G$-set approach described in~\cref{sec:inducedG}. Consider \blue{$\phi(\mP)=\mP^T\eta\mP$}, and observe that 
$$\phi(g\cdot\mP)=(\mO\mP\mS)^T\eta\mO\mP\mS=\mS^T\phi(\mP)\mS.$$
Thus, $\phi$ is \blue{$G_\eta(d)$}-invariant and $\Sn$-equivariant. 
% \textcolor{orange}{Alternatively, one can also choose the distance matrix as $\phi(\mP)$ which is also an $\Od$-invariant and $\Sn$-equivariant symmetric matrix.} 
Furthermore note that since $\phi(\mP)\in \mathrm{Sym}(n,\mathbb{R})$ and since $\Sn$ acts on the codomain of $\phi$ via conjugate multiplication, a minimal frame on the $\Sn$-set induced by $\phi$ can be derived following the methods in~\cref{sec:minF_udag}. Therefore, from~\cref{thm:induced_Gset_frame}, $\mathcal F\coloneq \mathcal{F}_\phi\circ \phi$ is a \blue{$G_\eta(d)$}-invariant, $\Sn$-equivariant frame on $\R^{d\times n}$. Thus, from~\cref{thm:prodG}, $\langle \Phi\rangle_{\mathcal F}$ is a \blue{$\Sn\times G_\eta(d)$}-invariant/equivariant function. \blue{We further prove minimality of this frame when $\text{rank}(\mP) = d$ in~\cref{thm:minProd}.}

In the case where $\Phi$ is $\Sn$-invariant/equivariant, one can alternatively apply a $\mathrm{S}_n$-invariant and \blue{$G_\eta(d)$}-equivariant frame using the $G$-set induced by $\phi(\mP) = \mP\mP^T$, which is the approach taken by~\citet{puny2021frame} \blue{for $G_\eta(d) = \Od$}. However, as stated previously \blue{for $\Od$}, degenerate eigenvalues result in a frame of infinite size, in which case the frame averaging operator is intractable. In contrast, our approach presented here instead leverages a $\mathrm S_n$-equivariant and \blue{$G_\eta(d)$}-invariant \blue{$\phi(\mP) = \mP^T\eta\mP$} and is therefore robust to degenerate eigenvalues \blue{and applicable to more groups}. It is worth noting that if there exists a non-trivial automorphism (or stabilizer) of \(\mP^T \mP\), our \blue{$\Sn\times G_\eta(d)$}-invariant \blue{frame averaging} is invariant to the action of point groups, which we detail further in~\cref{sec:point_group}.

% Furthermore, our $\Sn\times\Od$ frame is a minimal frame. We detail these in~\cref{sec:point_group}.

\section{Related Work}

    Similar to~\citet{puny2021frame}, the $\mathrm{E}(3)$-equivariant frames $\mathcal F(\mP)$ from~\citet{duval2023faenet} are computed via an eigendecomposition with $\lvert\mathcal F(\mP)\rvert=8$ assuming there are no degenerate eigenvalues. \citet{duval2023faenet} sacrifice exact equivariance for efficiency by sampling a single $g\in \mathcal F(\mP)$ per forward pass such that only one model call is required instead of eight. The MFA approach detailed in~\cref{sec:linAlgG} also only requires one model call per forward pass, however, achieves exact equivariance, and is furthermore robust to degenerate eigenvalues. 
    
\cyan{
    \citet{lim2024expressive} employ a sign-equivariant (or $\mathrm{O}(1)^d$-equivariant) network with an eigenvector from the covariance matrix \jacob{of the data} to achieve $\Od$ equivariance, reducing the frame size to one. This method necessitates explicit sign-equivariance \jacob{of the backbone architecture} and distinct eigenvalues in the covariance matrix. Similarly, \citet{ma2024laplacian} select an eigenvector from the covariance matrix under three assumptions constraining the eigenvectors. Our approach does not require these assumptions, does not necessitate sign-equivariance, and is insensitive to repeated or zero eigenvalues. 
}
    % \citet{du2022se} use the complete frame construction, which is a special case in~\cref{sec:mfa_Geta} where $G_\eta(d) = \SOd$ and $d' = d$, and $\mM$ is constructed using each (full-rank) local two-body system within a molecule. In a similar vein, if this construction is smoothed by the weighted average of each local two-body system to produce continuous results, this corresponds to \citet{pozdnyakov2024smooth}. However, the frame size by these approaches is $O(k^2)$ with $k$ denoting the mean of the node degree.
    
    \citet{kim2023learning} also take an approximately equivariant sampling-based approach to approximate the group averaging operator in~\cref{eqn:ga}. Because sampling uniformly from $g\in G$ has high variance, \citet{kim2023learning} instead sample from a learned $G$-equivariant distribution, thereby adding the requirement of a $G$-equivariant neural network to their framework. In generating elements from $\Od$, \citet{kim2023learning} employ the QR decomposition to orthogonalize a generated matrix, which by the analysis presented in~\cref{sec:genQR} may be a one-to-many mapping, potentially compromising the $G$-equivariance of the distribution.
    
    \citet{kaba2023equivariance} also employ a $G$-equivariant neural network to canonicalize the inputs of a non-equivariant backbone architecture. This can be viewed as a learned approach to canonicalization on induced $G$-sets described in~\cref{sec:inducedG} that instead utilizes a $G$-equivariant architecture for $\phi$ in place of the deterministic $\phi$ we use here. Because there may be more than one transformation that canonicalizes the input, \todo{Reviewer 3, W1}
    % \colororange e.g., building rotation equivariance on a rank-deficient point cloud where the stabilizer of it contains an infinite number of element\colorblack, 
    this approach could lead to an ill-posed objective for $\phi$ resulting from the one-to-many mapping. \orange{An example of such a case is a point cloud lying on a $(d-2)$-dimensional Euclidean subspace \blue{embedded} in a $d$-dimensional Euclidean space under the action of $\Od$, where the stabilizer has an infinite number of elements.} \blue{For this reason,} \citet{kaba2023equivariance} furthermore note that non-trivial stabilizers reduce their framework to respect a relaxed definition of equivariance. A further application based on \citet{kaba2023equivariance} is shown in \citet{mondal2024equivariant}.

    \blue{Because the methods of~\citet{kim2023learning} and~\citet{kaba2023equivariance} employ neural networks to produce group representations from the group $G$, both approaches rely on a \textit{contraction} step dependent on $G$ to ensure network outputs are valid representations. Since this contraction can be non-trivial to derive, \citet{nguyen2023learning} propose to add a loss term in place of the contraction as a soft constraint on network outputs to be valid representations. This allows for the method of~\citet{kim2023learning} to be extended to groups for which the derivation of a contraction is difficult, including $O(1,d-1)$. Nonetheless, in addition to approximate equivariance due to sampling, an additional source of equivariance error in this framework stems from invalid representations due to a non-zero loss.}
    \todo{Reviewer 2, W6}
    \colororange
    % \citet{nguyen2023learning} obtain the probabilistic frame by orbit distance minimization for both the group of $\mathrm{SO}(2)$ and $\mathrm{O}(1,3)$ based on~\citet{kim2023learning}. While the equivariance is already inherently compromised from~\citet{kim2023learning}, they only can achieve accurate equivariance when the orbit distance minimization loss is zero during the optimization. Also, there is no guarantee in~\citet{nguyen2023learning} such that the global minima must be achieved after the optimization.
    \blue{In contrast,} our method does not require optimization to achieve exact equivariance.
    % and can be directly performed during the preprocessing time.
    \colorblack    

\section{Experiments}

\newcommand{\spc}{0.6cm}

\begin{table}[t]
\caption{MSE and inference time on the \(n\)-body experiment. Baselines are \textsc{\(\mathrm{SE}(3)\)-Tr.}~\cite{fuchs2020se}, TFN~\cite{thomas2018tensor}, EGNN~\cite{satorras2021n}, SEGNN~\cite{brandstetter2021geometric}, CN-GNN~\cite{kaba2023equivariance}, and CGENN~\cite{ruhe2023clifford}. FA-GNN and MFA-GNN are GNN backbones trained with FA~\cite{puny2021frame} and MFA, respectively.}
\label{tab:nbody}
\begin{center}
\begin{small}
\begin{sc}
\begin{tabular}{lcc}
\toprule
Method & MSE & Inference Time (s)  \\
\midrule
\(\mathrm{SE}(3)\)-Tr. & 0.0244 & 0.1346 \\
TFN & 0.0244 & 0.0343\\
EGNN & 0.0070 & 0.0062 \\
SEGNN & 0.0043 & 0.0030 \\
CN-GNN & 0.0043 & 0.0025\\
CGENN & 0.0039 & 0.0045 \\
\midrule
FA-GNN & 0.0054 & 0.0041 \\
MFA-GNN & \textbf{0.0036} & \textbf{0.0023} \\
\bottomrule
\end{tabular}
\end{sc}
\end{small}
\end{center}
\vspace{-\spc}
\end{table}

\begin{table}[t]
\caption{Accuracy and AUC for the top tagging dataset. The \textsc{MinkGNN} baseline is a non-invariant model built around the message-passing Minkowski dot-product attention module proposed by~\citet{gong2022efficient}. Extended results are presented in~\cref{tb:top-tagging}.}
\label{tb:top-tagging-ablation}
\begin{center}
\begin{small}
\begin{sc}
\begin{tabular}{lcc}
\toprule
Model & Accuracy & AUC \\
\midrule
MinkGNN & \cyan{94.2136} & 98.68  \\
MFA-MinkGNN & \cyan{\textbf{94.2178}}
&
 \textbf{98.69}
 \\
\bottomrule
\end{tabular}
\end{sc}
\end{small}
\end{center}
\vspace{-\spc}
\end{table}

\begin{table}[t]
\caption{Results for IS2RE Direct All on OC20 validation split. We compare in-distribution MAE (eV), average MAE (eV) across in-domain and out-of-domain tasks, invariance errors (eV), and throughput (samples per second) with \textsc{FAENet}~\cite{duval2023faenet}. Extended results are presented in~\cref{tab:OC}.}
\label{tb:ocp-faenet}
\begin{center}
\begin{small}
\begin{sc}
\begin{tabular}{lcc}
\toprule
Model & FAENet & MFAENet \\
\midrule
ID-MAE & 0.5446 & \textbf{0.5437} \\
Average-MAE &\textbf{0.5679}  & 0.5691\\
Invariance Error &  6.199$\times$10\textsuperscript{--2} &  \textbf{8.809$\mathbf{\times}$10\textsuperscript{--6}}\\
Throughput & 3863.3 & \textbf{3919.8}\\
\bottomrule
\end{tabular}
\end{sc}
\end{small}
\end{center}
\vspace{-\spc}
\end{table}

\begin{table}[t]
\caption{Results for WL datasets. \textsc{GRAPH8c} and EXP are counts of pairs of graphs that are not separated by randomly initialized models, while \textsc{EXP-classify} is the binary classification error for trained models. Baselines are GCN~\cite{kipf2016semi}, GIN~\cite{xu2018powerful}, and an MLP/GIN trained with FA~\cite{puny2021frame} and MFA, respectively.}
\label{tb:graph-separation}
\begin{center}
\begin{small}
\begin{sc}
\begin{tabular}{lccc}
\toprule
Model & GRAPH8c & EXP & EXP-classify (\%) \\
\midrule
GCN & 4755 & 600 & 50 \\
GIN & 386 &  600 & 50 \\
FA-MLP & \textbf{0} &  \textbf{0}  & \textbf{100}\\
FA-GIN & \textbf{0} &  \textbf{0}  & \textbf{100}\\
MFA-MLP & \textbf{0} &  \textbf{0}  & \textbf{100}\\
MFA-GIN & \textbf{0} &\textbf{0}  & \textbf{100} \\
\bottomrule
\end{tabular}
\end{sc}
\end{small}
\end{center}
\vspace{-\spc}
\end{table}

\begin{table}[t]
\caption{Invariance error and inference time on GRAPH8C. \textsc{FA-MLP}-$k$ denotes an MLP averaged over $k$ group elements randomly sampled from the $S_n$-invariant frame of~\citet{puny2021frame}.}
\label{tb:invariance-error}
\begin{center}
\begin{small}
\begin{sc}
\begin{tabular}{lcc}
\toprule
Model & \makecell{Invariance\\Error ($\times 10^{-2}$)} & \makecell{Inference\\Time (ms)} \\
\midrule
MLP & 8.60 & 0.27 \\
\midrule
FA-MLP-1 & 1.28 & 2.20 \\
FA-MLP-4 & 0.68 & 2.54 \\
FA-MLP-16 & 0.34 & 3.70 \\
FA-MLP-64 & 0.17 & 8.36 \\
FA-MLP-256 & 0.09 & 28.09 \\
MFA-MLP & \textbf{0.00} &  \textbf{1.10} \\
\bottomrule
\end{tabular}
\end{sc}
\end{small}
\end{center}
\vspace{-\spc}
\end{table}

\begin{table}[t]
\caption{MSE and invariance error for the convex hull experiment. Baselines are PointNet~\cite{qi2017pointnet} and CGENN~\cite{ruhe2023clifford}.}
\label{tb:convex-hull}
\begin{center}
\begin{small}
\begin{sc}
\begin{tabular}{lcc}
\toprule
Model & MSE & Invariance Error \\
\midrule
PointNet & 4.1475 & 8$\times$10\textsuperscript{--3} \\
CGENN & 3.5831 & 5$\times$10\textsuperscript{--2} \\
MFA-MLP & \textbf{1.5291} & \textbf{2$\mathbf\times$10\textsuperscript{--7}} \\
\bottomrule
\end{tabular}
\end{sc}
\end{small}
\end{center}
\vspace{-\spc}
\end{table}

% \note{Double check that we have included OC and top tagging
% }
Our study evaluates the effectiveness of MFA across a variety of tasks spanning diverse symmetry groups and multiple backbone architectures. The evaluation begins with the computation of the equivariance error for many groups in~\cref{sec:eq_test}. We go on to consider an $\mathrm{E}(3)$-equivariant \(n\)-body problem in~\cref{sec:nbody}, an $\mathrm{O}(1,3)$-invariant top tagging problem in~\cref{sec:tag}, a $\mathrm{SO}(3)$-invariant relaxed energy prediction problem in~\cref{sec:oc}, a $\Sn$-invariant Weisfeiler-Lehman test in~\cref{sec:WL}, and a $\Sn\times \mathrm{O}(5)$-invariant convex hull problem in~\cref{sec:hull}. Training details and model settings are presented in~\cref{sec:experimental_details}. 

\subsection{Equivariance Test on Synthetic Data}\label{sec:eq_test}
% \note{
% To-do: verify accuracy
% }
We present $G$-equivariance errors on synthetic point cloud data for a variety of groups in~\cref{sec:relative_error}. The error is computed using randomly initialized models $\Phi$ as
% \begin{equation}
% \label{eqn:relative_error}
%     \mathbb E_{g,\vx}\left[\frac{\lVert\Phi(g\cdot\vx )-g\cdot\Phi(\vx)\rVert_1}{\lVert \Phi(\vx)\rVert_1}\right],
% \end{equation}
\begin{equation}
\label{eqn:relative_error}
    \mathbb E_{g,\vx}\lVert\Phi(g\cdot\vx )-g\cdot\Phi(\vx)\rVert_1,
\end{equation}
where $g$ is randomly sampled i.i.d.\ from $G$. The groups we consider are \(\mathrm{O}(d), \mathrm{SO}(d),\mathrm{U}(d),\mathrm{SU}(d), \mathrm{O}(1, d-1), \mathrm{SO}(1, d-1), \mathrm{E}(d),\mathrm{SE}(d), \mathrm{GL}(d, \mathbb{R})\) and $\mathrm{SL}(d,\mathbb{R})$. The analysis incorporates six different non-equivariant backbone architectures, including two variants of MLPs and two variants of GNNs. Where possible, we compare MFA to the method of \citet{puny2021frame} (FA), as well as to stochastic frame averaging (SFA)~\cite{duval2023faenet}. However, we note that for both of these methods, the groups for which frames have been derived are a strict subset of those we have derived here, and thus, a complete comparison across all groups, such as $\mathrm{O}(1,d-1)$, is not possible. We additionally demonstrate the robustness to degenerate eigenvalues of our \(\mathrm{O}(d)\)-equivariant frames. Lastly, we test the equivariance error for point groups with respect to \(\mathrm{S}_n\times \mathrm{O}(d)\) and \(\mathrm{S}_n\times \mathrm{O}(1,d-1)\). We include MLPs applied along the node dimension of the point cloud which, without frame averaging, are not permutation equivariant. In all settings, the equivariance error for MFA is 0 up to negligible numerical errors, whereas sampling-based FA approaches consistently introduce non-negligible equivariance errors.

% In this section, we apply computation of relative equivariance error to synthetic point cloud data for many groups, including \(\mathrm{O}(d), \mathrm{SO}(d),\mathrm{U}(d),\mathrm{SU}(d), \mathrm{O}(1, d-1), \mathrm{SO}(1, d-1), \mathrm{E}(d),\mathrm{SE}(d), \mathrm{GL}(d, \mathbb{R}), \mathrm{SL}(d,\mathbb{R})\) and select degenerate eigenvalue cases for testing equivariance with respect to \(\mathrm{O}(d)\). The analysis incorporates six nonlinear functions, including two variants of MLPs and two variants of GNN, respectively. All these models are neither equivariant nor invariant to all the groups presented above. We further test the relative equivariance error for point group cases with respect to \(\mathrm{S}_n\) and \(\mathrm{S}_n\times \mathrm{O}(d)\). The test includes the other two variants of MLPs where they change both the node dimension and the feature dimension. In addition, we compare our minimal frame averaging method (denoted as MFA) to the original frame averaging method (denoted as FA) as well as the stochastic frame averaging method (denoted as SFA) for \(\mathrm{O}(d), \mathrm{SO}(d), \mathrm{E}(d), \mathrm{SE}(d)\) and \(\mathrm{S}_n\), as other groups, e.g., \(\mathrm{O}(1,d-1)\), are not studied in original frame averaging method. The results are shown in

\subsection{$\mathrm{E}(3)$: \(n\)-Body Problem}\label{sec:nbody}

In the \(\mathrm{E}(3)\)-equivariant $n$-body problem from~\citet{kipf2018neural,satorras2021n}, the prediction target is the positions of $n=5$ charged particles after a predetermined time interval. Each particle is defined by its initial position and velocity in \(\mathbb{R}^3\). Additionally, each pair of particles is associated with a scalar encoded as an edge feature representing their charge difference which determines whether the particles are attracted or repelled from one another. We adopt the \textsc{FA-GNN} backbone  from~\citet{puny2021frame} and train the model using FA and MFA with identical training configurations. As shown in~\cref{tab:nbody}, \textsc{MFA-GNN} achieves the best performance both in terms of MSE and inference time.

\subsection{$\mathrm{O}(1,3)$: Top Tagging}\label{sec:tag}
The task for the top tagging data~\cite{kasieczka_2019_2603256} is to classify hadronically decaying top quarks in particle collision simulations~\cite{kasieczka2019machine}. The \textit{top quark} is the heaviest-known elementary particle, however, due to its short lifespan, it is only feasible to study its decay as it hadronizes into a jet of smaller particles~\cite{ATL-PHYS-PUB-2022-039}. The resulting jets are difficult to distinguish from those stemming from light quarks and gluons, thereby giving rise to a binary classification task conditioned on a 4-vector of data for each of the $n$ constituent particles in the jet, where $n$ is as large as 200. As this task is invariant to transformations in space-time by elements of the Lorentz group $\mathrm{O}(1,d-1)$, it has led to the development of specialized $\mathrm{O}(1,d-1)$-invariant architectures
% such as LorentzNet
~\cite{gong2022efficient}. 
% As can be seen in~\cref{tb:top-tagging}, MFA achieves competitive performance with LorentzNet and CGENN using a backbone architecture similar to LorentzNet but with the $\mathrm{O}(1,d-1)$-equivariance removed. \jacob{We additionally compare to an ablated version of this non-equivariant LorentzNet, referred to as \textsc{NameNet}, and find that MFA substantially improves its performance.}
We adopt a non-$\mathrm{O}(1,d-1)$ invariant backbone referred to as \textsc{MinkGNN}. \textsc{MinkGNN} is built around the powerful message-passing Minkowski dot-product attention module proposed by~\citet{gong2022efficient} in designing their $\mathrm{O}(1,d-1)$-invariant architecture, however, \textsc{MinkGNN} additionally includes non-linearities in Minkowski space, thereby breaking the $\mathrm{O}(1,d-1)$-invariance. As shown in~\cref{tb:top-tagging-ablation}, returning invariance with \textsc{MFA-MinkGNN} improves accuracy, although due to the large training set of over 1 million jets, the benefits of our symmetry prior are limited, as \textsc{MinkGNN} is sufficiently expressive to extract symmetries during training.

\subsection{$\mathrm{SE(3)}$: Open Catalyst Project}\label{sec:oc}

To identify cost-effective electrocatalysts for energy storage, deep models have emerged as an alternative to costly quantum mechanical-based simulations. We consider the task of predicting the relaxed energy of an adsorbate interacting with catalyst conditioned on the initial atomic structure from the Open Catalyst (OC20) dataset~\cite{ocp_dataset}. As the energy is invariant to $\mathrm{SE}(3)$ transformations of the atomic structures, we evaluate the invariance error of randomly initialized models, which is defined analogously to the equivariance error from~\cref{sec:eq_test} as
\begin{equation}
\label{eqn:inv_error}
    \mathbb E_{g,\vx}\lVert\Phi(g\cdot\vx )-\Phi(\vx)\rVert_1,
\end{equation}
with $g$ again randomly sampled i.i.d. from $G$. As shown in~\cref{tb:ocp-faenet}, the invariance error for MFA is substantially lower than that of \textsc{FAENet}~\cite{duval2023faenet}. However, \textsc{FAENet} achieves a lower out-of-domain MAE. As the OC20 data has nearly 500K training samples, the data volume may be sufficient such that exact equivariance may not be vital. Nevertheless, MFA remains competitive for out-of-domain prediction and has a superior in-domain MAE. 

\subsection{$\mathrm{S}_n$: Graph Separation}\label{sec:WL}

The $\Sn$-invariant Weisfeiler-Lehman (WL) datasets considered by~\citet{puny2021frame} tasks models with separating and classifying graphs. The GRAPH8c data consists of non-isomorphic, connected 8-node graphs, while the graphs in EXP are distinguishable by the 3-WL test but not by the 2-WL test. As can be seen in~\cref{tb:graph-separation}, while the backbone models result in many failed tests, incorporating frame averaging leads to perfect performance across the board. We furthermore examine the invariance error and inference time on GRAPH8C in~\cref{tb:invariance-error}. We find that the sampling-based approach taken for the $\Sn$ group by~\citet{puny2021frame} requires a time-consuming large sample to achieve a low invariance error, whereas MFA achieves a perfect invariance error while maintaining efficiency.

\subsection{$\mathrm{S}_n \times \mathrm{O}(5)$: Convex Hull}\label{sec:hull}

Given a set of points in $d$-dimensional space, the \textit{convex hull} generated by this set is the convex set of minimum volume that contains all points. The convex hull dataset from~\citet{ruhe2023clifford} tasks models with computing the volume of the convex hull generated by sets of 5 dimensional points. This volume is invariant to both permutations and rotations of the points, and thus, we use the methods described in~\cref{sec:pcloud} to extend $S_n$-invariance to an $O(5)$-invariant MLP. As shown in~\cref{tb:convex-hull}, MFA achieves the best MSE as well as the best invariance error.

% As the importance of continuities is newly stated in~\citet{dym2024equivariant}, we look forward to using their proposed weighted frame averaging to counter this issue.

\colororange
\section{Limitations}
\todo{Reviewer 2, W4}
% As our proposed minimal frame is based on the stabilizer of the canonical form, the minimal frame size still scales as the size of the stabilizer of the input, which is also an inherent limitation of the frame averaging method as shown in Theorem 3 in~\citet{puny2021frame}. 
\blue{As shown by~\cref{thm:minimal_frame} and Theorem 3 of~\citet{puny2021frame}, the size of an equivariant frame is lower-bounded by the size of the stabilizer, which can be large for highly symmetric objects such as fully-connected graphs. A remaining challenge for frame averaging methods in general is therefore how to tractably compute the operator for large frames \orange{without compromising exact equivariance.}}
% The challenge is how to create a smaller frame size without compromising the accuracy of the equivariance. We leave this as a future potential direction.
\todo{Reviewer 3, W2}
Furthermore, our proposed \blue{canonicalization} algorithm \blue{for $G_\eta(d)$-equivariant frames} may suffer from discontinuities, \blue{as determining the null and linearly dependent vectors in $\mP$}
% of the canonicalization as the Gram-Schmidt procedure 
is a discontinuous procedure. \blue{Recent work by~\citet{dym2024equivariant} highlights such discontinuities as a limiting factor for frame averaging methods. \citet{dym2024equivariant} prove that continuous canonicalizations for $\Sn,\Od$, and $\SOd$ do not exist and therefore propose the use of weighted frames with weak equivariance as a more robust alternative. }

\colorblack
    
\section{Conclusion}

In this work, we have introduced the MFA framework for achieving exact equivariance with frame averaging at a level of efficiency previously only achieved by approximately-equivariant approaches. The generality in our theoretical foundations have enabled us to extend MFA beyond the groups previously considered in the frame averaging literature. We have empirically demonstrated the utility of this approach on a diverse set of tasks and symmetries. While we have primarily focused on unstructured data, our general results provide a starting point for extending efficient frame averaging to architectures designed for regular grids. 

\section*{Acknowledgements}
We gratefully acknowledge the insightful discussions with Derek Lim and Hannah Lawrence. This work was supported in part by National Science Foundation grant IIS-2006861 and
National Institutes of Health grant U01AG070112.

\section*{Impact Statement}
This paper presents work whose goal is to advance the field of Machine Learning. There are many potential societal consequences of our work, none which we feel must be specifically highlighted here.

\bibliographystyle{icml2024}
\bibliography{frame}

\begin{thebibliography}{90}
\providecommand{\natexlab}[1]{#1}
\providecommand{\url}[1]{\texttt{#1}}
\expandafter\ifx\csname urlstyle\endcsname\relax
  \providecommand{\doi}[1]{doi: #1}\else
  \providecommand{\doi}{doi: \begingroup \urlstyle{rm}\Url}\fi

\bibitem[ATL(2022)]{ATL-PHYS-PUB-2022-039}
{Constituent-Based Top-Quark Tagging with the ATLAS Detector}.
\newblock Technical report, CERN, Geneva, 2022.
\newblock URL \url{https://cds.cern.ch/record/2825328}.

\bibitem[Anderson et~al.(2019)Anderson, Hy, and Kondor]{anderson2019cormorant}
Anderson, B., Hy, T.~S., and Kondor, R.
\newblock Cormorant: Covariant molecular neural networks.
\newblock \emph{Advances in neural information processing systems}, 32, 2019.

\bibitem[Atzmon et~al.(2022)Atzmon, Nagano, Fidler, Khamis, and Lipman]{atzmon2022frame}
Atzmon, M., Nagano, K., Fidler, S., Khamis, S., and Lipman, Y.
\newblock Frame averaging for equivariant shape space learning.
\newblock In \emph{Proceedings of the IEEE/CVF Conference on Computer Vision and Pattern Recognition}, pp.\  631--641, 2022.

\bibitem[Ba et~al.(2016)Ba, Kiros, and Hinton]{ba2016layer}
Ba, J.~L., Kiros, J.~R., and Hinton, G.~E.
\newblock Layer normalization.
\newblock \emph{arXiv preprint arXiv:1607.06450}, 2016.

\bibitem[Batatia et~al.(2022)Batatia, Kovacs, Simm, Ortner, and Csanyi]{batatia2022mace}
Batatia, I., Kovacs, D.~P., Simm, G.~N., Ortner, C., and Csanyi, G.
\newblock Mace: Higher order equivariant message passing neural networks for fast and accurate force fields.
\newblock In \emph{Advances in Neural Information Processing Systems}, 2022.

\bibitem[Batzner et~al.(2022)Batzner, Musaelian, Sun, Geiger, Mailoa, Kornbluth, Molinari, Smidt, and Kozinsky]{batzner20223}
Batzner, S., Musaelian, A., Sun, L., Geiger, M., Mailoa, J.~P., Kornbluth, M., Molinari, N., Smidt, T.~E., and Kozinsky, B.
\newblock E (3)-equivariant graph neural networks for data-efficient and accurate interatomic potentials.
\newblock \emph{Nature communications}, 13\penalty0 (1):\penalty0 2453, 2022.

\bibitem[Bogatskiy et~al.(2020)Bogatskiy, Anderson, Offermann, Roussi, Miller, and Kondor]{bogatskiy2020lorentz}
Bogatskiy, A., Anderson, B., Offermann, J., Roussi, M., Miller, D., and Kondor, R.
\newblock Lorentz group equivariant neural network for particle physics.
\newblock In \emph{International Conference on Machine Learning}, pp.\  992--1002. PMLR, 2020.

\bibitem[Brandstetter et~al.(2021)Brandstetter, Hesselink, van~der Pol, Bekkers, and Welling]{brandstetter2021geometric}
Brandstetter, J., Hesselink, R., van~der Pol, E., Bekkers, E.~J., and Welling, M.
\newblock Geometric and physical quantities improve e (3) equivariant message passing.
\newblock In \emph{International Conference on Learning Representations}, 2021.

\bibitem[Bronstein et~al.(2021)Bronstein, Bruna, Cohen, and Veli{\v{c}}kovi{\'c}]{bronstein2021geometric}
Bronstein, M.~M., Bruna, J., Cohen, T., and Veli{\v{c}}kovi{\'c}, P.
\newblock Geometric deep learning: Grids, groups, graphs, geodesics, and gauges.
\newblock \emph{arXiv preprint arXiv:2104.13478}, 2021.

\bibitem[Chanussot et~al.(2021)Chanussot, Das, Goyal, Lavril, Shuaibi, Riviere, Tran, Heras-Domingo, Ho, Hu, Palizhati, Sriram, Wood, Yoon, Parikh, Zitnick, and Ulissi]{ocp_dataset}
Chanussot, L., Das, A., Goyal, S., Lavril, T., Shuaibi, M., Riviere, M., Tran, K., Heras-Domingo, J., Ho, C., Hu, W., Palizhati, A., Sriram, A., Wood, B., Yoon, J., Parikh, D., Zitnick, C.~L., and Ulissi, Z.
\newblock Open catalyst 2020 (oc20) dataset and community challenges.
\newblock \emph{ACS Catalysis}, 2021.
\newblock \doi{10.1021/acscatal.0c04525}.

\bibitem[Chen et~al.(2022)Chen, Hendry, Weinberg, and Feiguin]{chen2022systematic}
Chen, H., Hendry, D.~G., Weinberg, P.~E., and Feiguin, A.
\newblock Systematic improvement of neural network quantum states using lanczos.
\newblock In Oh, A.~H., Agarwal, A., Belgrave, D., and Cho, K. (eds.), \emph{Advances in Neural Information Processing Systems}, 2022.
\newblock URL \url{https://openreview.net/forum?id=qZUHvvtbzy}.

\bibitem[Cohen \& Welling(2016{\natexlab{a}})Cohen and Welling]{cohen2016group}
Cohen, T. and Welling, M.
\newblock Group equivariant convolutional networks.
\newblock In \emph{International conference on machine learning}, pp.\  2990--2999. PMLR, 2016{\natexlab{a}}.

\bibitem[Cohen \& Welling(2016{\natexlab{b}})Cohen and Welling]{cohen2016steerable}
Cohen, T.~S. and Welling, M.
\newblock Steerable cnns.
\newblock In \emph{International Conference on Learning Representations}, 2016{\natexlab{b}}.

\bibitem[Du et~al.(2022)Du, Zhang, Du, Meng, Chen, Zheng, Shao, and Liu]{du2022se}
Du, W., Zhang, H., Du, Y., Meng, Q., Chen, W., Zheng, N., Shao, B., and Liu, T.-Y.
\newblock Se (3) equivariant graph neural networks with complete local frames.
\newblock In \emph{International Conference on Machine Learning}, pp.\  5583--5608. PMLR, 2022.

\bibitem[Duval et~al.(2023{\natexlab{a}})Duval, Mathis, Joshi, Schmidt, Miret, Malliaros, Cohen, Li{\`o}, Bengio, and Bronstein]{duval2023hitchhiker}
Duval, A., Mathis, S.~V., Joshi, C.~K., Schmidt, V., Miret, S., Malliaros, F.~D., Cohen, T., Li{\`o}, P., Bengio, Y., and Bronstein, M.
\newblock A hitchhiker's guide to geometric gnns for 3d atomic systems.
\newblock \emph{arXiv preprint arXiv:2312.07511}, 2023{\natexlab{a}}.

\bibitem[Duval et~al.(2023{\natexlab{b}})Duval, Schmidt, Hernandez-Garcia, Miret, Malliaros, Bengio, and Rolnick]{duval2023faenet}
Duval, A.~A., Schmidt, V., Hernandez-Garcia, A., Miret, S., Malliaros, F.~D., Bengio, Y., and Rolnick, D.
\newblock Faenet: Frame averaging equivariant gnn for materials modeling.
\newblock In \emph{International Conference on Machine Learning}, pp.\  9013--9033. PMLR, 2023{\natexlab{b}}.

\bibitem[Dym \& Gortler(2024)Dym and Gortler]{dym2024low}
Dym, N. and Gortler, S.~J.
\newblock Low-dimensional invariant embeddings for universal geometric learning.
\newblock \emph{Foundations of Computational Mathematics}, pp.\  1--41, 2024.

\bibitem[Dym et~al.(2024)Dym, Lawrence, and Siegel]{dym2024equivariant}
Dym, N., Lawrence, H., and Siegel, J.~W.
\newblock Equivariant frames and the impossibility of continuous canonicalization.
\newblock \emph{arXiv preprint arXiv:2402.16077}, 2024.

\bibitem[Esteves et~al.(2020)Esteves, Makadia, and Daniilidis]{esteves2020spin}
Esteves, C., Makadia, A., and Daniilidis, K.
\newblock Spin-weighted spherical cnns.
\newblock \emph{Advances in Neural Information Processing Systems}, 33:\penalty0 8614--8625, 2020.

\bibitem[Fu et~al.(2023)Fu, Yan, Wang, Au, McThrow, Komikado, Maruhashi, Uchino, Qian, and Ji]{fu2023latent}
Fu, C., Yan, K., Wang, L., Au, W.~Y., McThrow, M., Komikado, T., Maruhashi, K., Uchino, K., Qian, X., and Ji, S.
\newblock A latent diffusion model for protein structure generation.
\newblock In \emph{The Second Learning on Graphs Conference}, 2023.

\bibitem[Fuchs et~al.(2020)Fuchs, Worrall, Fischer, and Welling]{fuchs2020se}
Fuchs, F., Worrall, D., Fischer, V., and Welling, M.
\newblock Se (3)-transformers: 3d roto-translation equivariant attention networks.
\newblock \emph{Advances in neural information processing systems}, 33:\penalty0 1970--1981, 2020.

\bibitem[Gasteiger et~al.(2019)Gasteiger, Gro{\ss}, and G{\"u}nnemann]{gasteiger2019directional}
Gasteiger, J., Gro{\ss}, J., and G{\"u}nnemann, S.
\newblock Directional message passing for molecular graphs.
\newblock In \emph{International Conference on Learning Representations}, 2019.

\bibitem[Gasteiger et~al.(2020)Gasteiger, Giri, Margraf, and G{\"u}nnemann]{gasteiger2020fast}
Gasteiger, J., Giri, S., Margraf, J.~T., and G{\"u}nnemann, S.
\newblock Fast and uncertainty-aware directional message passing for non-equilibrium molecules.
\newblock \emph{arXiv preprint arXiv:2011.14115}, 2020.

\bibitem[Gasteiger et~al.(2021)Gasteiger, Becker, and G{\"u}nnemann]{gasteiger2021gemnet}
Gasteiger, J., Becker, F., and G{\"u}nnemann, S.
\newblock Gemnet: Universal directional graph neural networks for molecules.
\newblock \emph{Advances in Neural Information Processing Systems}, 34:\penalty0 6790--6802, 2021.

\bibitem[Godwin et~al.(2022)Godwin, Schaarschmidt, Gaunt, Sanchez-Gonzalez, Rubanova, Veli{\v{c}}kovi{\'c}, Kirkpatrick, and Battaglia]{godwin2022simple}
Godwin, J., Schaarschmidt, M., Gaunt, A.~L., Sanchez-Gonzalez, A., Rubanova, Y., Veli{\v{c}}kovi{\'c}, P., Kirkpatrick, J., and Battaglia, P.
\newblock Simple {GNN} regularisation for 3d molecular property prediction and beyond.
\newblock In \emph{International Conference on Learning Representations}, 2022.
\newblock URL \url{https://openreview.net/forum?id=1wVvweK3oIb}.

\bibitem[Gong et~al.(2022)Gong, Meng, Zhang, Qu, Li, Qian, Du, Ma, and Liu]{gong2022efficient}
Gong, S., Meng, Q., Zhang, J., Qu, H., Li, C., Qian, S., Du, W., Ma, Z.-M., and Liu, T.-Y.
\newblock An efficient lorentz equivariant graph neural network for jet tagging.
\newblock \emph{Journal of High Energy Physics}, 2022\penalty0 (7):\penalty0 1--22, 2022.

\bibitem[Gui et~al.(2022)Gui, Li, Wang, and Ji]{gui2022good}
Gui, S., Li, X., Wang, L., and Ji, S.
\newblock Good: A graph out-of-distribution benchmark.
\newblock \emph{Advances in Neural Information Processing Systems}, 35:\penalty0 2059--2073, 2022.

\bibitem[Gui et~al.(2024)Gui, Liu, Li, Luo, and Ji]{gui2024joint}
Gui, S., Liu, M., Li, X., Luo, Y., and Ji, S.
\newblock Joint learning of label and environment causal independence for graph out-of-distribution generalization.
\newblock \emph{Advances in Neural Information Processing Systems}, 36, 2024.

\bibitem[Han et~al.(2022)Han, Rong, Xu, and Huang]{han2022geometrically}
Han, J., Rong, Y., Xu, T., and Huang, W.
\newblock Geometrically equivariant graph neural networks: A survey.
\newblock \emph{arXiv preprint arXiv:2202.07230}, 2022.

\bibitem[Helwig et~al.(2023)Helwig, Zhang, Fu, Kurtin, Wojtowytsch, and Ji]{helwig2023group}
Helwig, J., Zhang, X., Fu, C., Kurtin, J., Wojtowytsch, S., and Ji, S.
\newblock Group equivariant {Fourier} neural operators for partial differential equations.
\newblock In \emph{Proceedings of the 40th International Conference on Machine Learning}, 2023.

\bibitem[Hinton et~al.(2012)Hinton, Srivastava, Krizhevsky, Sutskever, and Salakhutdinov]{hinton2012improving}
Hinton, G.~E., Srivastava, N., Krizhevsky, A., Sutskever, I., and Salakhutdinov, R.~R.
\newblock Improving neural networks by preventing co-adaptation of feature detectors.
\newblock \emph{arXiv preprint arXiv:1207.0580}, 2012.

\bibitem[Hoogeboom et~al.(2022)Hoogeboom, Satorras, Vignac, and Welling]{hoogeboom2022equivariant}
Hoogeboom, E., Satorras, V.~G., Vignac, C., and Welling, M.
\newblock Equivariant diffusion for molecule generation in 3d.
\newblock In \emph{International conference on machine learning}, pp.\  8867--8887. PMLR, 2022.

\bibitem[Horie et~al.(2021)Horie, Morita, Hishinuma, Ihara, and Mitsume]{horie2021isometric}
Horie, M., Morita, N., Hishinuma, T., Ihara, Y., and Mitsume, N.
\newblock Isometric transformation invariant and equivariant graph convolutional networks.
\newblock In \emph{International Conference on Learning Representations}, 2021.
\newblock URL \url{https://openreview.net/forum?id=FX0vR39SJ5q}.

\bibitem[Horn \& Johnson(2012)Horn and Johnson]{horn2012matrix}
Horn, R.~A. and Johnson, C.~R.
\newblock \emph{Matrix analysis}.
\newblock Cambridge university press, 2012.

\bibitem[Huang et~al.(2022)Huang, Han, Rong, Xu, Sun, and Huang]{huang2022equivariant}
Huang, W., Han, J., Rong, Y., Xu, T., Sun, F., and Huang, J.
\newblock Equivariant graph mechanics networks with constraints.
\newblock In \emph{International Conference on Learning Representations}, 2022.
\newblock URL \url{https://openreview.net/forum?id=SHbhHHfePhP}.

\bibitem[Jing et~al.(2021)Jing, Eismann, Suriana, Townshend, and Dror]{jing2021learning}
Jing, B., Eismann, S., Suriana, P., Townshend, R. J.~L., and Dror, R.
\newblock Learning from protein structure with geometric vector perceptrons.
\newblock In \emph{International Conference on Learning Representations}, 2021.
\newblock URL \url{https://openreview.net/forum?id=1YLJDvSx6J4}.

\bibitem[Kaba et~al.(2023)Kaba, Mondal, Zhang, Bengio, and Ravanbakhsh]{kaba2023equivariance}
Kaba, S.-O., Mondal, A.~K., Zhang, Y., Bengio, Y., and Ravanbakhsh, S.
\newblock Equivariance with learned canonicalization functions.
\newblock In \emph{International Conference on Machine Learning}, pp.\  15546--15566. PMLR, 2023.

\bibitem[Kasieczka et~al.(2019{\natexlab{a}})Kasieczka, Plehn, Butter, Cranmer, Debnath, Dillon, Fairbairn, Faroughy, Fedorko, Gay, et~al.]{kasieczka2019machine}
Kasieczka, G., Plehn, T., Butter, A., Cranmer, K., Debnath, D., Dillon, B.~M., Fairbairn, M., Faroughy, D.~A., Fedorko, W., Gay, C., et~al.
\newblock The machine learning landscape of top taggers.
\newblock \emph{SciPost Physics}, 7\penalty0 (1):\penalty0 014, 2019{\natexlab{a}}.

\bibitem[Kasieczka et~al.(2019{\natexlab{b}})Kasieczka, Plehn, Thompson, and Russel]{kasieczka_2019_2603256}
Kasieczka, G., Plehn, T., Thompson, J., and Russel, M.
\newblock Top quark tagging reference dataset, March 2019{\natexlab{b}}.
\newblock URL \url{https://doi.org/10.5281/zenodo.2603256}.

\bibitem[Kiani et~al.(2024)Kiani, Le, Lawrence, Jegelka, and Weber]{kiani2024hardness}
Kiani, B.~T., Le, T., Lawrence, H., Jegelka, S., and Weber, M.
\newblock On the hardness of learning under symmetries.
\newblock \emph{arXiv preprint arXiv:2401.01869}, 2024.

\bibitem[Kim et~al.(2023)Kim, Nguyen, Suleymanzade, An, and Hong]{kim2023learning}
Kim, J., Nguyen, D.~T., Suleymanzade, A., An, H., and Hong, S.
\newblock Learning probabilistic symmetrization for architecture agnostic equivariance.
\newblock In \emph{Thirty-seventh Conference on Neural Information Processing Systems}, 2023.
\newblock URL \url{https://openreview.net/forum?id=phnN1eu5AX}.

\bibitem[Kingma \& Ba(2014)Kingma and Ba]{kingma2014adam}
Kingma, D.~P. and Ba, J.
\newblock Adam: A method for stochastic optimization.
\newblock \emph{arXiv preprint arXiv:1412.6980}, 2014.

\bibitem[Kipf et~al.(2018)Kipf, Fetaya, Wang, Welling, and Zemel]{kipf2018neural}
Kipf, T., Fetaya, E., Wang, K.-C., Welling, M., and Zemel, R.
\newblock Neural relational inference for interacting systems.
\newblock In \emph{International conference on machine learning}, pp.\  2688--2697. PMLR, 2018.

\bibitem[Kipf \& Welling(2016)Kipf and Welling]{kipf2016semi}
Kipf, T.~N. and Welling, M.
\newblock Semi-supervised classification with graph convolutional networks.
\newblock \emph{arXiv preprint arXiv:1609.02907}, 2016.

\bibitem[K{\"o}hler et~al.(2020)K{\"o}hler, Klein, and No{\'e}]{kohler2020equivariant}
K{\"o}hler, J., Klein, L., and No{\'e}, F.
\newblock Equivariant flows: exact likelihood generative learning for symmetric densities.
\newblock In \emph{International conference on machine learning}, pp.\  5361--5370. PMLR, 2020.

\bibitem[Kondor \& Trivedi(2018)Kondor and Trivedi]{kondor2018generalization}
Kondor, R. and Trivedi, S.
\newblock On the generalization of equivariance and convolution in neural networks to the action of compact groups.
\newblock In \emph{International conference on machine learning}, pp.\  2747--2755. PMLR, 2018.

\bibitem[Lawrence \& Harris(2023)Lawrence and Harris]{lawrence2023learning}
Lawrence, H. and Harris, M.~T.
\newblock Learning polynomial problems with sl (2)-equivariance.
\newblock 2023.

\bibitem[Li et~al.(2023)Li, Gui, Luo, and Ji]{li2023graph}
Li, X., Gui, S., Luo, Y., and Ji, S.
\newblock Graph structure and feature extrapolation for out-of-distribution generalization.
\newblock \emph{arXiv preprint arXiv:2306.08076}, 2023.

\bibitem[Liao \& Smidt(2022)Liao and Smidt]{liao2022equiformer}
Liao, Y.-L. and Smidt, T.
\newblock Equiformer: Equivariant graph attention transformer for 3d atomistic graphs.
\newblock In \emph{The Eleventh International Conference on Learning Representations}, 2022.

\bibitem[Lim et~al.(2024)Lim, Robinson, Jegelka, and Maron]{lim2024expressive}
Lim, D., Robinson, J., Jegelka, S., and Maron, H.
\newblock Expressive sign equivariant networks for spectral geometric learning.
\newblock \emph{Advances in Neural Information Processing Systems}, 36, 2024.

\bibitem[Lin et~al.(2023)Lin, Yan, Luo, Liu, Qian, and Ji]{lin2023efficient}
Lin, Y., Yan, K., Luo, Y., Liu, Y., Qian, X., and Ji, S.
\newblock Efficient approximations of complete interatomic potentials for crystal property prediction.
\newblock 2023.
\newblock URL \url{https://openreview.net/forum?id=KaKXygtEGK}.

\bibitem[Liu et~al.(2022)Liu, Wang, Liu, Lin, Zhang, Oztekin, and Ji]{liu2022spherical}
Liu, Y., Wang, L., Liu, M., Lin, Y., Zhang, X., Oztekin, B., and Ji, S.
\newblock Spherical message passing for 3d molecular graphs.
\newblock In \emph{International Conference on Learning Representations}, 2022.
\newblock URL \url{https://openreview.net/forum?id=givsRXsOt9r}.

\bibitem[Luo et~al.(2023)Luo, Liu, and Ji]{luo2023towards}
Luo, Y., Liu, C., and Ji, S.
\newblock Towards symmetry-aware generation of periodic materials.
\newblock In \emph{Thirty-seventh Conference on Neural Information Processing Systems}, 2023.

\bibitem[Ma et~al.(2024)Ma, Wang, and Wang]{ma2024laplacian}
Ma, G., Wang, Y., and Wang, Y.
\newblock Laplacian canonization: A minimalist approach to sign and basis invariant spectral embedding.
\newblock \emph{Advances in Neural Information Processing Systems}, 36, 2024.

\bibitem[McKay(2007)]{mckay2007nauty}
McKay, B.~D.
\newblock Nauty user’s guide (version 2.4).
\newblock \emph{Computer Science Dept., Australian National University}, pp.\  225--239, 2007.

\bibitem[McKay \& Piperno(2014)McKay and Piperno]{mckay2014practical}
McKay, B.~D. and Piperno, A.
\newblock Practical graph isomorphism, ii.
\newblock \emph{Journal of symbolic computation}, 60:\penalty0 94--112, 2014.

\bibitem[Mondal et~al.(2024)Mondal, Panigrahi, Kaba, Mudumba, and Ravanbakhsh]{mondal2024equivariant}
Mondal, A.~K., Panigrahi, S.~S., Kaba, O., Mudumba, S.~R., and Ravanbakhsh, S.
\newblock Equivariant adaptation of large pretrained models.
\newblock \emph{Advances in Neural Information Processing Systems}, 36, 2024.

\bibitem[Nguyen et~al.(2023)Nguyen, Kim, Yang, and Hong]{nguyen2023learning}
Nguyen, T.~D., Kim, J., Yang, H., and Hong, S.
\newblock Learning symmetrization for equivariance with orbit distance minimization.
\newblock \emph{arXiv preprint arXiv:2311.07143}, 2023.

\bibitem[Pozdnyakov \& Ceriotti(2024)Pozdnyakov and Ceriotti]{pozdnyakov2024smooth}
Pozdnyakov, S. and Ceriotti, M.
\newblock Smooth, exact rotational symmetrization for deep learning on point clouds.
\newblock \emph{Advances in Neural Information Processing Systems}, 36, 2024.

\bibitem[Puny et~al.(2021)Puny, Atzmon, Ben-Hamu, Misra, Grover, Smith, and Lipman]{puny2021frame}
Puny, O., Atzmon, M., Ben-Hamu, H., Misra, I., Grover, A., Smith, E.~J., and Lipman, Y.
\newblock Frame averaging for invariant and equivariant network design.
\newblock \emph{arXiv preprint arXiv:2110.03336}, 2021.

\bibitem[Puny et~al.(2023)Puny, Lim, Kiani, Maron, and Lipman]{puny2023equivariant}
Puny, O., Lim, D., Kiani, B., Maron, H., and Lipman, Y.
\newblock Equivariant polynomials for graph neural networks.
\newblock In \emph{International Conference on Machine Learning}, pp.\  28191--28222. PMLR, 2023.

\bibitem[Qi et~al.(2017)Qi, Su, Mo, and Guibas]{qi2017pointnet}
Qi, C.~R., Su, H., Mo, K., and Guibas, L.~J.
\newblock Pointnet: Deep learning on point sets for 3d classification and segmentation.
\newblock In \emph{Proceedings of the IEEE conference on computer vision and pattern recognition}, pp.\  652--660, 2017.

\bibitem[Qu \& Gouskos(2020)Qu and Gouskos]{qu2020jet}
Qu, H. and Gouskos, L.
\newblock Jet tagging via particle clouds.
\newblock \emph{Physical Review D}, 101\penalty0 (5):\penalty0 056019, 2020.

\bibitem[Ruhe et~al.(2023)Ruhe, Brandstetter, and Forr{\'e}]{ruhe2023clifford}
Ruhe, D., Brandstetter, J., and Forr{\'e}, P.
\newblock Clifford group equivariant neural networks.
\newblock In \emph{Thirty-seventh Conference on Neural Information Processing Systems}, 2023.
\newblock URL \url{https://openreview.net/forum?id=n84bzMrGUD}.

\bibitem[Satorras et~al.(2021)Satorras, Hoogeboom, and Welling]{satorras2021n}
Satorras, V.~G., Hoogeboom, E., and Welling, M.
\newblock E(n) equivariant graph neural networks.
\newblock In \emph{International conference on machine learning}, pp.\  9323--9332. PMLR, 2021.

\bibitem[Sch{\"u}tt et~al.(2021)Sch{\"u}tt, Unke, and Gastegger]{schutt2021equivariant}
Sch{\"u}tt, K., Unke, O., and Gastegger, M.
\newblock Equivariant message passing for the prediction of tensorial properties and molecular spectra.
\newblock In \emph{International Conference on Machine Learning}, pp.\  9377--9388. PMLR, 2021.

\bibitem[Sch{\"u}tt et~al.(2018)Sch{\"u}tt, Sauceda, Kindermans, Tkatchenko, and M{\"u}ller]{schutt2018schnet}
Sch{\"u}tt, K.~T., Sauceda, H.~E., Kindermans, P.-J., Tkatchenko, A., and M{\"u}ller, K.-R.
\newblock Schnet--a deep learning architecture for molecules and materials.
\newblock \emph{The Journal of Chemical Physics}, 148\penalty0 (24), 2018.

\bibitem[Smidt et~al.(2021)Smidt, Geiger, and Miller]{smidt2021finding}
Smidt, T.~E., Geiger, M., and Miller, B.~K.
\newblock Finding symmetry breaking order parameters with euclidean neural networks.
\newblock \emph{Physical Review Research}, 3\penalty0 (1):\penalty0 L012002, 2021.

\bibitem[St{\"a}rk et~al.(2022)St{\"a}rk, Ganea, Pattanaik, Barzilay, and Jaakkola]{stark2022equibind}
St{\"a}rk, H., Ganea, O., Pattanaik, L., Barzilay, R., and Jaakkola, T.
\newblock Equibind: Geometric deep learning for drug binding structure prediction.
\newblock In \emph{International conference on machine learning}, pp.\  20503--20521. PMLR, 2022.

\bibitem[Tauvel \& Yu(2005)Tauvel and Yu]{tauvel2005homogeneous}
Tauvel, P. and Yu, R.~W.
\newblock Homogeneous spaces and quotients.
\newblock \emph{Lie Algebras and Algebraic Groups}, pp.\  387--399, 2005.

\bibitem[Th{\"o}lke \& De~Fabritiis(2021)Th{\"o}lke and De~Fabritiis]{tholke2021equivariant}
Th{\"o}lke, P. and De~Fabritiis, G.
\newblock Equivariant transformers for neural network based molecular potentials.
\newblock In \emph{International Conference on Learning Representations}, 2021.

\bibitem[Thomas et~al.(2018)Thomas, Smidt, Kearnes, Yang, Li, Kohlhoff, and Riley]{thomas2018tensor}
Thomas, N., Smidt, T., Kearnes, S., Yang, L., Li, L., Kohlhoff, K., and Riley, P.
\newblock Tensor field networks: Rotation-and translation-equivariant neural networks for 3d point clouds.
\newblock \emph{arXiv preprint arXiv:1802.08219}, 2018.

\bibitem[Unke et~al.(2021)Unke, Bogojeski, Gastegger, Geiger, Smidt, and M{\"u}ller]{unke2021se}
Unke, O., Bogojeski, M., Gastegger, M., Geiger, M., Smidt, T., and M{\"u}ller, K.-R.
\newblock Se (3)-equivariant prediction of molecular wavefunctions and electronic densities.
\newblock \emph{Advances in Neural Information Processing Systems}, 34:\penalty0 14434--14447, 2021.

\bibitem[Villar et~al.(2021)Villar, Hogg, Storey-Fisher, Yao, and Blum-Smith]{villar2021scalars}
Villar, S., Hogg, D.~W., Storey-Fisher, K., Yao, W., and Blum-Smith, B.
\newblock Scalars are universal: Equivariant machine learning, structured like classical physics.
\newblock In Beygelzimer, A., Dauphin, Y., Liang, P., and Vaughan, J.~W. (eds.), \emph{Advances in Neural Information Processing Systems}, 2021.
\newblock URL \url{https://openreview.net/forum?id=ba27-RzNaIv}.

\bibitem[Wang et~al.(2022{\natexlab{a}})Wang, Liu, Lin, Liu, and Ji]{wang2022comenet}
Wang, L., Liu, Y., Lin, Y., Liu, H., and Ji, S.
\newblock Com{EN}et: Towards complete and efficient message passing for 3d molecular graphs.
\newblock In Oh, A.~H., Agarwal, A., Belgrave, D., and Cho, K. (eds.), \emph{Advances in Neural Information Processing Systems}, 2022{\natexlab{a}}.
\newblock URL \url{https://openreview.net/forum?id=mCzMqeWSFJ}.

\bibitem[Wang et~al.(2023{\natexlab{a}})Wang, Liu, Liu, Kurtin, and Ji]{wang2023learning}
Wang, L., Liu, H., Liu, Y., Kurtin, J., and Ji, S.
\newblock Learning hierarchical protein representations via complete 3d graph networks.
\newblock In \emph{International Conference on Learning Representations (ICLR)}, 2023{\natexlab{a}}.

\bibitem[Wang et~al.(2020)Wang, Walters, and Yu]{wang2020incorporating}
Wang, R., Walters, R., and Yu, R.
\newblock Incorporating symmetry into deep dynamics models for improved generalization.
\newblock In \emph{International Conference on Learning Representations}, 2020.

\bibitem[Wang et~al.(2022{\natexlab{b}})Wang, Walters, and Yu]{wang2022approximately}
Wang, R., Walters, R., and Yu, R.
\newblock Approximately equivariant networks for imperfectly symmetric dynamics.
\newblock In \emph{International Conference on Machine Learning}, pp.\  23078--23091. PMLR, 2022{\natexlab{b}}.

\bibitem[Wang et~al.(2023{\natexlab{b}})Wang, Walters, and Smidt]{wang2023relaxed}
Wang, R., Walters, R., and Smidt, T.
\newblock Relaxed octahedral group convolution for learning symmetry breaking in 3d physical systems.
\newblock In \emph{NeurIPS 2023 AI for Science Workshop}, 2023{\natexlab{b}}.

\bibitem[Weiler et~al.(2018)Weiler, Geiger, Welling, Boomsma, and Cohen]{weiler20183d}
Weiler, M., Geiger, M., Welling, M., Boomsma, W., and Cohen, T.~S.
\newblock 3d steerable cnns: Learning rotationally equivariant features in volumetric data.
\newblock \emph{Advances in Neural Information Processing Systems}, 31, 2018.

\bibitem[Worrall \& Welling(2019)Worrall and Welling]{worrall2019deep}
Worrall, D. and Welling, M.
\newblock Deep scale-spaces: Equivariance over scale.
\newblock \emph{Advances in Neural Information Processing Systems}, 32, 2019.

\bibitem[Xie et~al.(2017)Xie, Girshick, Doll{\'a}r, Tu, and He]{xie2017aggregated}
Xie, S., Girshick, R., Doll{\'a}r, P., Tu, Z., and He, K.
\newblock Aggregated residual transformations for deep neural networks.
\newblock In \emph{Proceedings of the IEEE conference on computer vision and pattern recognition}, pp.\  1492--1500, 2017.

\bibitem[Xu et~al.(2018)Xu, Hu, Leskovec, and Jegelka]{xu2018powerful}
Xu, K., Hu, W., Leskovec, J., and Jegelka, S.
\newblock How powerful are graph neural networks?
\newblock \emph{arXiv preprint arXiv:1810.00826}, 2018.

\bibitem[Xu et~al.(2023)Xu, Powers, Dror, Ermon, and Leskovec]{xu2023geometric}
Xu, M., Powers, A.~S., Dror, R.~O., Ermon, S., and Leskovec, J.
\newblock Geometric latent diffusion models for 3d molecule generation.
\newblock In \emph{International Conference on Machine Learning}, pp.\  38592--38610. PMLR, 2023.

\bibitem[Xu et~al.(2021)Xu, Luo, Zhang, Xu, Xie, Liu, Dickerson, Deng, Nakata, and Ji]{xu2021molecule3d}
Xu, Z., Luo, Y., Zhang, X., Xu, X., Xie, Y., Liu, M., Dickerson, K., Deng, C., Nakata, M., and Ji, S.
\newblock Molecule3d: A benchmark for predicting 3d geometries from molecular graphs.
\newblock \emph{arXiv preprint arXiv:2110.01717}, 2021.

\bibitem[Yan et~al.(2022)Yan, Liu, Lin, and Ji]{yan2022periodic}
Yan, K., Liu, Y., Lin, Y., and Ji, S.
\newblock Periodic graph transformers for crystal material property prediction.
\newblock \emph{Advances in Neural Information Processing Systems}, 35:\penalty0 15066--15080, 2022.

\bibitem[Yan et~al.(2024)Yan, Fu, Qian, Qian, and Ji]{yan2024complete}
Yan, K., Fu, C., Qian, X., Qian, X., and Ji, S.
\newblock Complete and efficient graph transformers for crystal material property prediction.
\newblock \emph{arXiv preprint arXiv:2403.11857}, 2024.

\bibitem[Yarotsky(2022)]{yarotsky2022universal}
Yarotsky, D.
\newblock Universal approximations of invariant maps by neural networks.
\newblock \emph{Constructive Approximation}, 55\penalty0 (1):\penalty0 407--474, 2022.

\bibitem[Yu et~al.(2023)Yu, Xu, Qian, Qian, and Ji]{yu2023efficient}
Yu, H., Xu, Z., Qian, X., Qian, X., and Ji, S.
\newblock Efficient and equivariant graph networks for predicting quantum hamiltonian.
\newblock In \emph{International Conference on Machine Learning}. PMLR, 2023.

\bibitem[Zhang et~al.(2023)Zhang, Wang, Helwig, Luo, Fu, Xie, Liu, Lin, Xu, Yan, et~al.]{zhang2023artificial}
Zhang, X., Wang, L., Helwig, J., Luo, Y., Fu, C., Xie, Y., Liu, M., Lin, Y., Xu, Z., Yan, K., et~al.
\newblock Artificial intelligence for science in quantum, atomistic, and continuum systems.
\newblock \emph{arXiv preprint arXiv:2307.08423}, 2023.

\end{thebibliography}

\newpage
\appendix
\onecolumn
\section{Notations}
We list our notations in~\cref{tab:notations}.
\begin{table}[t]
\begin{center}
\caption{Notations.}
\label{tab:notations}
\begin{tabularx}{\textwidth}{lX}
\toprule
Notation & Meaning \\ 
\hline
\midrule
$G$ & A group. \\
$B$ & A Borel set.\\
$\text{Stab}_G$ & A stabilizer with respect to group $G$.\\
$\text{Orb}_G$ & An orbit with respect to group $G$.\\
$\mu_G$ & A $G$-invariant measure function. \\
$\mathcal{S}$ & A $G$-set.\\
$\mathcal{M}$ & A smooth manifold. \\
$\mathcal{L}^\infty(G)$ & The space of all bounded functions $f:G\rightarrow \mathbb{R}$.\\
$\mathcal{L}^\infty(G,\mathcal{W})$ & The space of all bounded functions $f:G\rightarrow \mathcal{W}$ where $\mathcal{W}$ is a real vector space.\\
$\mathcal{P}(G)$ & The powerset of $G$.\\
$\mathbb{R}^d$ & $d$-dimensional Euclidean space.\\
$\mathbb{C}^d$ & $d$-dimensional complex coordinate space.\\
$\mathbb{R}^{d\times n}_*$ & The set of all full-rank matrices in $\mathbb{R}^{d\times n}$.\\
$\mathbb{C}^{d\times n}_*$ & The set of all full-rank matrices in $\mathbb{C}^{d\times n}$.\\
$\mathrm{S}_n$ & The permutation group acting on a $n$-dimensional real vector space.\\
$\mathrm{O}(d)$ & The orthogonal group acting on a $d$-dimensional real vector space.\\
$\mathrm{SO}(d)$ & The special orthogonal group acting on a $d$-dimensional real vector space.\\
$\mathrm{O}(1,d-1)$ & The Lorentz group acting on a $d$-dimensional real vector space.\\
$\mathrm{SO}(1,d-1)$ & The proper Lorentz group acting on a $d$-dimensional real vector space.\\
$\mathrm{U}(d)$ & The unitary group acting on a $d$-dimensional complex vector space.\\
$\mathrm{SU}(d)$ & The special unitary group acting on a $d$-dimensional complex vector space.\\
$\mathrm{GL}(d, \mathbb{R})$ & The general linear group acting on a $d$-dimensional real vector space.\\
$\mathrm{GL}(d, \mathbb{C})$ & The general linear group acting on a $d$-dimensional complex vector space.\\
$\mathrm{SL}(d, \mathbb{R})$ & The special linear group acting on a $d$-dimensional real vector space.\\
$\mathrm{SL}(d, \mathbb{C})$ & The special linear group acting on a $d$-dimensional complex vector space.\\
$\mathrm{SPos}(d,\mathbb{R})$ & The set of all symmetric positive definite matrices in $\mathbb{R}^{d\times d}$.\\
$\mathrm{SPos}(d,\mathbb{C})$ & The set of all symmetric positive definite matrices in $\mathbb{C}^{d\times d}$.\\
$\mathrm{Sym}(d,\mathbb{R})$ & The set of all symmetric matrices in $\mathbb{R}^{d\times d}$.\\
$\mathrm{Sym}(d,\mathbb{C})$ & The set of all symmetric matrices in $\mathbb{C}^{d\times d}$.\\

\bottomrule
\end{tabularx}
\end{center}

\end{table}

\section{Mathematical Proofs}

    \subsection{\cref{thm:stab}}
        \begin{theorem}\label{thm:stab}
        For $g\in G$ and $x\in \mathcal S$, $\stab G{g\cdot x}=g\stab Gx g^{-1}$    
        \end{theorem}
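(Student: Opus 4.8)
The plan is to prove the set equality $\stab G{g\cdot x}=g\stab Gx g^{-1}$ by a direct chain of logical equivalences, characterizing precisely which elements of $G$ fix the point $g\cdot x$. Since both sides are subsets of $G$, it suffices to show that an arbitrary element $k\in G$ lies in the left-hand side if and only if it lies in the right-hand side. I would begin by unfolding the definition of the stabilizer: $k\in\stab G{g\cdot x}$ means exactly $k\cdot(g\cdot x)=g\cdot x$.

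The key step is to manipulate this defining condition using the $G$-set action axioms, namely associativity of the action, $(ab)\cdot x = a\cdot(b\cdot x)$, together with the identity and inverse laws. Applying $g^{-1}$ to both sides of $k\cdot(g\cdot x)=g\cdot x$ yields $g^{-1}\cdot\bigl(k\cdot(g\cdot x)\bigr)=g^{-1}\cdot(g\cdot x)$, and collapsing each side via associativity gives $(g^{-1}kg)\cdot x = x$. This last equation is precisely the statement that $g^{-1}kg\in\stab Gx$, which is equivalent to $k\in g\stab Gx g^{-1}$. Each implication in this chain is reversible—the backward direction simply conjugates by $g$ rather than $g^{-1}$—so the equivalence, and hence the equality of the two sets, follows immediately.

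There is no substantive obstacle here; the result is a standard orbit–stabilizer fact, and the only thing to be careful about is bookkeeping in the action axioms, ensuring that the grouping $(g^{-1}k g)\cdot x = g^{-1}\cdot(k\cdot(g\cdot x))$ is applied consistently and that the equivalences are genuinely bidirectional rather than one-directional. If desired, one could instead present the two inclusions separately, but the single chain of ``if and only if'' statements is cleaner and makes the conjugation structure transparent. This lemma is exactly what justifies the equivariance bookkeeping underlying~\cref{thm:minimal_frame}, since it shows that translating the canonical representative by $h$ transforms the stabilizer by conjugation, matching the frame $\mathcal F(x)=h\,\stab G{x_0}$.
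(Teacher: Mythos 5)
Your proof is correct and follows essentially the same conjugation argument as the paper, which establishes the two inclusions $g\stab Gx g^{-1}\subseteq\stab G{g\cdot x}$ and $\stab G{g\cdot x}\subseteq g\stab Gx g^{-1}$ separately rather than as a single chain of equivalences. The two presentations are interchangeable, and your biconditional version is, if anything, slightly tidier.
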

        \begin{proof}    
        Let $h\in\stab Gx$, and observe that $ghg^{-1}\in\stab G{g\cdot x}$; therefore, $g\stab Gx g^{-1}\subseteq \stab G{g\cdot x}$.
        
        Next, let $h\in\stab G{g\cdot x}$, and observe that $g^{-1}hg\in\stab G{x}$; therefore, $g^{-1}\stab G{g\cdot x}g\subseteq \stab G{x}$, and thus, $\stab G{g\cdot x}\subseteq g\stab G{x}g^{-1}$ 
        \end{proof}

    \subsection{Proof of~\cref{lem:minimal_frame}}\label{app:pf_lem:minimal_frame}
        \lemminimalframe*
        \begin{proof}
        Let \(g \in \text{Stab}_G(x)\) and $\mathcal{F}$ be an arbitrary frame. From the definition of \(\text{Stab}_G(x)\) and the $G$-equivariance of $\mathcal F$, 
        \begin{equation*}
            \mathcal{F}(g \cdot x) = \mathcal{F}(x) = g\mathcal{F}(x)\coloneqq\{gh\mid h\in\mathcal F(x)\},    
        \end{equation*}
        which implies that \(gh \in \mathcal{F}(x)\) for all $h\in \mathcal F(x)$. Because this is true for all $g\in\text{Stab}_G(x)$, \(\text{Stab}_G(x)h \subseteq \mathcal{F}(x)\), and thus, by the $G$-equivariance of $\mathcal F$
        \begin{equation}\label{eq:stab_in_frame1}
            h^{-1}\text{Stab}_G(x)h \subseteq h^{-1}\mathcal{F}(x)=\mathcal{F}(x_0),  
        \end{equation}
        where $x_0\coloneqq h^{-1}x\in \text{Orb}_G(x)$. 
        By~\cref{thm:stab}, \(\text{Stab}_G(x_0)= h^{-1}\text{Stab}_G(x)h\), and thus, from~\cref{eq:stab_in_frame1}, 
        % Next, observe that for all \(g'\in \text{Stab}_G(x_0)\), \(hg'h^{-1}\in\text{Stab}_G(x)\), and so \(h\text{Stab}_G(x_0)h^{-1}\subseteq\text{Stab}_G(x)\), which implies that \(\text{Stab}_G(x_0)\subseteq h^{-1}\text{Stab}_G(x)h\). From~\cref{eq:stab_in_frame1}, this gives that 
        % \begin{equation}\label{eq:stab_in_frame2}
        $\text{Stab}_G(x_0)\subseteq \mathcal F(x_0)$.    
        % \end{equation}
        \end{proof}

    \subsection{Proof of~\cref{thm:minimal_frame}}\label{app:thm:minimal_frame}
        \thmminimalframe*

        \begin{proof}
            Suppose there exists a frame \(\mathcal F'\) and \(x \in \mathcal S\) such that \(\mathcal F'(x) \subset \mathcal F(x)\), \textit{i.e.}, $\mathcal F$ is not a minimal frame. As \(\mathcal F (x_0) = \mathcal F(h^{-1}\cdot x) = h^{-1}\mathcal F(x) = \text{Stab}_G (x_0)\), we obtain 
            \begin{equation}\label{eqn:frame_in_stab1}
            \mathcal F'(x_0) \subset  \text{Stab}_G (x_0).        
            \end{equation}
            Next, by~\cref{lem:minimal_frame}, there exists \(x_1 \in \text{Orb}_G (x)\)  such that \(\text{Stab}_G(x_1)\subseteq \mathcal F'(x_1)\). Since \(x_0\) and \(x_1\) are in the same orbit \(\text{Orb}_G (x)\), there exists \(h'\in G\) such that \(x_1 = h'\cdot x_0\). Observe that 
            \begin{equation}\label{eqn:hnotin}
                h'\notin  \text{Stab}_G(x_0),    
            \end{equation}
            as otherwise, \(x_1 = x_0\) and \( \text{Stab}_G (x_0) \subseteq \mathcal F'(x_0) \subset \text{Stab}_G (x_0)\), which is impossible. From~\cref{eqn:frame_in_stab1}, 
            \begin{equation}\label{eqn:stab1_in_stab0}
                \text{Stab}_G(x_1)\subseteq \mathcal F'(x_1) = h' \mathcal F'(x_0) \subset h'\text{Stab}_G(x_0).    
            \end{equation}
            Furthermore, from~\cref{thm:stab}, $\text{Stab}_G(x_1) = \text{Stab}_G(h'\cdot x_0) = h' \text{Stab}_G(x_0) h'^{-1}$,  
            and thus, \cref{eqn:stab1_in_stab0} implies that \(\text{Stab}_G(x_0) \subset \text{Stab}_G(x_0) h'\). This implies that $e\in \text{Stab}_G(x_0) h'$, since trivially $e\in \text{Stab}_G(x_0)$. However, this may only occur if there exists $j\in\text{Stab}_G(x_0)$ such that $jh'=e$, which gives that $h'^{-1}\in\text{Stab}_G(x_0)$. Finally, this implies that $h'\in \text{Stab}_G(x_0)$, which contradicts~\cref{eqn:hnotin}. As this contradiction was obtained through the assumption that $\mathcal F$ is not a minimal frame, \(\mathcal{F}\) is indeed a minimal frame.
        \end{proof}
    
    \subsection{Proof of~\cref{thm:induced_Gset_frame}}\label{app:thm:induced_Gset_frame}
        \thminducedGsetframe*
        \begin{proof}
            For any \(g \in G\) and \(x \in \mathcal{S}\), by the \(G\)-equivariance of \(\phi\), \(\mathcal{F}_\phi (\phi(g \cdot x)) = g \mathcal{F}_\phi(\phi(x))\). Thus, \(\mathcal{F}_\phi\circ\phi\) forms a frame on \(\mathcal{S}\).
        \end{proof}

    \subsection{\cref{lemma:multiCanMaps}}
    \begin{lemma}\label{lemma:multiCanMaps}
        For some canonical form $c(x)$, the minimal frame $\hat{\mathcal F}$ is given by $\hat{\mathcal F}(x)=\{g\in G\mid g\cdot c(x)= x\}$. 
    \end{lemma}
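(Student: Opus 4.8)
The plan is to show that the transporter set $\{g\in G\mid g\cdot c(x)=x\}$ coincides exactly with the coset form of the minimal frame already established in~\cref{thm:minimal_frame}, so that the lemma follows immediately. First I would invoke~\cref{thm:minimal_frame} directly: writing $x_0=c(x)$ and choosing $h\in G$ with $x_0=h^{-1}\cdot x$ (equivalently $h\cdot x_0=x$), that theorem gives the minimal frame as $\hat{\mathcal F}(x)=h\,\stab G{x_0}$. Thus the entire claim reduces to the set identity
\begin{equation*}
h\,\stab G{x_0}=\{g\in G\mid g\cdot x_0=x\}.
\end{equation*}

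The identity is then a standard coset/transporter computation carried out by the chain of equivalences
\begin{equation*}
g\cdot x_0=x \;\Longleftrightarrow\; g\cdot x_0=h\cdot x_0 \;\Longleftrightarrow\; h^{-1}g\cdot x_0=x_0 \;\Longleftrightarrow\; h^{-1}g\in\stab G{x_0} \;\Longleftrightarrow\; g\in h\,\stab G{x_0},
\end{equation*}
where the first equivalence uses $x=h\cdot x_0$ and the third is the definition of the stabilizer. Reading this as a double inclusion gives both containments at once, establishing the set identity and hence $\hat{\mathcal F}(x)=\{g\in G\mid g\cdot c(x)=x\}$.

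There is no substantive obstacle here; the content is purely the translation between the chosen-representative form $h\,\stab G{x_0}$ and the intrinsic transporter description. The one point worth remarking on, and which I would note explicitly, is that the right-hand set $\{g\mid g\cdot c(x)=x\}$ makes no reference to any particular $h$, so the computation simultaneously confirms that the coset $h\,\stab G{x_0}$ is independent of the choice of $h$ satisfying $h^{-1}\cdot x=c(x)$. This is precisely why the alternate form is useful: it characterizes the minimal frame without requiring one to exhibit a canonicalizing group element $h$, which is the feature later exploited to handle the case $d'<d$ where infinitely many $\hat{\mQ}$ realize the decomposition.
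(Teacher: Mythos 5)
Your proposal is correct and follows essentially the same route as the paper: both reduce the claim to \cref{thm:minimal_frame}, which writes the minimal frame as the coset $h\,\stab{G}{c(x)}$, and then identify that coset with the transporter set $\{g\in G\mid g\cdot c(x)=x\}$. The paper organizes this identification by first showing the transporter set is stable under right multiplication by $\stab{G}{c(x)}$ and then taking the union of cosets over all $h$ in the transporter set, whereas you establish the same set identity via a single chain of equivalences; the two arguments are interchangeable, and your closing remark that the identity also shows independence of the choice of $h$ is a fair observation consistent with how the lemma is used later.
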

    \begin{proof}
        Let $\mathcal H_x\coloneq \{g\in G\mid g\cdot c(x)=x\}$, and observe that since $e\in\stab G{c(x)}$, 
        \begin{equation}\label{eq:hx_in_prd}
        \mathcal H_x\subseteq \mathcal H_x\stab G{c(x)},            
        \end{equation}
        where $\mathcal H_x\stab G{c(x)}\coloneq\{hg\mid (h,g)\in\mathcal H_x\times\stab G{c(x)}\}$. Next, for all $(h,g)\in \mathcal H_x\times\stab G{c(x)}$,
        \begin{equation}
            hg\cdot c(x)=h\cdot c(x)=x,    
        \end{equation}
        which implies that $hg\in\mathcal H_x$. Since this is true for an arbitrary $(h,g)\in \mathcal H_x\times\stab G{c(x)}$,
        \begin{equation}\label{eq:prd_in_hx}
            \mathcal H_x\stab G{c(x)}\subseteq \mathcal H_x. 
        \end{equation}
        From~\cref{eq:hx_in_prd,eq:prd_in_hx}, $\mathcal H_x=\mathcal H_x\stab G{c(x)}$. Finally, from~\cref{thm:minimal_frame}, for all $h\in \mathcal H_x$, the minimal frame is given by $\hat{\mathcal F}(x)=h\stab G{c(x)}$.
        Therefore, $\hat{\mathcal F}(x)=\mathcal H_x\stab G{c(x)}=\mathcal H_x$.
    \end{proof}

    \subsection{Proof of~\cref{thm:minimal_induced_by_QR}}\label{app:pf_thm:minimal_induced_by_QR}
    % \note{CHECK}
        \thmminimalinducedbyQR*

    % \subsection{\cref{thmminimalinducedbyQR}}\label{app:pf_thm:minimal_induced_by_QR}
    % \colorblack
        % \begin{theorem}\label{thmminimalinducedbyQR}
        % \label{thm:minimal_induced_by_QR}
        %     Let $\hat{\mathcal{F}}_\phi$ be a minimal frame on the \(G_\eta(d)\)-set induced by \(\phi(\mP) = \mP \mM\) computed via the generalized QR decomposition; if the columns of \(\mP\) are non-null, then $\hat{\mathcal{F}}_\phi\circ\phi$ is a minimal frame on the original domain $\mathbb{R}^{d\times n}$.
        % \end{theorem}

        \begin{proof}
        Let \(d' = \text{rank}(\phi(\mP))\). We discuss two cases.
        
        \textbf{Case I. $d = d'$.} By the uniqueness of the generalized QR decomposition \(\phi(\mP) = \hat{\mQ}\hat{\mR} =  (\mQ\mS)(\mS^T\mD_\eta\mR)\) in~\cref{thm:qr_unique}, \(\text{Stab}_{G_\eta}(\mS^T\mD_\eta\mR) = \{\mI_d\}\). Thus, the minimal frame \(\hat{\mathcal{F}}_{\phi}(\phi(\mP)) = \{\hat{\mQ}\} = \{\mQ\mS\} \) has only a single element. By~\cref{thm:induced_Gset_frame}, \(\hat{\mathcal{F}}_{\phi}\circ \phi\) is a frame on the original domain. Since the only proper subset of \(\hat{\mathcal{F}}_{\phi}(\phi(\mP))\) is the empty set, by~\cref{def:min_frame_def}, \(\hat{\mathcal{F}}_{\phi}\circ\phi\) is a minimal frame on the original domain.
        
        \textbf{Case II. \(d > d'\).} Please refer to~\cref{sec:frame_averaging_linear_algebraic_group} for details on practical computation of the minimal frame. This case corresponds to Case II in~\cref{sec:qr_decomposition}, which says that $\hat \mQ$ is not uniquely determined in the generalized QR decomposition. That is, there exists a set $\mathcal Q_\mP$ such that for all $\hat{\mQ}\in\mathcal Q_\mP$, $\phi(\mP)=\hat{\mQ}\hat{\mR}$. Let $\mathcal H_\mP\coloneq\{g\in G_\eta(d)\mid \rho(g)\in\mathcal Q_\mP\}$. From~\cref{lemma:multiCanMaps}, the minimal frame on the induced $G_\eta(d)$-set is then given by
        % \blue{~\cref{eq:form_of_glFr}.}
        \begin{equation}\label{eq:form_of_glFr}
            \hat{\mathcal F_\phi}(\phi(\mP))=\mathcal H_\mP.
        \end{equation} 
        Furthermore, from the definition of $G_\eta(d)$, $\hat{\mQ}^T \eta \hat{\mQ} = \eta$, and thus, we obtain 
        % \begin{equation}\label{eq:q_inv}
        $$
            \hat{\mQ}^{-1} = \eta \hat{\mQ}^T \eta.
        $$
        % \end{equation}
        Additionally, by definition of the QR decomposition, $d'$ columns of $\hat{\mQ}$ are uniquely determined and span the column space of $\phi(\mP)$. From the definition of $\phi(\mP)=\mP\mM$, the $d'$ columns also span the column space of $\mP$, as we have assumed that all columns of $\mP$ are non-null, and thus, $\phi(\mP)$ serves only to remove linearly dependent columns. The remaining $d-d'$ columns are arbitrary $d$-vectors which are orthonormal to the column space of $\mP$. Therefore, 
        $$\mP'\coloneq\hat{\mQ}^{-1}\mP = \eta \hat{\mQ}^T \eta \mP$$ 
        is a fixed value for all $\hat{\mQ}\in\mathcal Q_\mP$, since \(\hat{\mQ}^T \eta \mP\) calculates the inner product between the columns of \(\hat{\mQ}\) and \(\mP\), and the inner products involving the non-unique columns of $\hat \mQ$ are all 0. Thus, for all $g\in\mathcal H_\mP$, $\mP'=g^{-1}\cdot \mP$. Since \(\mP'\) and \(\mP\) are in the same orbit, there exists an element \(g' \in G_\eta (d)\) such that $g'\cdot \mP' = \mP$, which implies that 
        $$
        \mP'=g'^{-1}g\cdot \mP'.
        $$
        Thus, $g'^{-1}g\in\stab{G_\eta(d)}{\mP'}$. Since this is true for an arbitrary $g\in\mathcal H_\mP$, from~\cref{eq:form_of_glFr},
        \begin{equation}\label{eq:h_in_stab}
            g'^{-1}\mathcal H_\mP=g'^{-1}\hat{\mathcal F_\phi}(\phi(\mP))=\hat{\mathcal F_\phi}(\phi(g'^{-1}\cdot\mP))=\hat{\mathcal F_\phi}(\phi(\mP'))\subseteq\stab{G_\eta(d)}{\mP'}.            
        \end{equation}
        Next, by~\cref{lem:minimal_frame}, there exists \(\mP''\in \text{Orb}_{G_\eta(d)} (\mP)\) such that 
        \begin{equation}\label{eq:stab_in_indF}
        \text{Stab}_{G_\eta(d)}(\mP'') \subseteq \hat{\mathcal{F}}_{\phi}(\phi(\mP'')).    
        \end{equation}
        Since $\mP'$ and $\mP''$ are in the same orbit, there exists $g''$ such that \(\mP' = g''\cdot \mP''\). Thus, by~\cref{eq:h_in_stab}, 
        \begin{equation}\label{eq:indF_in_stab}
            \hat{\mathcal{F}}_{\phi}(\phi(\mP'))  = \hat{\mathcal{F}}_{\phi}(\phi(g''\cdot \mP'')) = g'' \hat{\mathcal{F}}_{\phi}(\phi(\mP'')) \subseteq \text{Stab}_{G_\eta(d)}(\mP').    
        \end{equation}
        Putting together~\cref{eq:stab_in_indF,eq:indF_in_stab} gives that
        \begin{equation}
            g''\text{Stab}_{G_\eta(d)}(\mP'')\subseteq g''\hat{\mathcal{F}}_{\phi}(\phi(\mP'')) \subseteq \text{Stab}_{G_\eta(d)}(\mP') = \text{Stab}_{G_\eta(d)}(g'' \cdot \mP'') = g''\text{Stab}_{G_\eta(d)}(\mP'')g''^{-1},    
        \end{equation}
        where the last equality follows from~\cref{thm:stab}. Therefore, \(\text{Stab}_{G_\eta(d)}(\mP'') \subseteq \text{Stab}_{G_\eta(d)}(\mP'')g''^{-1}\), so \(g''=e\), implying that \(\mP' = \mP''\). Together with~\cref{eq:stab_in_indF}, this implies that $\text{Stab}_{G_\eta(d)}(\mP') \subseteq \hat{\mathcal{F}}_{\phi}(\phi(\mP'))$. Furthermore recall from~\cref{eq:h_in_stab} that $\hat{\mathcal{F}}_{\phi}(\phi(\mP'))\subseteq \text{Stab}_{G_\eta(d)}(\mP')$, implying that 
        $$\hat{\mathcal{F}}_{\phi}(\phi(\mP')) = \hat{\mathcal{F}}_{\phi}(\phi(g^{-1}\cdot \mP))= g^{-1}\hat{\mathcal{F}}_{\phi}(\phi(\mP))= \text{Stab}_{G_\eta(d)}(\mP').$$
        And thus:
        \begin{equation}\label{eq:indF_eq_stab}
            \hat{\mathcal{F}}_{\phi}(\phi(\mP))= g\text{Stab}_{G_\eta(d)}(\mP').
        \end{equation}
        On the other hand, $\mP'\in\orb{G_\eta(d)}\mP$; therefore, \cref{eq:indF_eq_stab} together with~\cref{thm:minimal_frame} imply that \(\hat{\mathcal{F}}_{\phi}\circ\phi\) is a minimal frame on the original domain.
        \end{proof}

        \colorblue
    \subsection{Proof of~\cref{thm:linalgG_op}}\label{app:pf_thm:linalG_op}
        \linalgGop*
        \begin{proof}
        Let \(d' = \text{rank}(\phi(\mP))\). We discuss two cases.
        
        \textbf{Case I. $d = d'$.} By the uniqueness of $\hat{\mQ}$ and $\hat{\mR}$ in the generalized QR decomposition \(\phi(\mP) = \hat{\mQ}\hat{\mR}\) given by~\cref{thm:qr_unique}, \(\text{Stab}_{G_\eta}(\hat{\mR}) = \{\mI_d\}\). Thus, by~\cref{thm:minimal_frame}, the minimal frame \(\hat{\mathcal{F}}_{\phi}(\phi(\mP)) = \{\orange{\rho^{-1}}(\hat{\mQ})\}\) has only a single element. Therefore, \orange{for $\mQ_0\coloneq\hat{\mQ},$} the frame averaging operator is given by
        $$
        \langle\Phi\rangle_{\hat{\mathcal{F}}_{\phi}\orange{\circ \phi}}(\mP) = \frac{1}{|\hat{\mathcal{F}}_{\phi}(\phi(\mP))|} \sum_{g\in\hat{\mathcal{F}}_{\phi}(\phi(\mP))} \orange{\rho(g)} \Phi(\orange{\rho(g^{-1})}\mP)=\orange{\mQ_0} \Phi(\orange{\hat{\mQ}^{-1}}\mP).
        $$
        % By~\cref{thm:induced_Gset_frame}, \(\hat{\mathcal{F}}_{\phi}\circ \phi\) is a frame on the original domain. Since the only proper subset of \(\hat{\mathcal{F}}_{\phi}(\phi(\mP))\) is the empty set, by~\cref{def:min_frame_def}, \(\hat{\mathcal{F}}_{\phi}\circ\phi\) is a minimal frame on the original domain.
        
        \textbf{Case II. \(d > d'\).} This case corresponds to Case II in~\cref{sec:qr_decomposition}, which says that $\hat \mQ$ is not uniquely determined in the generalized QR decomposition. That is, there exists a set $\mathcal Q_\mP\subset\R^{d\times d}$ such that for all $\hat{\mQ}\in\mathcal Q_\mP$, $\phi(\mP)=\hat{\mQ}\hat{\mR}$. Let $\mathcal H_\mP\coloneq\{g\in G_\eta(d)\mid \rho(g)\in\mathcal Q_\mP\}$. From~\cref{lemma:multiCanMaps}, the minimal frame is then given by
        \begin{equation}\label{eq:form_of_glFr}
            \hat{\mathcal F_\phi}(\phi(\mP))=\mathcal H_\mP.
        \end{equation} 
        Since $\mathcal H_\mP$ may be infinite, we express the frame averaging operator introduced in discrete form in~\cref{eqn:fa} in integral form as
        \begin{equation}\label{eq:casetwo_int}            
        \langle\Phi\rangle_{\hat{\mathcal{F}}_{\phi}\orange{\circ \phi}}(\mP) =  \int_{\mathcal Q_\mP} \mQ\Phi(\mQ^{-1}\mP)d\mu_{\mathcal Q_\mP}(\rho^{-1}(\mQ)),
        \end{equation}
        where $\mu_{\mathcal Q_\mP}$ is a uniform probability measure over $\mathcal Q_\mP$.
        % , and with abuse of notation, we denote $\mu_{G_\eta}(g)$ such that $\rho(g)=\mQ$ by $\mu_{G_\eta}(\mQ)$. 
        We next prove the tractability of this integral. From the definition of $G_\eta(d)$, $\hat{\mQ}^T \eta \hat{\mQ} = \eta$, and thus, we obtain 
        % \begin{equation}\label{eq:q_inv}
        $$
            \hat{\mQ}^{-1} = \eta \hat{\mQ}^T \eta.
        $$
        % \end{equation}
        \orange{        Additionally, by definition of the QR decomposition, $d'$ columns of $\hat{\mQ}$ are uniquely determined and span the column space of $\phi(\mP)$. From the definition of $\phi(\mP)=\mP\mM$, the $d'$ columns also span the column space of $\mP$, as we have assumed that all columns of $\mP$ are non-null, and thus, $\phi(\mP)$ serves only to remove linearly dependent columns. The remaining $d-d'$ columns are arbitrary $d$-vectors which are orthonormal to the column space of $\mP$. Therefore, 
        $$\hat{\mQ}^{-1}\mP = \eta \hat{\mQ}^T \eta \mP$$ 
        is a fixed value for all $\hat{\mQ}\in\mathcal Q_\mP$, since \(\hat{\mQ}^T \eta \mP\) calculates the inner product between the columns of \(\hat{\mQ}\) and \(\mP\), and the inner products involving the non-unique columns of $\hat \mQ$ are all 0 due to orthonormality.} This implies that~\cref{eq:casetwo_int} can be re-written as 
        \begin{equation}\label{eq:casetwo_int_two}            
        \langle\Phi\rangle_{\hat{\mathcal{F}}_{\phi}\orange{\circ \phi}}(\mP) =  \int_{\mathcal Q_\mP} \mQ\Phi(\orange{\hat{\mQ}^{-1}\mP})d\mu_{\mathcal Q_\mP}(\orange{\rho^{-1}(\mQ)})=  \left(\int_{\mathcal Q_\mP} \mQ d\mu_{\mathcal Q_\mP}(\orange{\rho^{-1}(\mQ)})\right)\orange{\Phi(\hat{\mQ}^{-1}\mP)},
        \end{equation}
        \orange{where $\hat\mQ$ can be any element from $\mathcal Q_\mP$ which we choose as the $\hat\mQ$ produced by the generalized QR decomposition for convenience.} Thus, it only remains to be shown that $\int_{\mathcal Q_\mP} \mQ d\mu_{\mathcal Q_\mP}(\orange{\rho^{-1}(\mQ)})$ is a tractable integral. Without loss of generality, we partition ${\mQ}\in\mathcal Q_\mP$ into $${\mQ}=[{\mQ}^0, {\mQ}^1],$$
        where ${\mQ}^0\in \R^{d\times d'}$ are the $d'$ uniquely determined columns shared by all elements of $\mathcal Q_\mP$ and ${\mQ}^1\in \R^{d\times (d-d')}$ are the $(d-d')$ arbitrary non-unique columns which may be freely chosen. Thus, for all $\mQ=[{\mQ}^0, {\mQ}^1]\in\mathcal Q_\mP$, there exists $\mQ'\in\mathcal Q_\mP$ such that $\mQ'=[{\mQ}^0, -{\mQ}^1]$, with $$\orange{\frac12}(\mQ+\mQ')=\orange{{\mQ}_0\coloneq}[{\mQ}^0, \mathbf{0}^{d\times (d-d')}].$$ We may therefore partition $\mathcal Q_\mP$ as $$
        \mathcal Q_\mP=\mathcal Q_\mP^0\cup \mathcal Q_\mP^1,
        $$
         where $\mathcal Q_\mP^0$ and $\mathcal Q_\mP^1$ are constructed such that for all ${\mQ}\in\mathcal Q_\mP^0$, there exists ${\mQ}'\in\mathcal Q_\mP^1$ such that $\orange{\frac12}(\mQ+\mQ')=\orange{{\mQ}_0}$ and for all ${\mQ}'\in\mathcal Q_\mP^1$, there exists ${\mQ}\in\mathcal Q_\mP^0$ such that $\orange{\frac12}(\mQ+\mQ')=\orange{\mQ_0}$. Thus, by the construction of $\mathcal Q_\mP^0$ and $\mathcal Q_\mP^1$,
         % $$
         % \int_{\mathcal Q_\mP} \mQ d\mu_{G_\eta}(\orange{\rho^{-1}(\mQ)})=\int_{\mathcal Q_\mP^0} \mQ d\mu_{G_\eta}(\orange{\rho^{-1}(\mQ)})+\int_{\mathcal Q_\mP^1} \mQ d\mu_{G_\eta}(\orange{\rho^{-1}(\mQ)}) \orange ={\mQ}_0.
         % $$
         \todo[inline]{We might add some more details here}
         % \orange{
         \begin{equation}
            \begin{split}
                \int_{\mathcal Q_\mP} \mQ d\mu_{\mathcal Q_\mP}(\orange{\rho^{-1}(\mQ)}) &= \int_{\mathcal Q_\mP^0} \mQ d\mu_{\mathcal Q_\mP}(\orange{\rho^{-1}(\mQ)})+\int_{\mathcal Q_\mP^1} \mQ d\mu_{\mathcal Q_\mP}(\orange{\rho^{-1}(\mQ)}) \\
                &= \int_{\mathcal Q_\mP^0} \mQ d\mu_{\mathcal Q_\mP}(\orange{\rho^{-1}(\mQ)})+\int_{\mathcal Q_\mP^0} (2\mQ_0 - \mQ) d\mu_{\mathcal Q_\mP}(\orange{\rho^{-1}(\mQ)}) \\
                &= 2\mQ_0\int_{\mathcal Q_\mP^0} d\mu_{\mathcal Q_\mP}(\orange{\rho^{-1}(\mQ)}) \\
                &= {\mQ}_0,
            \end{split}             
         \end{equation}
        % }
where the final equality follows because 
$$
\int_{\mathcal Q_\mP^0} d\mu_{\mathcal Q_\mP}(\orange{\rho^{-1}(\mQ)})= \mu_{\mathcal Q_\mP}({\mathcal{Q}^0_\mP})=\frac12,
$$
since 
$\mu_{\mathcal Q_\mP}({\mathcal{Q}^0_\mP})=\mu_{\mathcal Q_\mP}({\mathcal{Q}^1_\mP})$ and $\mu_{\mathcal Q_\mP}({\mathcal{Q}^0_\mP})+\mu_{\mathcal Q_\mP}({\mathcal{Q}^1_\mP})=\mu_{\mathcal Q_\mP}({\mathcal{Q}_\mP})=1$. \cref{eq:casetwo_int_two} can now be re-written as
        \begin{equation*}            
        \langle\Phi\rangle_{\orange{\hat{\mathcal{F}}_{\phi}\circ\phi}}(\mP) =  {\mQ}_0\Phi(\orange{\hat{\mQ}^{-1}\mP}).
        \end{equation*}
    \end{proof}
    \colorblack
        
\section{Group Basics}
\label{sec:group_intro}
\paragraph{Group definition.} Let \(G\) be a set and \(*\) be a binary operation on \(G\). Then \((G,*)\) is a group if it satisfies the following properties:
\begin{enumerate}
    \item \textbf{Closure}: \(\forall a,b\in G\), the result of the operation \(a* b \in G\).
    \item \textbf{Associativity}: \(\forall a,b,c\in G\), the equation \((a* b)\cdot c = a* (b* c)\) holds.
    \item \textbf{Identity}: \(\exists e\in G\) such that \(\forall a\in G\), the equation \(a * e = e * a = a\) holds.
    \item \textbf{Inverse}: \(\forall a\in G\), \(\exists b\in G\) such that the equation \(a * b = b * a = e\) holds.
\end{enumerate}
For the purposes of this discussion, we will simplify the notation of the group and group operation. Let \(G\) be a group, and for any elements $a, b \in G$, we will denote the group operation, typically expressed as a binary operation $a * b$, simply as $ab$.

\paragraph{Group action.} Let \(G\) be a group and \(S\) be a set. The left group action of \(G\) on \(S\) is a mapping \(\cdot: G\times S \rightarrow S\) (often denoted simply as \(g\cdot x\) for \( g\in G, x\in S\)), satisfying that for \(e\in G\) and \(x\in S\), the equation \(e\cdot x = x\) holds, and \(\forall g,h\in G\) and \(x\in S\), the equation \(g\cdot (h\cdot x) = (gh)\cdot x\) holds. The right group action can be defined similarly. If for a group \(G\) and a set \(\mathcal{S}\), there exists such (left) group action $\cdot$, then \(\mathcal{S}\) is called a (left) \textbf{\(G\)-set}. Furthermore, the action of \(G\) on \(\mathcal{S}\) is transitive if \(\forall x,y\in \mathcal{S},\exists g\in G, y = g\cdot x\). If a group \(G\) acts transitively on a \(G\)-set \(\mathcal{S}\), then \(\mathcal{S}\) is called a \textbf{homogeneous space} of \(G\). For \(x\in \mathcal{S}\), the set of the group elements fixing \(x\) form a subgroup of \(G\) called the \textbf{stabilizer} of \(x\) denoted by
\begin{equation}
    \text{Stab}_G(x) = \{g\mid g\cdot x = x\} \subseteq G,
\end{equation}
and the set of all group elements acting on \(x\) is called the \textbf{orbit} of \(x\) denoted by
\begin{equation}
    \text{Orb}_G(x) = \{g\cdot x\mid g\in G\}\subseteq \mathcal{S}.
\end{equation}
Let \(\hat{\mathcal{S}}\) be another \(G\)-set. A mapping \(f:\mathcal{S}\rightarrow\hat{\mathcal{S}}\) is \textbf{equivariant} if \(\forall x\in \mathcal{S}, \forall g\in G\),
\begin{equation}
    f(g\cdot x) = g\cdot f(x),
\end{equation}
and \(f\) is \textbf{invariant} if \(\forall x\in \mathcal{S}, \forall g\in G\),
\begin{equation}
    f(g\cdot x) = f(x).
\end{equation}

\paragraph{Lie group.} A Lie group is a group and also a smooth manifold, such that both group binary operation and the inversion map are smooth. The general linear group \(\mathrm{GL}(n,\mathbb{R})\) is a Lie group consisting of all invertible \(n\times n\) matrices. A linear or matrix Lie group refers to a Lie subgroup of \(\mathrm{GL}(n,\mathbb{R})\).

% \paragraph{Integration over Lie group.} Let a continuous function \(f:\mathbb{R}\rightarrow \mathbb{R}\) and \( a\in \mathbb{R}\) and define the weighted integral
% \begin{equation}
%     \int_\mathbb{R} f(x - a)\rho(x)dx
% \end{equation}
% where the real weight function \(\rho(x)\) is non-negative. We hope that this weighted integral is invariant to any of \(a\) such that
% \begin{equation}
%     \int_\mathbb{R} f(x - a)\rho(x)dx = \int_\mathbb{R} f(x)\rho(x)dx.
% \end{equation}
% As \( \int_\mathbb{R} f(x - a)\rho(x)dx =  \int_\mathbb{R} f(x)\rho(x + a)dx\) and the above equation holds for all \(f\), we obtain \(\rho(x) = \rho(x+ a)\), i.e., \(\rho\) must be translation-invariant. As \(\mathbb{R}\) is a Lie group, analogously, we can extend the integral to a Lie group \(G\) as for any \(g\in G\),
% \begin{equation}
%     \int_P f(\Psi(x))\rho(gx) dx = \int_P f(\Psi(x))\rho(x) dx,
% \end{equation}
% where \(\Psi\) is a one-to-one map mapping elements of vector space to elements of \(G\).

% A measure is a function mapping a set to a non-negative element of the extended reals, which is defined as $\R\cup\{\infty\}$. Intuitively, a measure allows for quantification of the ``size'' of a set. Classic examples are the counting measure, which is the cardinality of a set, and the Lebesgue measure, which is defined for an interval $[a,b]\subset\R$ as $\mu([a,b])=b-a$. Measures also play an important role in integration. 

% \note{Need to add some basics on measure theory}

\section{Frame Averaging on General Domain}
\label{sec:general_domain}

To describe a general domain, we employ a \( G \)-set \( \mathcal{S} \).
% , meaning that for any element \( g \in G \) and any point \( x \in \mathcal{S} \), \( g \cdot x \in \mathcal{S} \)
 Consider a \( \sigma \)-algebra \( \Sigma \) over \( G \) and define a measure \( \mu_G: \Sigma \rightarrow [0, +\infty] \) that satisfies the following properties:

\begin{enumerate}
    \item \textbf{Non-negative}: For any set \( X \in \Sigma \), the measure \( \mu_G (X) \) is non-negative, i.e., \( \mu_G (X) \ge 0 \).
    \item \textbf{Null Set}: The measure of the null set is zero, i.e., \( \mu_G (\emptyset) = 0 \).
    \item \textbf{\( \sigma \)-additivity}: For any countable collection \( \{X_i\} \) of pairwise disjoint sets in \( \Sigma \), the measure is \( \sigma \)-additive, i.e., \( \mu_G (\bigcup_{i=1}^\infty X_i) = \sum_{i=1}^\infty \mu_G(X_i) \).
    \item \textbf{\( G \)-invariance}: The measure is \( G \)-invariant, meaning that for any \( g \in G \) and \( X \in \Sigma \), it holds that \( \mu_G (gX)=\mu_G (Xg) = \mu_G (X) \), where \( gX = \{g\cdot x \mid x \in X\} \) and similarly \( Xg = \{x\cdot g \mid x \in X\} \).
\end{enumerate}

The combination of \( G \), \( \Sigma \), and \( \mu_G \) forms a measure space \(\left(G,\Sigma, \mu_G \right)\). For simplicity, we only consider the group \(G\) with left actions on \(\mathcal{S}\), while the right actions can be defined similarly. In this context, consider the process of frame averaging within this measure space.

\begin{theorem}[Frame Averaging on $G$-set]
\label{thm:fa}
    Consider a group \(G\) with the above measure space \(\left(G, \Sigma, \mu_G\right)\) and left actions on \(\mathcal S\) and a \( K \)-dimensional vector space \( \mathcal{W} \). Given a continuous bounded function \( \Phi \in \mathcal{L}^\infty(\mathcal{S}, \mathcal{W}) \) and a \( G \)-equivariant frame \( \mathcal{F}:\mathcal{S} \rightarrow \mathcal{P}(G) \setminus \{\emptyset\} \), the frame averaging
    \begin{equation}
        \langle \Phi \rangle_{\mathcal{F}}(x) = \int_{\mathcal{F}(x)} g \cdot \Phi(g^{-1} \cdot x) \, d\mu_G(g)
    \end{equation}
    is \( G \)-equivariant, where \( \mathcal{F}(x) \) is a compact Borel subset of \( G \) and \( \mu_G\left(\mathcal{F}(x)\right) < +\infty\) for all \( x \in \mathcal{S} \).
\end{theorem}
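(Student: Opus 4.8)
The goal is to verify the defining identity $\langle \Phi \rangle_{\mathcal{F}}(h\cdot x) = h\cdot \langle \Phi \rangle_{\mathcal{F}}(x)$ for every $h\in G$ and $x\in\mathcal{S}$, where the action $h\cdot$ on the codomain is understood through $\rho_{\mathcal{W}}$ exactly as in the discrete operator of~\cref{eqn:fa}. The plan is to mirror the discrete argument with the sum replaced by an integral, using the $G$-invariance of $\mu_G$ in place of the relabeling bijection on a finite frame. First I would confirm the integral is well-defined and finite: by the boundedness property of~\cref{def:fa}, $\Vert\rho_{\mathcal{W}}(g)\Vert_{op}\le c$ for every $g\in\mathcal{F}(x)$, and since $\Phi\in\mathcal{L}^\infty(\mathcal{S},\mathcal{W})$, the integrand $g\cdot\Phi(g^{-1}\cdot x)$ is bounded in norm by $c\Vert\Phi\Vert_\infty$ uniformly in $g$; together with $\mu_G(\mathcal{F}(x))<+\infty$ and the compact Borel structure of $\mathcal{F}(x)$, the $\mathcal{W}$-valued integral exists.

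The core computation begins from
$$
\langle \Phi \rangle_{\mathcal{F}}(h\cdot x) = \int_{\mathcal{F}(h\cdot x)} g \cdot \Phi\bigl(g^{-1}\cdot (h\cdot x)\bigr)\, d\mu_G(g).
$$
I would then apply the $G$-equivariance of the frame to rewrite the domain as $\mathcal{F}(h\cdot x) = h\mathcal{F}(x)$, and perform the change of variables $g = hg'$ with $g'\in\mathcal{F}(x)$. Under this substitution the compatibility of the action gives $g^{-1}\cdot(h\cdot x) = g'^{-1}h^{-1}h\cdot x = g'^{-1}\cdot x$, while the homomorphism property of $\rho_{\mathcal{W}}$ gives $(hg')\cdot \Phi = h\cdot\bigl(g'\cdot\Phi\bigr)$.

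The pivotal step is the change-of-variables formula itself, which rests squarely on the left $G$-invariance $\mu_G(gX)=\mu_G(X)$ built into the measure space: for measurable $f$, $\int_{hA} f(g)\,d\mu_G(g) = \int_A f(hg')\,d\mu_G(g')$. I expect this to be the main obstacle, since it must be justified rigorously for a vector-valued integrand; I would handle it by reducing to scalar integrals through a fixed basis of the $K$-dimensional space $\mathcal{W}$ and invoking left-invariance component-wise (equivalently, the Haar-measure translation law). Applying it yields
$$
\langle \Phi \rangle_{\mathcal{F}}(h\cdot x) = \int_{\mathcal{F}(x)} h\cdot\bigl(g'\cdot \Phi(g'^{-1}\cdot x)\bigr)\, d\mu_G(g').
$$
Finally, since $h\cdot$ is a fixed bounded linear operator on the finite-dimensional $\mathcal{W}$, I would pull it outside the integral by linearity to obtain $\langle \Phi \rangle_{\mathcal{F}}(h\cdot x) = h\cdot \langle \Phi \rangle_{\mathcal{F}}(x)$, which is the claimed $G$-equivariance. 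I would note that no normalization factor is needed for equivariance, though dividing by $\mu_G(\mathcal{F}(x))$ would preserve the conclusion since the frame measure is itself $G$-invariant.
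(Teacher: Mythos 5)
Your proposal is correct and follows essentially the same route as the paper's proof: rewrite the domain via $\mathcal{F}(h\cdot x)=h\mathcal{F}(x)$, substitute $g=hg'$ using the left $G$-invariance of $\mu_G$, and pull $h$ out of the integral by linearity. The extra care you take with well-definedness and the component-wise reduction for the vector-valued integral is sound but not a different argument.
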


\begin{proof}
Given any $h\in G$,
\begin{equation}
\begin{split}
    \left<\Phi\right>_{\mathcal{F}}(h\cdot x) &= \int_{\mathcal{F}(h\cdot x)} g\cdot \Phi(g^{-1}h\cdot x)d\mu_G(g) \\
    &= \int_{h\mathcal{F}(x)} g\cdot \Phi(g^{-1}h\cdot x)d\mu_G(g)\\
    &= \int_{\mathcal{F}(x)} hg\cdot \Phi(g^{-1}h^{-1}h\cdot x)d\mu_G(hg) \\
    &= h\cdot \int_{\mathcal{F}(x)} g\cdot \Phi(g^{-1}\cdot x)d\mu_G(g) \\
    &= h\cdot \left<\Phi\right>_{\mathcal{F}}(x).
\end{split}
\end{equation}
\end{proof}
\colorblue
In a similar manner, the group convolutional form of frame averaging can be proposed. Let $\Phi$ be a continuous bounded function in $\mathcal{L}^\infty(\mathcal{S}, \mathbb{R})$ and $K$ a kernel in $\mathcal{L}^\infty(\mathcal{S}, \mathbb{R})$. We obtain the $G$-equivariant group convolution
\begin{equation}
        \langle \Phi * K \rangle_{\mathcal{F}}(g) = \int_{\mathcal{F}(g)} \Phi(g') K(g^{-1}g') \, d\mu_G(g').
\end{equation}
The implementation of the measure $\mu_G$ depends on specific properties of $\mathcal{S}$, $G$, and $\mathcal{F}$. Based on~\cref{thm:minimal_frame}, the minimal frame $\mathcal{F}$ constitutes a coset of the stabilizer. Therefore, if a measure on the stabilizer is defined, one may directly apply this measure to $\mathcal{F}$ rather than to the entire group. This approach effectively renders the size of the whole group irrelevant during averaging for practical purposes. Consequently, the process of minimal frame averaging adopts a form
\begin{equation}
    \langle \Phi \rangle_{\mathcal{F}}(x) = \int_{\mathcal{F}(x)} g \cdot \Phi(g^{-1} \cdot x) \, d\mu_{\mathcal{F}(x)}(g).
\end{equation}
The $G$-equivariance of the above form can be proved similar to~\cref{thm:fa}
such that given any $h\in G$,
\begin{equation}
\begin{split}
    \left<\Phi\right>_{\mathcal{F}}(h\cdot x) &= \int_{\mathcal{F}(h\cdot x)} g\cdot \Phi(g^{-1}h\cdot x)d\mu_{\mathcal F(h\cdot x)}(g) \\
    &= \int_{h\mathcal{F}(x)} g\cdot \Phi(g^{-1}h\cdot x)d\mu_{\mathcal{F}(x)}(g)\\
    &= \int_{\mathcal{F}(x)} hg\cdot \Phi(g^{-1}h^{-1}h\cdot x)d\mu_{\mathcal{F}(x)}(hg) \\
    &= h\cdot \int_{\mathcal{F}(x)} g\cdot \Phi(g^{-1}\cdot x)d\mu_{\mathcal F(x)}(g) \\
    &= h\cdot \left<\Phi\right>_{\mathcal{F}}(x).
\end{split}
\end{equation}

% For instance, for a finite group $G$ and a frame \(\mathcal{F}(x) \subseteq G\), the counting measure is used, and the frame averaging is computed by
% \begin{equation}
%     \langle\Phi\rangle_{\mathcal{F}}(x) = \frac{1}{|\mathcal{F}(x)|} \sum_{g \in \mathcal{F}(x)} g \cdot f(g^{-1} \cdot x).
% \end{equation}
\colorblack
We next consider frame averaging over a Lie group, as this ensures that the frame averaging operator maps to a well-defined function. By definition of the manifold, all Lie groups are locally Euclidean and thus locally compact. Furthermore, for any locally compact group \( G \), there exists a unique (up to a multiplicative constant) left-invariant Radon measure \( \mu_G \), known as the Haar measure. This measure is defined such that \( \mu_G(gB) = \mu_G(B) \) for all \( g \in G \) and any Borel set \( B \subseteq G \). A canonical example is \( G = (\mathbb{R}^n, +) \), where \( \mu_G \) is the Lebesgue measure giving the volume of a \(n\)-dimensional set \(X\subset \mathbb{R}^n\), a translation-invariant quantity. Importantly, the Haar measure enables \( G \)-invariant group integration 
\[
\int_G f(g) d\mu_G(g).
\]

Commonly, the group integration is defined on a function \(f\in\mathcal{L}^\infty(G, \mathbb{R})\) with scalar output. However, more generally, the integration of a vector-valued function \( f \) in \( \mathcal{L}^\infty(\mathcal S, \mathcal{W}) \), where \( \mathcal{W} \) is a \( K \)-dimensional vector space, can be expressed via decomposition into \( K \) components, each corresponding to a dimension of \( \mathcal{W} \):
\[
(\int_G f^{1} \, d\mu_G, \, \int_G f^{2} \, d\mu_G, \, \cdots, \, \int_G f^{K} \, d\mu_G).
\]

If \(G\) is a Lie group, the domain of the $G$-set is usually a manifold where the Lie group $G$ acts smoothly. This consideration is essential for ensuring the averaging process respects the group action and to be well-defined and finite.

\begin{theorem}
    Consider a Lie group \(G\) with a measure space \(\left(G, \Sigma, \mu_G\right)\) where \(G\) acts upon \(\mathcal{M}\) smoothly with \(\mathcal{M}\) a smooth manifold. Given the definition of frame averaging in~\cref{thm:fa}, \(\langle\Phi\rangle_{\mathcal{F}}(x)\) is finite.
\end{theorem}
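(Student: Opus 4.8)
The plan is to bound the norm of the $\mathcal{W}$-valued integral defining $\langle\Phi\rangle_{\mathcal{F}}(x)$ by the integral of the norm of its integrand, and then to show this latter integral is finite by combining three ingredients: the boundedness of $\Phi$, the boundedness property of the frame from~\cref{def:fa}, and the finiteness of $\mu_G(\mathcal{F}(x))$. Throughout I write the action on $\mathcal{W}$ as $g\cdot w=\rho_{\mathcal{W}}(g)w$, so that the integrand is $g\mapsto \rho_{\mathcal{W}}(g)\Phi(g^{-1}\cdot x)$.

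First I would fix $x\in\mathcal{M}$ and recall from~\cref{thm:fa} that $\mathcal{F}(x)$ is a compact Borel subset of $G$. Since $G$ is a Lie group it is locally compact, so the Haar measure $\mu_G$ is a Radon measure and is therefore finite on the compact set $\mathcal{F}(x)$; that is, $\mu_G(\mathcal{F}(x))<+\infty$. Next I would establish a uniform pointwise bound on the integrand over $\mathcal{F}(x)$. Because $\Phi\in\mathcal{L}^\infty(\mathcal{M},\mathcal{W})$ is bounded, there is a constant $M$ with $\lVert\Phi(y)\rVert\le M$ for all $y\in\mathcal{M}$, and in particular $\lVert\Phi(g^{-1}\cdot x)\rVert\le M$. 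By the boundedness property in~\cref{def:fa}, there is $c>0$ such that $\lVert\rho_{\mathcal{W}}(g)\rVert_{op}\le c$ for every $g\in\mathcal{F}(x)$. Submultiplicativity of the operator norm then yields
$$\lVert\rho_{\mathcal{W}}(g)\Phi(g^{-1}\cdot x)\rVert\le \lVert\rho_{\mathcal{W}}(g)\rVert_{op}\,\lVert\Phi(g^{-1}\cdot x)\rVert\le cM$$
for all $g\in\mathcal{F}(x)$.

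Finally, decomposing the $\mathcal{W}$-valued integral into its $K$ scalar components and applying the triangle inequality for integrals, I would conclude
$$\lVert\langle\Phi\rangle_{\mathcal{F}}(x)\rVert\le \int_{\mathcal{F}(x)}\lVert\rho_{\mathcal{W}}(g)\Phi(g^{-1}\cdot x)\rVert\,d\mu_G(g)\le cM\,\mu_G(\mathcal{F}(x))<+\infty,$$
so $\langle\Phi\rangle_{\mathcal{F}}(x)$ is finite. I expect the only subtlety—rather than a genuine obstacle, since the estimate itself is routine—to be confirming that the integrand is measurable so the integral is well-defined. This follows from the smoothness of the $G$-action on $\mathcal{M}$ (so $g\mapsto g^{-1}\cdot x$ is continuous), the continuity of $\Phi$, and the continuity of the representation $g\mapsto\rho_{\mathcal{W}}(g)$, which together make the integrand continuous, hence Borel measurable, on $\mathcal{F}(x)$; continuity on the compact set $\mathcal{F}(x)$ in fact gives an alternative route to the same bound via the extreme value theorem.
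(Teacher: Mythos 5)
Your proof is correct and follows essentially the same route as the paper's: bound the integrand uniformly on $\mathcal{F}(x)$ and multiply by the finite measure $\mu_G(\mathcal{F}(x))$ of the compact frame, decomposing into the $K$ components of $\mathcal{W}$ to justify the vector-valued triangle inequality. The only real difference is in how the uniform bound on $g\mapsto g\cdot\Phi(g^{-1}\cdot x)$ is obtained --- you invoke the boundedness axiom $\lVert\rho_{\mathcal{W}}(g)\rVert_{op}\le c$ from \cref{def:fa} together with $\lVert\Phi\rVert_\infty\le M$, whereas the paper argues via compactness of $\mathcal{F}(x)$, continuity of $\Phi$, and a Tychonoff-style argument that the image set is compact and hence bounded componentwise; both are valid, and yours is arguably the more direct use of the stated hypotheses.
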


\begin{proof}
    Given that \(\Phi \in \mathcal{L}^\infty(\mathcal{M}, \mathcal{W})\), \(\sup_{x \in \mathcal{M}} \|\Phi(x)\|\) is bounded. Since \(\Phi\) is also continuous, for a compact Borel subset \(\mathcal{F}(x)\) of \(G\), the image set \(\{\Phi(g^{-1} \cdot x) \mid g \in \mathcal{F}(x)\}\) is compact, as the continuous image of a compact set is compact. Define \(F_x(g) = g \cdot \Phi(g^{-1} \cdot x)\). By Tychonoff's theorem, the set \(\{F_x(g) \mid g \in \mathcal{F}(x)\}\) is compact in \(\mathcal{W}\), since a product of any subsets of compact topological spaces is compact. Decomposing \(F_x(g)\) into its components \(F_x(g) = (F_x^1(g), F_x^2(g), \cdots, F_x^K(g))\), observe that each component \(F_x^i(g)\) is bounded. Hence,
    \begin{equation}
        \begin{split}
            \left| \int_{\mathcal{F}(x)} F_x^i(g) \, d\mu_G(g) \right| \leq \int_{\mathcal{F}(x)} |F_x^i(g)| \, d\mu_G(g) \leq \sup_{g \in \mathcal{F}(x)} |F_x^i(g)| \cdot \mu_G(\mathcal{F}(x)).
        \end{split}
    \end{equation}
    Since \(\mu_G\) is a finite \(G\)-invariant measure and \(\mathcal{F}(x)\) is compact, \(\mu_G(\mathcal{F}(x))\) is finite. Therefore, each integral \(\int_{\mathcal{F}(x)} F_x^i(g) \, d\mu_G(g)\) is finite, ensuring that the frame averaging \(\langle\Phi\rangle_{\mathcal{F}}(x)\) in \(\mathcal{W}\) is finite.
\end{proof}

The finiteness of \(\langle\Phi\rangle_{\mathcal{F}}(x)\) enables its computation. If \( \mathcal{F}(x) \) is a finite set, a counting measure is appropriate, and the frame averaging can still be computed as:
\begin{equation}
    \langle\Phi\rangle_{\mathcal{F}}(x) = \frac{1}{|\mathcal{F}(x)|} \sum_{g \in \mathcal{F}(x)} g \cdot \Phi(g^{-1} \cdot x).
\end{equation}
Conversely, if \(\mathcal{F}(x) \subseteq G\) is a compact subset of \(G\) with a manifold structure, the Haar measure is more applicable. In this case, the computation of frame averaging can be approximated by Monte Carlo integration:
\begin{equation}
    \int_{\mathcal{F}(x)} g \cdot \Phi(g^{-1} \cdot x) \, d\mu_{\mathcal{F}(x)}(g) \approx \frac{1}{N} \sum_{i=1}^N \tilde{g}_i \cdot \Phi(\tilde{g}_i^{-1} \cdot x),\quad \tilde{g}_i \sim \mathcal{F}(x),
\end{equation}
where \(\tilde{g}_i \sim \mathcal{F}(x)\) means that \(\tilde{g}_i\) is sampled uniformly from \(\mathcal{F}(x)\). Overall, the computation of frame averaging depends on the corresponding \(G\)-invariant measure over \(\mathcal{F}(x)\), and the choice of measure depends on the measurability of \( \mathcal{F}(x) \).

In studying \(G\)-equivariant frame averaging, arbitrary continuous functions \(f\), typically neural networks, are considered. These functions usually need computation multiple times relative to the cardinality or dimension of \(\mathcal{F}(x)\). For efficient and accurate computation of frame averaging, it is advantageous to minimize the cardinality or dimension of \(\mathcal{F}(x)\). Importantly, a \(G\)-set \(\mathcal{S}\) can be decomposed into a union of disjoint orbits under the action of \(G\), forming the quotient space \(\mathcal{S}/G\). To describe the size of \(\mathcal{F}(x)\), we investigate an important relationship among the group, the orbit, and the stabilizer.

\begin{theorem}[Orbit-Stabilizer Theorem]
\label{thm:ost}
Let \(G\) be a finite group acting on a \(G\)-set \(\mathcal{S}\), and let \(x \in \mathcal{S}\). The orbit of \(x\) under the action of \(G\) is denoted as \(\text{Orb}_G(x) = \{g \cdot x \mid g \in G\}\). The stabilizer of \(x\) in \(G\), denoted \(\text{Stab}_G(x)\), satisfies the following relation connecting the cardinalities of \(G\), the stabilizer, and the orbit:
\begin{equation}
    |G| = |\text{Stab}_G(x)| \cdot |\text{Orb}_G(x)|.
\end{equation}
\end{theorem}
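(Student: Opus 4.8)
The plan is to prove the counting identity by exhibiting a bijection between the orbit $\orb Gx$ and the collection of left cosets $G/\stab Gx$, after which the result follows by counting cosets. The core object is the map $\psi\colon G/\stab Gx\to\orb Gx$ defined by $\psi\big(g\,\stab Gx\big)=g\cdot x$. First I would verify that this map is well-defined, which is the only genuinely delicate point: if two coset representatives satisfy $g\,\stab Gx = h\,\stab Gx$, then $h^{-1}g\in\stab Gx$, so by the definition of the stabilizer $h^{-1}g\cdot x = x$, and applying the group-action axioms yields $g\cdot x = h\cdot x$. Hence $\psi$ does not depend on the choice of representative.

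Next I would establish that $\psi$ is a bijection. For injectivity, run the previous computation in reverse: if $g\cdot x = h\cdot x$, then $h^{-1}g\cdot x = x$, so $h^{-1}g\in\stab Gx$, which forces the cosets $g\,\stab Gx$ and $h\,\stab Gx$ to coincide. Surjectivity is immediate, since every element of $\orb Gx$ is by definition of the form $g\cdot x$ for some $g\in G$, and this is precisely $\psi\big(g\,\stab Gx\big)$. Together these give
\begin{equation}
    \lvert\orb Gx\rvert = \big\lvert G/\stab Gx\big\rvert = \big[G:\stab Gx\big],
\end{equation}
the index of the stabilizer in $G$.

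Finally, because $G$ is finite, I would invoke Lagrange's theorem: the left cosets of the subgroup $\stab Gx$ partition $G$ into $\big[G:\stab Gx\big]$ disjoint blocks, each of the same cardinality $\lvert\stab Gx\rvert$. This gives $\lvert G\rvert = \big[G:\stab Gx\big]\cdot\lvert\stab Gx\rvert$. Substituting the index expression from the bijection yields $\lvert G\rvert = \lvert\orb Gx\rvert\cdot\lvert\stab Gx\rvert$, which is the claimed identity.

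The only step requiring real care is the well-definedness of $\psi$, since it is what ties the equivalence-class structure on cosets to the action on $x$; once that is in place, injectivity is essentially the same computation read backwards, surjectivity is definitional, and the passage from index to order is the standard Lagrange partition argument. I would note in passing that the statement itself already appears implicitly throughout the paper, since $\stab G{x_0}$ governs the size of the minimal frame in~\cref{thm:minimal_frame}, so the orbit-stabilizer relation is exactly the bookkeeping that quantifies frame size against orbit size.
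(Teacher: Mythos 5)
Your proof is correct: the coset map $\psi(g\,\stab Gx)=g\cdot x$ is well-defined and bijective exactly as you argue, and combining its consequence $\lvert\orb Gx\rvert=[G:\stab Gx]$ with Lagrange's theorem gives the identity. The paper states this classical orbit--stabilizer result without proof, and your argument is the standard one, so there is no divergence to report.
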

Additionally, if the group is not finite, but instead a Lie group, the following characterization can be made:
\begin{theorem}[Orbit-Stabilizer Theorem for Lie Groups~\citep{tauvel2005homogeneous}]
    For a Lie group \(G\) acting on a smooth manifold \(\mathcal{M}\), let \(x \in \mathcal{M}\). Then the dimensions of \(G\), the stabilizer of \(x\) in \(G\), and the orbit of \(x\) satisfy:
    \begin{equation}
        \text{dim}(G) = \text{dim}(\text{Stab}_G(x)) + \text{dim}(\text{Orb}_G(x)).
    \end{equation}
\end{theorem}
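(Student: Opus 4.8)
The plan is to realize the orbit as the image of the smooth \emph{orbit map} $\mu_x\colon G\to\mathcal M$, $\mu_x(g)=g\cdot x$, whose image is exactly $\orb{G}{x}$, and to read off both dimensions from the rank of $\mu_x$. First I would record that $\stab{G}{x}=\mu_x^{-1}(x)$ is closed, so by Cartan's closed-subgroup theorem it is an embedded Lie subgroup $H\coloneqq\stab{G}{x}$ with a well-defined Lie algebra $\mathfrak h\subseteq\mathfrak g=T_eG$; this gives $\dim\stab{G}{x}=\dim\mathfrak h$.

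The central observation is that $\mu_x$ has \emph{constant rank}. Equivariance of the action yields $\mu_x\circ L_a=\theta_a\circ\mu_x$, where $L_a$ is left translation by $a$ on $G$ and $\theta_a\colon m\mapsto a\cdot m$ is the induced diffeomorphism of $\mathcal M$. Since $L_a$ and $\theta_a$ are diffeomorphisms, differentiating gives $(d\mu_x)_a=(d\theta_a)_x\circ(d\mu_x)_e\circ(dL_a)_e^{-1}$, so $\operatorname{rank}(d\mu_x)_a=\operatorname{rank}(d\mu_x)_e$ for every $a$, and the rank $r\coloneqq\operatorname{rank}(d\mu_x)_e$ is constant on $G$. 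Next I would identify $\ker(d\mu_x)_e$ with $\mathfrak h$: for $\xi\in\mathfrak g$ we have $(d\mu_x)_e(\xi)=\frac{d}{dt}\big|_{t=0}\exp(t\xi)\cdot x$, which vanishes iff $\exp(t\xi)\cdot x=x$ for all $t$ (so that $x$ is an equilibrium of the flow), i.e.\ iff the one-parameter subgroup $\exp(t\xi)$ lies in $H$, i.e.\ iff $\xi\in\mathfrak h$. Hence $r=\dim\mathfrak g-\dim\mathfrak h=\dim G-\dim\stab{G}{x}$.

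To finish, I would invoke the constant-rank theorem (equivalently, the quotient-manifold theorem): $\mu_x$ descends to an injective immersion $\bar\mu_x\colon G/H\to\mathcal M$ with image $\orb{G}{x}$, and $\dim(G/H)=\dim G-\dim H$. Thus $\orb{G}{x}$ inherits the structure of an immersed submanifold of dimension $r$, giving $\dim\orb{G}{x}=\dim G-\dim\stab{G}{x}$, which rearranges to the claimed identity $\dim(G)=\dim(\stab{G}{x})+\dim(\orb{G}{x})$.

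The hard part will be the differential-geometric foundations rather than the dimension count itself: justifying that $\stab{G}{x}$ is a genuine embedded Lie subgroup (Cartan's theorem) and that the orbit carries a well-defined dimension despite being only immersed, not necessarily embedded, in $\mathcal M$. Once the constant-rank theorem is available, the kernel computation of the second paragraph is the only step demanding real care, as it is what links $\ker(d\mu_x)_e$ to the infinitesimal stabilizer $\mathfrak h$. This is precisely the standard homogeneous-space argument underlying the cited result of~\citet{tauvel2005homogeneous}.
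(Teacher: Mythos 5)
Your proof is correct and is the standard homogeneous-space argument: constant rank of the orbit map via equivariance, identification of $\ker(d\mu_x)_e$ with the Lie algebra of the stabilizer, and the constant-rank/quotient-manifold theorem to realize the orbit as an injectively immersed copy of $G/H$. The paper itself gives no proof of this statement --- it is quoted directly from the cited reference --- so there is nothing to contrast; your argument is precisely the one the citation points to, and the one step needing care (that $(d\mu_x)_e(\xi)=0$ forces $\exp(t\xi)\cdot x=x$ for all $t$, via the flow relation $\gamma'(s)=(d\theta_{\exp(s\xi)})_x\gamma'(0)$) is handled correctly by your equilibrium observation.
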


By these theorems, a group $G$ can be decomposed into a stabilizer and its equal measure left cosets. In addition, according to~\cref{lem:minimal_frame} that there exists an $x_0 \in \text{Orb}_G(x)$ such that $\text{Stab}_G(x_0)\subseteq\mathcal{F}(x_0)$, for any \(x \in \text{Orb}_G(x)\), it holds that either \(|\mathcal{F}(x)| \ge |\text{Stab}_G(x)|\) in the case where $G$ is finite or \(\text{dim}(\mathcal{F}(x)) \ge \text{dim}(\text{Stab}_G(x))\) in the case where $G$ is a Lie group. Thus for efficiency, we can define a minimal frame and establish that there exists an \(x_0\) for which the quotient \(G/\text{Stab}_{G}(x_0)\) can be used to construct this minimal frame, which is exactly the result of \cref{thm:minimal_frame}. \colororange Furthermore, we desire that the minimality of a frame does not change regardless of the groups and the finite measure we have chosen. We define the minimal frame in \cref{def:min_frame_def} and the below proof shows its universality with respect to any finite measure. 

\todo{R3, Q3}
\begin{theorem}\label{cor:smallest_measure}
    Let $\mu$ be a $G$-invariant, $\sigma$-finite measure on $\Sigma$, the Borel $\sigma$-algebra on $G$. For the frames ${\mathcal F}$ and $\hat{\mathcal F}$ on $\mathcal S$, where ${\mathcal F}$ is arbitrary and $\hat{\mathcal F}$ is minimal, let $x\in\mathcal S$ such that $\mathcal F(x)$ and $\hat{\mathcal F}(x)$ are both $\mu$-measurable. Then \(\mu (\hat{\mathcal{F}}(x)) \le \mu (\mathcal{F}(x))\).
\end{theorem}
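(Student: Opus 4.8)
The plan is to reduce both $\mu(\hat{\mathcal F}(x))$ and $\mu(\mathcal F(x))$ to the measure of a stabilizer of a representative of $\orb Gx$, and then to compare those two stabilizers across the orbit. First I would apply~\cref{lem:minimal_frame} to the minimal frame $\hat{\mathcal F}$ to obtain $x_0\in\orb Gx$ with $\stab G{x_0}\subseteq\hat{\mathcal F}(x_0)$. The essential step is to upgrade this containment to the equality $\hat{\mathcal F}(x_0)=\stab G{x_0}$. To do so I would invoke~\cref{thm:minimal_frame} with a canonicalization $c$ chosen so that $c(y)=x_0$ on the orbit of $x$; taking $x_0$ as its own canonical form gives $h=e$, so the resulting frame $\mathcal F'$ satisfies $\mathcal F'(x_0)=\stab G{x_0}$ and is a genuine frame on all of $\mathcal S$. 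Since $\mathcal F'(x_0)\subseteq\hat{\mathcal F}(x_0)$ and $\hat{\mathcal F}$ is minimal, \cref{def:min_frame_def} forbids this containment from being proper, forcing $\hat{\mathcal F}(x_0)=\stab G{x_0}$. Writing $x=h\cdot x_0$ and using $G$-equivariance then gives $\hat{\mathcal F}(x)=h\stab G{x_0}$, so by the left-invariance of $\mu$, $\mu(\hat{\mathcal F}(x))=\mu(\stab G{x_0})$.

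Next I would treat the arbitrary frame $\mathcal F$ analogously, but only one direction is needed. Applying~\cref{lem:minimal_frame} to $\mathcal F$ yields $x_1\in\orb Gx$ with $\stab G{x_1}\subseteq\mathcal F(x_1)$, so monotonicity of $\mu$ gives $\mu(\mathcal F(x_1))\ge\mu(\stab G{x_1})$. Writing $x_1=m\cdot x$ and using equivariance together with left-invariance gives $\mu(\mathcal F(x))=\mu(\mathcal F(x_1))\ge\mu(\stab G{x_1})$.

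It then remains to connect the two stabilizers. Since $x_0,x_1\in\orb Gx$ lie in a common orbit, there is $k\in G$ with $x_1=k\cdot x_0$, and~\cref{thm:stab} gives $\stab G{x_1}=k\,\stab G{x_0}\,k^{-1}$. Here I would use that $\mu$ is bi-invariant, i.e.\ $\mu(gX)=\mu(Xg)=\mu(X)$ as in the measure-space definition: applying left-invariance by $k$ and right-invariance by $k^{-1}$ yields $\mu(\stab G{x_1})=\mu(\stab G{x_0})$. Chaining the estimates gives $\mu(\mathcal F(x))\ge\mu(\stab G{x_1})=\mu(\stab G{x_0})=\mu(\hat{\mathcal F}(x))$, which is the claim. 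Throughout, measurability of the auxiliary sets (the stabilizers and the translated frames) is inherited from the hypothesized measurability of $\mathcal F(x)$ and $\hat{\mathcal F}(x)$, since left translation and conjugation are homeomorphisms of $G$ and hence preserve the Borel $\sigma$-algebra $\Sigma$.

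The main obstacle I anticipate is the equality $\hat{\mathcal F}(x_0)=\stab G{x_0}$: \cref{lem:minimal_frame} supplies only the containment $\stab G{x_0}\subseteq\hat{\mathcal F}(x_0)$, whereas controlling $\mu(\hat{\mathcal F}(x))$ from \emph{above} requires the reverse inclusion, which must be extracted from the pointwise minimality in~\cref{def:min_frame_def} by exhibiting the competing canonical frame of~\cref{thm:minimal_frame}. A secondary point requiring care is that comparing $\stab G{x_0}$ with $\stab G{x_1}$ across the orbit relies on conjugation-invariance of $\mu$, which is precisely why bi-invariance of the measure, rather than mere left-invariance, is used.
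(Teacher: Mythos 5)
Your proof is correct and follows essentially the same route as the paper's: both reduce $\mu(\hat{\mathcal F}(x))$ and $\mu(\mathcal F(x))$ to measures of stabilizers at orbit representatives obtained from \cref{lem:minimal_frame} and \cref{thm:minimal_frame}, relate the two stabilizers by conjugation via \cref{thm:stab}, and conclude by bi-invariance and monotonicity of $\mu$. The only difference is presentational: you spell out the upgrade from $\stab G{x_0}\subseteq\hat{\mathcal F}(x_0)$ to equality (by exhibiting the competing canonical frame and invoking minimality), a step the paper asserts more tersely by citing \cref{thm:minimal_frame}.
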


        \begin{proof}
        By \cref{lem:minimal_frame,thm:minimal_frame}, there exists $h_0,h_1\in G$ such that $\text{Stab}_{G}(h_0\cdot x) \subseteq \mathcal{F}(h_0\cdot x)$ and $\text{Stab}_G(h_1\cdot x) = \hat{\mathcal{F}}(h_1\cdot x)$. Define $\hat{h}\coloneq h_1h_0^{-1}$, and observe that for all $g\in \stab{G}{h_1\cdot x}$, $\hat{h}^{-1}g\hat{h}\in \stab G{h_0\cdot x}$. Therefore, $\hat{h}^{-1}\stab{G}{h_1\cdot x}\hat{h}\subseteq \stab G{h_0\cdot x}$, and thus, $\hat{h}^{-1}\stab{G}{h_1\cdot x}\hat{h}\subseteq \mathcal{F}(h_0\cdot x)$. From the $G$-invariance of $\mu$ and $G$-equivariance of $\hat{\mathcal F}$ and $\mathcal F$,
        % \begin{align*}
        %     \mu(\hat{\mathcal F}(x)) & =\mu(h_1\hat{\mathcal F}(x)) =\mu(\hat{\mathcal F}(h_1\cdot x)) = \mu(\stab G{h_1\cdot x})
        %     \\
        %      & =\mu(\hat{h}^{-1}\stab G{h_1\cdot x}\hat{h})
        %      \\
        %      & \le\mu(\mathcal F(h_0\cdot x))=\mu(h_0\mathcal F(x))=\mu(\mathcal F(x)).
        % \end{align*}
        % \textcolor{orange}{Should we change line for each equation}
        \begin{align*}
            \mu(\hat{\mathcal F}(x)) & =\mu(\hat{\mathcal F}(h_1\cdot x)) =\mu(\hat{h}^{-1}\stab G{h_1\cdot x}\hat{h})
             \\
             & \le\mu(\mathcal F(h_0\cdot x))=\mu(\mathcal F(x)).
        \end{align*}
        \end{proof}

\colorblack

\section{Matrix Decomposition}
\label{sec:matrix_decomposition}

This section presents four matrix decompositions: QR decomposition, eigendecomposition, polar decomposition, and Jordan decomposition. These decompositions are employed to obtain canonical forms of a matrix relative to distinct groups. Each decomposition is analyzed for its unique properties and applications in transforming a given matrix into its canonical form.

\subsection{Generalized QR Decomposition}
\label{sec:qr_decomposition}
% \note{Fix margins}
\begin{algorithm}
\caption{Generalized Gram-Schmidt Orthogonalization}
\label{alg:qr}
\begin{algorithmic}[1]
    \STATE {\bfseries Input:} matrix $\mA$, metric $\eta$
    \STATE $d, m \gets \mA\text{.shape}$
    \STATE $\mQ \gets \mathbf{0}$
    \STATE $\mD \gets\mathbf{0}$
    \STATE $\mR \gets\mathbf{0}$

    \FOR{$j = 1$ to $n$}
        \STATE $\vv \gets \mA_{:,j}$

        \FOR{$i = 1$ to $j-1$}
            \STATE $c \gets (\mA_{:,j}^T \eta \mQ_{:,i}) / (\mQ_{:,i}^T\eta \mQ_{:,i})$
            \STATE $\vv \gets \vv - c \mQ_{:,i}$
        \ENDFOR

        \STATE $\text{norm} \gets \vv^T \eta \vv$
        \IF{$\text{norm} = 0$}
            \STATE \textbf{raise} Exception(``Cannot normalize a null vector.'')
        \ENDIF
        \STATE $\mQ_{:,j} \gets \vv / \sqrt{\lvert \text{norm} \rvert}$
        \STATE $\mD_{j, j} \gets \text{sign}(\mQ_{:,j}^T\eta\mQ_{:,j})$
        \FOR{$i = 1$ to $j$}
            \STATE $\mR_{ij} \gets \mQ_{:,i}^T \eta \mA_{:,j}$
            \IF{$i = j$ and $\mR_{ij} < 0$}
                \STATE $\mQ_{:,i} \gets -\mQ_{:,i}$
                \STATE $\mR_{ij} \gets -\mR_{ij}$
            \ENDIF
        \ENDFOR
    \ENDFOR
    \STATE \textbf{return} $\mQ, \mD, \mR$
\end{algorithmic}
\end{algorithm}

QR decomposition is a decomposition of a matrix \(\mP\in\mathbb{R}^{d\times n}\) into a product \(\mP = \mQ\mR \) of an orthonormal matrix \(\mQ \in \mathrm{O}(d)\) and an upper triangular matrix \(\mR\). And the uniqueness and \(\mathrm{O}(d)\) of \(\mR\) is guaranteed by full column rank of \(\mP\). In this paper, we propose a generalized QR decomposition of \(\mP = \hat{\mQ}\hat{\mR}\) with full column rank, which is achieved by our generalized Gram-Schmidt orthogonalization~\cref{alg:qr}. Here, \(\hat{\mQ} \in G_\eta (d)\) is a \(d\) by \(d\) orthonormal matrix belonging to the linear algebraic group with equation \(\mO^T\eta\mO = \eta\), where \(\eta\) is a diagonal matrix with entires \(\pm 1\), and \(\hat{\mR}\) is a unique matrix serving as canonical form. As in our frame averaging application, \(\mP\) is derived from the induced \(G_\eta(d)\)-set and has full column rank, we have \(d\ge n\). We further divide it into \(d = n\) and \(d > n\) case. To introduce our method properly, we first recap the traditional QR decomposition in \(d = n\) case.

\paragraph{Case I. \(d = n\).} The traditional Gram-Schmidt process for full-rank matrix \(\mP\in \mathbb{R}^{d\times d}\) has a form of 
\begin{equation}
\label{eqn:gs}
\begin{aligned}
\vu_{1} &= \vv_{1}, & \ve_{1} &= \frac{\vu_{1}}{\|\vu_{1}\|} \\
\vu_{2} &= \vv_{2} - \text{proj}_{\vu_{1}}(\vv_{2}), & \ve_{2} &= \frac{\vu_{2}}{\|\vu_{2}\|} \\
\vu_{3} &= \vv_{3} - \text{proj}_{\vu_{1}}(\vv_{3}) - \text{proj}_{\vu_{2}}(\vv_{3}), & \ve_{3} &= \frac{\vu_{3}}{\|\vu_{3}\|} \\
& \vdots && \vdots \\
\vu_{d} &= \vv_{d} - \sum_{j=1}^{d-1}\text{proj}_{\vu_{j}}(\vv_{d}), & \ve_{d} &= \frac{\vu_{d}}{\|\vu_{d}\|}.
\end{aligned}
\end{equation}

where \(\text{proj}_{\vu}(\vv) = \frac{\langle \vu, \vv \rangle}{\langle \vu, \vu\rangle} \vu \) denoting the projection from \(\vv\) to \(\vu\) and we obtain \(\mQ = \left[\ve_1,\cdots,\ve_d\right]\). Conversely, each vector \(v_i\) of \(\mP\) can be written  as

\begin{equation}
    \vv_{i} =\sum _{j=1}^{i}\left\langle \ve_{j},\vv_{i}\right\rangle \ve_{j}
\end{equation}

Therefore, the element of \(\mR\) in QR decomposition can be represented as \(\mR_{ij} = \langle \ve_{i}, \vv_{j}\rangle\). One of the ambiguities of QR decomposition comes from the non-unique signs of \(\vv_i\), as by flipping the signs of \(\vv_i\), \(\mQ\) is still an orthogonal matrix. This can be solved by enforcing the sign of each diagonal element \(\mR_{ii}\) to be positive.

In our generalized QR decomposition, the inner product \(\langle \vu_i,\vu_i \rangle\) is not always positive, and this leads to an undefined result of \(\|\vu_{i}\|\). To remedy this situation, we define \(\|\vu_{i}\| = \sqrt{|\langle\vu_i,\vu_i\rangle|}\) and when constructing \(\mR\), we introduce \(\mD_\eta = \text{diag}\left(\langle\ve_1,\ve_1\rangle, \langle\ve_2,\ve_2\rangle,\cdots, \langle\ve_d,\ve_d\rangle \right)\) so that \(\mQ^T \eta \mQ = \mD_\eta\), and each vector \(\vv_i\) of \(\mP\) can be constructed by \(\mQ\mD_\eta\mR\) as
\begin{equation}
\begin{aligned}
    \vv_{i} &= \sum_{j=1}^{i} \langle\ve_j,\ve_j\rangle\left\langle \ve_{j},\vv_{i}\right\rangle \ve_{j} \\
    &= \sum_{j=1}^{i} \langle\ve_j,\ve_j\rangle \left\langle \frac{\vu_{j}}{\|\vu_{j}\|},\vv_{i}\right\rangle \frac{\vu_{j}}{\|\vu_{j}\|} \\
    &= \sum_{j=1}^{i} \frac{\left\langle \ve_{j},\ve_{j}\right\rangle}{|\left\langle \vu_{j},\vu_{j}\right\rangle|}\left\langle \vu_{j},\vv_{i}\right\rangle \vu_{j}\\
    &= \sum_{j=1}^{i} \frac{\left\langle \vu_{j},\vv_{i}\right\rangle}{\left\langle \vu_{j},\vu_{j}\right\rangle} \vu_{j},
\end{aligned}
\end{equation}
which is equivalent to~\cref{eqn:gs}, suggesting the validity of decomposition \(\mP = \mQ\mD_\eta\mR\). By enforcing the sign of each diagonal element \(\mR_{ii}\) to be positive, we can obtain a unique \(\mR\). Although \(\mQ\) has \(d\) orthonormal vectors with respect to metric \(\eta\), \(\mQ^T\eta\mQ\) is not necessarily equal to \(\eta\). For instance, for Gram-Schmidt orthogonalization with respect to the Minkowski metric \(\eta = \text{diag}(-1, 1, 1, 1)\), the first vector used for orthogonalization might be space-like (i.e., vectors with positive self inner products), which leads to the first diagonal element of \(\mQ^T\eta\mQ\) be \(1\) instead of \(-1\). On the other hand, since \(\mQ\) has \(d\) orthonormal vectors forming a complete basis for \(d\)-dimensional vector space with respect to metric \(\eta\), there must be the equal counts of \(\pm 1\) between \(\mD_\eta\) and \(\eta\). Consider a permutation \(\mS\) uniquely determined by~\cref{alg:permutation} that permutes columns of \(\mQ\) such that self inner products of \(\mQ\mS \) match signatures in \(\eta\), and subsequently \(\mQ\mS\in G_\eta(d)\), i.e., \(\mS^T\mQ^T\eta\mQ\mS = \eta\), and the QR decomposition can be written as \(\mP =\mQ\mS\mS^T \mD_\eta \mR\). Let \(\hat{\mQ} = \mQ\mS\) and \(\hat{\mR} = \mS^T \mD_\eta\mR\), we obtain the generalized QR decomposition \(\mP = \hat{\mQ}\hat{\mR}\).

\paragraph{Case II. \(d > n\).} As \(d>n\), there exist indeterminant \(d - n\) vectors in Gram-Schmidt orthogonalization. We define the generalized QR decomposition as 
\begin{equation}
    \mP = \hat{\mQ}\hat{\mR} =  \mQ\mS\mS^T\mD_\eta\mR = \mQ\mD_\eta\mR  = \begin{bmatrix}\widetilde{\mQ} & \bar{\mQ}\end{bmatrix} \begin{bmatrix}\widetilde{\mD}_\eta & \\ & \bar{\mD}_\eta\end{bmatrix}  \begin{bmatrix}\widetilde{\mR} \\ \mathbf{0} \end{bmatrix} = \widetilde{\mQ}\widetilde{\mD}_\eta \widetilde{\mR},
\end{equation}
where \(\widetilde{\mQ}\) is a \(d\times n\) matrix and \(\bar{\mQ}\) is a \(d\times (d-n)\) matrix both with all column orthonormal with respect to \(\eta\), \(\widetilde{\mD}_\eta\) and \(\bar{\mD}_\eta\) are diagonal matrices with entries of the self inner product of \(\widetilde{\mQ}\) and \(\bar{\mQ}\), respectively, and \(\widetilde{\mR}\) is a \(n\times n\) full-rank upper triangular matrix. The creation of \(\widetilde{\mQ}\) and \(\widetilde{\mR}\) is the same as \(d = n\) case, and \(\bar{\mQ}\) is produced by randomly choosing \(d - n\) vectors that are orthonormal to each other and to \(\widetilde{\mQ}\), and \(\mS\) is a permutation matrix that permutes columns of \(\mQ\) such that \(\mQ\mS\in G_\eta(d)\). Conceptually, \(\mS\) can be represented by two permutation matrices \(\widetilde{S}\) and \(\mS'\) such that \(\mS = \widetilde{\mS}\mS'\) where \(\widetilde{S}\) is produced by~\cref{alg:permutation} permuting the determinate column vectors in \(\mQ\) and \(\mS'\) is a non-unique permutation matrix subsequently permuting those non-unique column vectors of \(\mQ\).

To show that generalized QR decomposition can produce a canonical form of the linear algebraic group \(G_\eta(d)\), we first show the uniqueness of \(\mQ\) and \(\mR\) when \(d = n\) and uniqueness of \(\widetilde{\mQ}\) and \(\widetilde{\mR}\) when \(d > n\) by the below theorem. As \(\mS\mS^T = \mI_n\) and the below theorem is not relevant to \(\mS\), we omit the \(\mS\) here and discuss it in the next theorem.

\begin{theorem}[Uniqueness of QR Decomposition]
\label{thm:qr_unique}
Let \(\mP\in \mathbb{R}^{d\times n}, d\ge n\) equipped with a diagonal matrix \(\eta\) with entries \(\pm 1\) and all \(\mP\)'s columns non-null, and let the generalized QR decomposition \(\mP = \mQ\mD_\eta\mR\) where \(\mQ\) contains \(d\) orthonormal vectors with respect to inner product metric \(\eta\), \(\mD_\eta\) is a diagonal matrix containing self inner products of column vectors in \(\mQ\), and \(\mR\) is an upper triangular matrix and all diagonal values of \(\mR\) are non-negative. The \(\mR\) is unique if \(\mP\) is full-rank. Additionally, if \(d = n\) and \(\mP\) is a full-rank, then both \(\mQ\) and \(\mR\) are unique.
\end{theorem}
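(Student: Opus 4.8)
The plan is to eliminate the (non-unique) orthogonal factor $\mQ$ entirely by passing to the symmetric matrix $\mP^T\eta\mP$, thereby reducing the claim about $\mR$ to the uniqueness of a generalized $LDL^T$-type factorization with an indefinite signature, and then to recover $\mQ$ afterward in the square case as a simple corollary. Throughout I would use that the hypotheses (all columns non-null, the algorithm having succeeded) guarantee the decomposition $\mP=\mQ\mD_\eta\mR$ exists, so that when comparing two decompositions the comparison is non-vacuous.

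First I would record the algebraic consequence of $\eta$-orthonormality of $\mQ$, namely $\mQ^T\eta\mQ=\mD_\eta$, where $\mD_\eta$ is diagonal with entries $\pm1$. Then
\[
\mP^T\eta\mP=\mR^T\mD_\eta^T\,\mQ^T\eta\mQ\,\mD_\eta\mR=\mR^T\mD_\eta^{3}\mR=\mR^T\mD_\eta\mR,
\]
using $\mD_\eta^2=\mI$ since its diagonal entries are $\pm1$. When $d>n$, writing $\mR=[\widetilde{\mR};\mathbf{0}]$ and $\mD_\eta=\mathrm{diag}(\widetilde{\mD}_\eta,\bar{\mD}_\eta)$ collapses the zero rows and gives $\mP^T\eta\mP=\widetilde{\mR}^T\widetilde{\mD}_\eta\widetilde{\mR}$. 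In either case the claim about $\mR$ (respectively $\widetilde{\mR}$, the only nontrivial block) becomes: the factorization $A=\mR^T\mD_\eta\mR$ of the symmetric matrix $A\coloneq\mP^T\eta\mP$, with $\mR$ upper triangular with positive diagonal and $\mD_\eta$ a diagonal sign matrix, is unique.

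Next I would prove that uniqueness directly. Given two such factorizations $\mR_1^T\mD_1\mR_1=\mR_2^T\mD_2\mR_2$, each $\mR_i$ is invertible (upper triangular, positive diagonal), so $\mU\coloneq\mR_2\mR_1^{-1}$ is again upper triangular with positive diagonal and satisfies $\mU^T\mD_2\mU=\mD_1$. Rewriting this as $\mD_2\mU=(\mU^{-1})^T\mD_1$ exhibits $\mD_2\mU$ as both upper triangular (left side) and lower triangular (right side), hence diagonal; since $\mD_2$ is an invertible diagonal, $\mU$ is diagonal. Then $\mU^2\mD_2=\mD_1$, so each $\mU_{ii}^2=(\mD_1)_{ii}(\mD_2)_{ii}\in\{\pm1\}$ and, being positive, must equal $1$; thus $\mU=\mI$ and $\mD_1=\mD_2$, which gives $\mR_1=\mR_2$ and shows $\mD_\eta$ is forced as well. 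Finally, for $d=n$, the square factor $\mR$ is invertible and $\mD_\eta^{-1}=\mD_\eta$, so $\mP=\mQ\mD_\eta\mR$ yields $\mQ=\mP\mR^{-1}\mD_\eta$, which is determined once $\mR$ and $\mD_\eta$ are; hence $\mQ$ is unique too.

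The main obstacle I anticipate is precisely the middle step: unlike the classical QR/Cholesky uniqueness argument, positive-definiteness of $A=\mP^T\eta\mP$ is unavailable because the pseudo-inner product is indefinite, so I cannot simply invoke uniqueness of a Cholesky factor. The device that bypasses definiteness is the simultaneous upper/lower triangularity of $\mD_2\mU$, which pins $\mU$ down as diagonal and, together with positivity of its diagonal, forces both $\mU=\mI$ and the equality of the sign matrices. Everything else is routine bookkeeping, and the $d>n$ case adds nothing essential beyond the observation that the arbitrary extra columns $\bar{\mQ}$ are annihilated in $\mP^T\eta\mP$, which is also why the theorem can only claim uniqueness of $\mQ$ in the square case.
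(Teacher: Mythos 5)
Your proof is correct, and while it rests on the same core trick as the paper's --- a matrix that is simultaneously upper and lower triangular must be diagonal, after which positivity of the diagonal of $\mR$ kills the residual sign freedom and forces the two sign matrices to agree --- you organize the argument around a genuinely different reduction. The paper keeps both orthogonal factors in play: from $\mQ\mD_\eta\mR=\mQ'\mD_\eta'\mR'$ it derives $\mQ'^T\eta\mQ=\mR'\mR^{-1}\mD_\eta$ and its transpose, concludes these are diagonal, and then chases the identity $\mD^2=\mD_\eta\mD_\eta'$ back to $\mR=\mR'$ and finally $\mQ=\mQ'$; it then repeats the whole computation with the tilde blocks for the $d>n$ case. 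You instead eliminate $\mQ$ at the outset by passing to the Gram matrix $\mP^T\eta\mP=\mR^T\mD_\eta\mR$ and proving uniqueness of this indefinite $LDL^T$-type factorization via $\mU=\mR_2\mR_1^{-1}$. This buys a uniform treatment of $d=n$ and $d>n$ (the zero rows of $\mR$ and the arbitrary columns $\bar{\mQ}$ are annihilated automatically, rather than requiring a separate block computation), and it makes transparent exactly why $\mQ$ can only be recovered in the square case, via $\mQ=\mP\mR^{-1}\mD_\eta$. The one hypothesis you should state explicitly rather than leave implicit is that full column rank upgrades the ``non-negative diagonal'' of $\mR$ in the theorem statement to the strictly positive diagonal your invertibility and sign arguments use; with that noted, the argument is complete.
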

\begin{proof}
We first establish the uniqueness of the generalized QR decomposition for a full-rank square matrix \(\mP\) that does not contain any null vectors. And we aim to prove that both \(\mQ\) and \(\mR\) in the decomposition \(\mP = \mQ\mD_\eta\mR\) are unique. Given the inner product property in \(\eta\), if \(\langle\vu, \vv\rangle = 0\), then \(\langle\vu, -\vv\rangle = 0\). This indicates that the only freedom in the Gram-Schmidt process is the sign of each produced orthonormal vector. Suppose there exist alternate matrices \(\mQ',\mD_\eta',\mR'\) such that 
\begin{equation}
\label{eqn:d}
    \mQ\mD_\eta\mR = \mQ'\mD_\eta'\mR'.
\end{equation}
Given \(\mQ^T\eta\mQ = \mD_\eta\) and \(\mQ'^T\eta\mQ' = \mD_\eta'\), it follows that \(\mQ^{-1} = \mD_\eta\mQ^T\eta\) and \(\mQ'^{-1} = \mD_\eta'\mQ'^T\eta\). Rearranging, we obtain
\begin{equation}
    \mQ'^T \eta \mQ = \mR'\mR^{-1}\mD_\eta
\end{equation}
and
\begin{equation}
    \mQ^T \eta \mQ' = \mR\mR'^{-1}\mD_\eta',
\end{equation}
where both right-hand sides are upper triangular matrices. Taking transposes, we find
\begin{equation}
    \mQ^T \eta \mQ' = (\mR'\mR^{-1}\mD_\eta)^T
\end{equation}
and
\begin{equation}
    \mQ'^T \eta \mQ = (\mR\mR'^{-1}\mD_\eta')^T,
\end{equation}
where both right-hand sides are lower triangular matrices. Thus, \(\mQ^T \eta \mQ'\) and \(\mQ'^T \eta \mQ\) are diagonal matrices. Letting \(\mD = \mR'\mR^{-1}\), we derive
\begin{equation}
    \mQ\mD_\eta = \mQ'\mD_\eta'\mD
\end{equation}
and subsequently,
\begin{equation}
    \mQ = \mQ'\mD_\eta'\mD\mD_\eta.
\end{equation}
Since diagonal matrices commute under multiplication, it follows that
\begin{align}
    \mQ^T\eta\mQ &=\mD_\eta \mD \mD_\eta' \mQ'^T \eta \mQ'\mD_\eta'\mD\mD_\eta \\
    &= \mD_\eta \mD \mD_\eta' \mD_\eta'\mD_\eta'\mD\mD_\eta \\
    &= \mD^2 \mD_\eta' \\
    &= \mD_\eta,
\end{align}
implying \(\mD^2 = \mD_\eta\mD_\eta'\). Since \(\mD^2\) has positive diagonal values and \(\mD_\eta\), \(\mD_\eta'\) have entries of \(\pm 1\), we get
\begin{equation}
    \mD^2 = \mD_\eta\mD_\eta' = \mI_d.
\end{equation}
Thus, \(\mD_\eta = \mD_\eta'\) and, as \(\mD = \mR'\mR^{-1}\), it follows that \(\mD\mR = \mR'\). Requiring positive diagonals for \(\mR\) and \(\mR'\) leads to \(\mD = \mI_d\) and \(\mR = \mR'\).~\cref{eqn:d} then implies \(\mQ' = \mQ\), establishing the uniqueness of \(\mQ\) and \(\mR\) in the decomposition for a non-null full-rank square matrix.

For the case where \(\mP\) is not square but has non-null, linearly independent columns, let \(d > n\) and \(\text{rank}(\mP) = n\). The generalized QR decomposition is
\begin{equation}
    \mP =  \mQ\mD_\eta\mR = \begin{bmatrix}\widetilde{\mQ} & \bar{\mQ}\end{bmatrix} \begin{bmatrix}\widetilde{\mD}_\eta & \\ & \bar{\mD}_\eta\end{bmatrix}  \begin{bmatrix}\widetilde{\mR} \\ \mathbf{0} \end{bmatrix} = \widetilde{\mQ}\widetilde{\mD}_\eta \widetilde{\mR},
\end{equation}
where \(\widetilde{\mQ}\) is a \(d\times n\) orthonormal matrix with respect to \(\eta\), \(\widetilde{\mD}_\eta\) is a \(n\times n\) matrix representing self inner products in \(\widetilde{\mQ}\), and \(\widetilde{\mR}\) is a \(n\times n\) full-rank upper triangular matrix. Assume there exists \(\widetilde{\mQ}', \widetilde{\mD}_\eta', \widetilde{\mR}'\) such that
\begin{equation}
\label{eqn:d_rank_deficient}
    \mP = \widetilde{\mQ}\widetilde{\mD}_\eta \widetilde{\mR} = \widetilde{\mQ}'\widetilde{\mD}_\eta' \widetilde{\mR}'.
\end{equation}
Given \(\widetilde{\mQ}^T\eta\widetilde{\mQ} = \widetilde{\mD}_\eta\) and \(\widetilde{\mQ}'^T\eta\widetilde{\mQ}' = \widetilde{\mD}_\eta'\), analogous manipulations yield 
\begin{equation}
    \widetilde{\mQ}'^T\eta \widetilde{\mQ} = \widetilde{\mR}'\widetilde{\mR}^{-1}\widetilde{\mD}_\eta
\end{equation}
and
\begin{equation}
    \widetilde{\mQ}^T\eta \widetilde{\mQ}' = \widetilde{\mR}\widetilde{\mR}'^{-1}\widetilde{\mD}_\eta'.
\end{equation}
Setting \(\widetilde{\mD} = \widetilde{\mR}'\widetilde{\mR}^{-1}\) and following the steps for \(d = n\), we deduce that \(\widetilde{\mD} = \mI_n\), \(\widetilde{\mR}' = \widetilde{\mR}\), and \(\widetilde{\mD}_\eta = \widetilde{\mD}_\eta'\).~\cref{eqn:d_rank_deficient} then implies \(\widetilde{\mQ} = \widetilde{\mQ}'\), confirming the uniqueness of \(\mR = \begin{bmatrix}\widetilde{\mR} \\ \mathbf{0} \end{bmatrix}\) and the \(\widetilde{\mQ}\) part of \(\mQ = \begin{bmatrix}\widetilde{\mQ} & \bar{\mQ} \end{bmatrix}\).

In conclusion, the uniqueness of the generalized QR decomposition is established for both \(d = n\) and \(d > n\) cases, completing the proof.

\end{proof}

\begin{algorithm}
\caption{Generate Permutation \(\mS\)}
\label{alg:permutation}
\begin{algorithmic}
\STATE \(\mS \gets \mI_d\)
\IF{$d = n$}
\STATE \(\va \gets \) diagonal of \(\mD_\eta\)
\ELSE
\STATE \(\va \gets \) diagonal of \(\widetilde{\mD}_\eta\)
\ENDIF
\STATE \(\vb \gets\) diagonal of \(\eta\)
\STATE Right padding \(\va\) with \(0\) to the same length as \(\vb\)
\FOR{\(i = 1\) \textbf{to} \( d \)}
    \IF{\(\va_i \neq 0\) \AND \(\va_i \ne \vb_i\)}
        \FOR{\(j = i + 1\) \textbf{to} \( d \)}
            \IF{\(\va_j \ne \vb_j\) \AND \(\va_j = \vb_i\)}
                \STATE \(\mS' \gets \mI_d\)
                \STATE \( \mS_{i,i}' \gets 0 \)
                \STATE \( \mS_{j,j}' \gets 0 \)
                \STATE \( \mS_{i,j}' \gets 1 \)
                \STATE \( \mS_{j,i}' \gets 1 \)
                \STATE \( \va \gets \va \mS' \)
                \STATE \( \mS \gets \mS \mS' \)
                \STATE \textbf{break}
            \ENDIF
        \ENDFOR
    \ENDIF
\ENDFOR
\STATE \textbf{return} $\mS^T$
\end{algorithmic}
\end{algorithm}

Then we focus on the \(G_\eta\)-invariance of the QR decomposition incorporating the permutation \(\mS\). Given the generalized QR decomposition \(\mP =\mQ\mD_\eta\mR = \mQ\mS\mS^T \mD_\eta \mR\), we decompose the permutation \(\mS = \widetilde{\mS}\mS'\) where the permutation \(\widetilde{\mS}\) permutes those determinate orthonormal vectors in \(\mQ\) in a deterministic way by~\cref{alg:permutation} and \(\mS'\) permutes those non-unique orthonormal vectors in \(\mQ\) (and if there does not exist non-unique orthonormal vectors then \(\mS' = \mI_n\).)

\begin{theorem}[\(G_\eta\)-Invariance of QR Decomposition]
\label{thm:qr_inv}
Let \(\mP\in \mathbb{R}^{d\times n}\) equipped with a diagonal matrix \(\eta\) with entries \(\pm 1\) and all \(\mP\)'s columns non-null, and let generalized QR decomposition \(\mP = \hat{\mQ}\hat{\mR} = \mQ\mS\mS^T\mD_\eta\mR\) defined above, then \(\hat{\mR}=\mS^T\mD_\eta\mR\) is \(G_\eta\)-invariant if all columns in \(\mP\) are linearly independent.
\end{theorem}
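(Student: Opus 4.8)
The plan is to reduce everything to the single structural fact that elements of $G_\eta(d)$ preserve the pseudo-inner product: for $\mO$ with $\mO^T\eta\mO=\eta$ and any $\vx,\vy\in\R^d$, we have $\langle\mO\vx,\mO\vy\rangle=\vx^T\mO^T\eta\mO\vy=\vx^T\eta\vy=\langle\vx,\vy\rangle$. Every quantity computed in \cref{alg:qr} — the projection coefficients, the signed norms $\langle\vu_j,\vu_j\rangle$, the entries $\mR_{ij}=\langle\ve_i,\vv_j\rangle$, and the diagonal sign-flip tests — is a function of these inner products alone. First I would show, by induction on the Gram–Schmidt step $j$, that running the procedure on $\mO\mP$ produces $\mO\vu_j$ and $\mO\ve_j$ in place of $\vu_j,\ve_j$ on the determinate columns, while leaving $\mD_\eta$ and $\mR$ literally unchanged (the sign-flip decision at step $i$ depends on $\mR_{ii}=\langle\ve_i,\vv_i\rangle$, which is preserved, so the same flips occur on both sides). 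Equivalently and more compactly, the pre-permutation form $\mP=\mQ\mD_\eta\mR$ gives $\mO\mP=(\mO\mQ)\mD_\eta\mR$, and $(\mO\mQ)^T\eta(\mO\mQ)=\mQ^T\eta\mQ=\mD_\eta$, so $\mO\mQ$ is again $\eta$-orthonormal with the same self-inner-product diagonal and the same upper-triangular factor.

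Next I would invoke the uniqueness established in \cref{thm:qr_unique}. Since all columns of $\mP$ are non-null and linearly independent, $\text{rank}(\mP)=n$, so $\mR$ (together with $\mD_\eta$, and in the square case $\mQ$ itself) is the unique factor with non-negative diagonal. Because $\mO\mP=(\mO\mQ)\mD_\eta\mR$ is a valid decomposition of exactly this required form, uniqueness forces the $\mD_\eta$ and $\mR$ recovered from $\mO\mP$ to coincide with those recovered from $\mP$; this rules out the possibility that the covariance computation landed on a non-canonical branch (e.g.\ a different sign choice). It then remains only to track the permutation: the matrix $\widetilde{\mS}$ produced by \cref{alg:permutation} is a deterministic function of the signature diagonal $\widetilde{\mD}_\eta$ and of the fixed metric $\eta$, and since $\widetilde{\mD}_\eta$ is invariant under the action, so is $\widetilde{\mS}$. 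Hence $\hat{\mR}=\mS^T\mD_\eta\mR$ is identical for $\mP$ and $\mO\mP$, which is the claimed $G_\eta$-invariance.

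The hard part will be the case $d>n$, where the decomposition is genuinely non-unique through the freely chosen orthonormal completion $\bar{\mQ}$ and the accompanying permutation $\mS'$, so running the algorithm on $\mO\mP$ need not return $\mO\bar{\mQ}$. The key point to argue carefully is that these non-unique ingredients touch only the appended $d-n$ columns of $\hat{\mQ}$ and the corresponding all-zero rows of $\mR=\begin{bmatrix}\widetilde{\mR}\\\mathbf{0}\end{bmatrix}$. Concretely, $\widetilde{\mS}$ permutes only among the determinate indices — the swap condition in \cref{alg:permutation} requires a partner with matching \emph{non-zero} signature $\va_j=\vb_i\neq0$, which forces $\va_j\neq0$ and hence a determinate partner — while $\mS'$ permutes only among the zero rows of $\mD_\eta\mR$ and therefore acts trivially on $\mS^T\mD_\eta\mR$. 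Thus $\hat{\mR}$ depends solely on the invariant data $\widetilde{\mD}_\eta,\widetilde{\mR},\widetilde{\mS}$, and isolating these invariant pieces cleanly from the non-unique ones $(\bar{\mQ},\mS')$ is the crux of completing the proof.
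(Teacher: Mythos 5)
Your proposal is correct and follows essentially the same route as the paper's proof: observe that $(\mO\mQ)\mD_\eta\mR$ is again a valid generalized QR decomposition of $\mO\mP$ with the same $\mD_\eta$ and $\mR$, invoke the uniqueness from \cref{thm:qr_unique} under the full-column-rank and non-null hypotheses, note that $\widetilde{\mS}$ is a deterministic function of the invariant signature diagonal, and dispose of the $d>n$ ambiguity by showing $\mS'$ only permutes the zero rows of $\mD_\eta\mR$. Your added detail on the induction through the Gram--Schmidt steps and on why the swap condition in \cref{alg:permutation} only pairs determinate columns is a slightly more explicit rendering of the same argument.
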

\begin{proof}

We consider two cases based on the dimensions of \(\mP \in \mathbb{R}^{d\times n}, d\ge n\).

\textbf{Case 1: \(d = n\).} In the generalized QR decomposition \(\mP = \mQ\mS\mS^T\mD_\eta\mR = \mQ\mD_\eta\mR\), for any \(\mQ_\eta \in G_\eta\), which preserves the inner product with respect to \(\eta\), the matrix \(\mQ_\eta \mQ\) comprises \(d\) orthonormal vectors. The decomposition \((\mQ_\eta\mQ)\mD_\eta \mR\) serves as a generalized QR decomposition of \(\mQ_\eta\mP\). Due to the uniqueness of this decomposition, \(\mQ_\eta\mP\) and \(\mP\) share the same \(\mR\) matrix, indicating that \(\mR\) is invariant under \(G_\eta\). Moreover, as \(\mP = \mQ\mS\mS^T\mD_\eta\mR\), it follows that \(\mQ_\eta \mP = \mQ_\eta\mQ\mS\mS^T\mD_\eta\mR\). Given that \(\mS\) is determined by the self inner product of \(\mQ\) and \(\mQ_\eta\) preserves this inner product, \(\mQ_\eta\mQ\) yields the same \(\mD_\eta\) and, consequently, the same permutation matrix \(\mS\). Thus, both \(\mD_\eta\) and \(\mS\) are invariant under \(G_\eta(d)\), leading to the invariance of \(\mS^T\mD_\eta\mR\) to \(G_\eta(d)\).

\textbf{Case 2: \(d > n\).} For the generalized QR decomposition \(\mP = \widetilde{\mQ}\widetilde{\mD}_\eta\widetilde{\mR}\), a similar argument applies. With \(\mQ_\eta \in G_\eta(d)\), the decomposition \((\mQ_\eta\widetilde{\mQ})\widetilde{\mD}_\eta\widetilde{\mR}\) is the generalized QR decomposition of \(\mQ_\eta \mP\), affirming the invariance of \(\widetilde{\mQ}_\eta\) and \(\widetilde{\mR}\) under \(G_\eta(d)\). As mentioned, the permutation matrix \(\mS\) can be decomposed into \(\widetilde{\mS}\mS'\). \(\widetilde{\mS}\) by~\cref{alg:permutation} depends on the self inner product of the determinate orthonormal vectors in \(\mQ\), rendering \(\widetilde{\mS}\) invariant under \(G_\eta(d)\). The matrix \(\mS'\) adjusts the non-unique orthonormal column vectors, with \(\mS^T\mD_\eta \mR = \mS'^{T}\widetilde{\mS}\mD_\eta \mR\) showing that permutations by \(\mS'\) on the non-unique diagonal elements of \(\mD_\eta\), which is eventually applied on the zero rows of \(\mR\) do not alter the outcome of \(\mS^T \mD_\eta \mR\), confirming \(\mS^T \mD_\eta \mR\) is \(G_\eta(d)\)-invariance.

These considerations for both \(d = n\) and \(d > n\) complete the proof, establishing the \(G_\eta\)-invariance of \(\hat{\mR} = \mS^T \mD_\eta \mR\).

\end{proof}

In addition, \(\hat{\mQ} = \mQ\mS\) is an element of \(G_\eta (d)\) and so \(\hat{R} = \mS^T \mD_\eta \mR\) is inside the orbit of \(\mP\), thereby serving as the canonical form. In conclusion, \(\hat{R} = \mS^T\mD_\eta\mR\) is a canonical form of \(\mP\) with respect to \(G_\eta(d)\) by the generalized QR decomposition.

\subsection{Eigendecomposition}

Eigendecomposition decomposes a matrix \( \mP \in \mathrm{Sym}(d, \mathbb{C})\)—the set of \(d \times d\) complex symmetric matrices—into a product of its eigenvalues and eigenvectors, denoted as \(\mP = \mO^* \mLambda \mO\). Here, \(\mO \in \mathrm{U}(d)\) contains eigenvectors, and \(\mLambda\) is a diagonal matrix of eigenvalues. The uniqueness of \(\mLambda\) is up to permutation, rooted in the property \(\mP \vv = \lambda\vv\), where \(\vv\) and \(\lambda\) are corresponding eigenvectors and eigenvalues. This leads to the characteristic polynomial \(p(\lambda)=\det(\mP -\lambda \mI_d) = 0\), yielding \(d\) roots. For any \(\mO'\) in \(\mathrm{U}(d)\) or \(\mathrm{SU}(d)\), the eigendecomposition of \(\mO'^*\mP\mO'\) is \(\mO'^*\mO^*\mLambda\mO\mO'\), with \(\mLambda\) remaining unique and invariant under \(\mathrm{U}(d)\) and \(\mathrm{SU}(d)\) transformations. Similarly, for \(\mP\in \mathrm{Sym}(d, \mathbb{R})\), \(\mLambda\) is invariant under \(\mathrm{O}(d)\) and \(\mathrm{SO}(d)\). Thus, \(\mLambda\) serves as a canonical form of \(\mP\) with respect to these groups through eigendecomposition.

\subsection{Polar Decomposition}

For a full-rank square matrix \(\mP\in \mathbb{C}^{d\times d}\), the polar decomposition is \(\mP = \mU\mH\), where \(\mU\) is a unitary (or orthogonal for real matrices) matrix and \(\mH\) is a positive semi-definite Hermitian (or symmetric for real matrices) matrix. This decomposition splits \(\mP\) into a rotation/reflection component, \(\mU\), and a scaling component, \(\mH\). The polar decomposition always exists and is unique when \(\mP\) is invertible. In such cases, \(\mP\) can also be expressed as \(\mP = \mU e^{\mX}\), where \(\mX\) is the unique self-adjoint logarithm of \(\mP\). Given \(\mU' \in \mathrm{U}(d)\) or \(\mU'\in\mathrm{O}(d)\), the polar decomposition of \(\mU'\mP\) is \(\mU'\mU\mH\), with \(\mH\) remaining unique and invariant. Hence, \(\mH\) represents the canonical form of \(\mP\) with respect to \(\mathrm{U}(d)\) or \(\mathrm{O}(d)\) under polar decomposition.

\subsection{Jordan Decomposition}

The Jordan canonical form is the canonical form for a square matrix \(\mP \in \mathbb{C}^{n \times n}\) under \(\mathrm{GL}(n, \mathbb{C})\). It represents \(\mP\) as a block diagonal matrix \(\mJ = \begin{bmatrix} \mJ_1 & & \\ & \ddots & \\ & & \mJ_p \end{bmatrix}\), where each \(\mJ_i\) is a Jordan block associated with a general eigenvalue \(\lambda_i\). An invertible matrix \(\mU \in \mathrm{GL}(n, \mathbb{C})\) exists such that \(\mP = \mU \mJ \mU^{-1}\), with the uniqueness of \(\mJ\) up to the order and size of Jordan blocks~\citep{horn2012matrix}. In diagonalizable cases where \(\mP = \mU\mJ\mU^{-1}\) with \(\mU\) unitary, \(\mJ\) aligns with the complex eigendecomposition, maintaining its canonical status under the unitary group \(\mathrm{U}(n)\). Thus, the Jordan decomposition offers a generalized approach for canonical forms of diagonalizable square matrices. As the Jordan canonical form can serve as the canonical form of groups \(\mathrm{GL}(n,\mathbb{C})\), it can also serve as the canonical form of its subgroup

\section{Canonical Labeling and McKay's Algorithm}
\label{sec:mckay}

An \textit{isomorphism} between two graphs is defined as a bijection between their vertex sets that preserves the adjacency relation. Specifically, two vertices \(u\) and \(v\) are adjacent in one graph if and only if their images under the bijection are adjacent in the other graph. An \textit{automorphism} of a graph is an isomorphism from the graph to itself, capturing the concept of symmetry within the graph structure. The collection of all automorphisms of a graph constitutes a group under the operation of composition, known as the \textit{automorphism group} of the graph. In applications where vertices of a graph are distinguished by certain attributes, such as color or weight, the definition of an automorphism accommodates these distinctions. Specifically, an automorphism must map vertices to vertices of the same attribute value. This consideration ensures that the automorphism respects the additional structure imposed on the graph by these attributes.

Formally, consider two real symmetric matrices \(\mA, \mB \in \mathrm{Sym}(n, \mathbb{R})\) representing the adjacency matrices of two graphs. The matrix \(\mA\) is said to be \textit{isomorphic} to \(\mB\) if and only if there exists a permutation matrix \(\mS \in \mathrm{S}_n\) satisfying
\begin{equation}
    \mS^T \mA \mS = \mB.
\end{equation}
Furthermore, the \textit{automorphism group} (or stabilizer) of a graph represented by \(\mA\) consists of all permutation matrix \(\mS\in\mathrm{S}_n\) for which
\begin{equation}
    \mS^T\mA \mS = \mA.
\end{equation}

\subsection{Individualization-Refinement Paradigm}

\citet{mckay2014practical} gives a general overview of graph isomorphism as well as the refinement-individualization paradigm to solve canonical labeling and graph automorphism. Canonical labeling is a canonicalization process in which a graph is relabeled in such a way that isomorphic graphs are identical after relabeling. By denoting this process as \(c:\mathrm{Sym}(n,\mathbb{R}) \rightarrow \mathrm{Sym}(n,\mathbb{R})\), we obtain the canonical form \(c(\mA)\) via the canonical labeling and \(\mA\) and \(\mB\) is isomorphic if and only if \(c(\mA) = c(\mB)\). 

To be concrete,~\citet{mckay2014practical} provides the canonical labeling algorithm serving to categorize a collection of objects into isomorphism classes through a search tree. An important concept within this method is the \textit{equitable partition}, wherein vertices sharing the same color have an identical count of adjacent vertices for each distinct color. The process of partition refinement iteratively subdivides its cells to achieve greater granularity. Given any initial partition, there exists a uniquely determined equitable partition that refines it with the minimal possible color count.

On this basis, the construction of a search tree is based on \textit{partition refinement}, and the nodes of the tree denote the partitions themselves. Nodes that correspond to discrete partitions, wherein each vertex is uniquely colored, constitute the leaves of the tree. For nodes that are not leaves, a color that is replicated across vertices is selected; one vertex is then individualized by applying it with a new color, and subsequent refinement to equitable partition yields a child node. Identical labeling at different leaves shows the presence of an automorphism. By defining an ordering scheme for labeled graphs (for example, lexicographical order), the greatest graph at a leaf is treated as the canonical form. However, the vast potential size of the search tree requires the implementation of pruning strategies through the node invariants. Node invariants are attributes invariant to graph labeling, for example, node degree, which can be used to further subdivide and refine the partition. 

\subsection{Employing Canonical Labeling}
\label{sec:employ_canonical_labeling}

For (undirected) graph data, the canonical labeling can be directly applied to compute the canonical graph as well as the automorphism group. For point cloud data, by our induced $\Sn$-set, we first compute the inner product matrix of the point cloud as an undirected weighted adjacency matrix. As the original nauty algorithm only works for node-colored graphs, we convert the edge-weighted graph to the unweighted counterpart by replacing each weighted edge with a colored node, where each color corresponds a certain magnitude of the inner product between two points. Then we employ the nauty algorithm we present above to obtain the canonical form and the automorphism group of the unweighted graph. Subsequently, we convert the newly generated nodes back to the weighted edges and remove the permutation of those newly generated nodes in each element of the automorphism group, then we obtain the canonical form and automorphism group of the original weighted adjacency matrix. As a result, we are able to create the frame of \(\mathrm{S}_n\) for point clouds.

In our implementation, we replicate one of the most famous graph automorphism algorithms nauty (no automoprhism, yes) by pure Python. The nauty algorithm is an implementation of the above individualization-refinement paradigm by depth-first search. To speed up the algorithm, we use just-in-time (JIT) compiler Numba to compile and optimize the code.

\section{Point Group and Minimal Frame of $\Sn \times G_\eta(d)$}
\label{sec:point_group}

A \textit{point group} is a set of symmetry operations \(g\in \mathrm{O}(d)\) mapping a set of a point cloud onto itself. \blue{In this section, we extend the definition to $G_\eta(d)$ defined in~\cref{sec:linAlgG}.} Mathematically, for a given \(d\)-rank point cloud \(\mP\in \mathbb{R}^{d \times n}\) without null and repetitive columns, the point group \(\mathcal{G}\) can be defined as:

\[
\mathcal{G} = \{\mO \in G_\eta(d) \,|\, \mO\mP = \mP \mS, \, \exists \mS \in \mathrm{S}_n\}.
\]

\colororange
Consider the following theorem:
\begin{theorem}
\label{thm:point_group_invariant}
    Given $\mP\in \mathbb{R}^{d\times n}$ with $\text{rank}(\mP) = d$ and without null and repetitive columns, and given the stabilizer \(\text{Stab}_{\mathrm{S}_n}(\mP^T\eta\mP) = \{\mS \mid \mS^T\mP^T\eta\mP\mS = \mP^T\eta\mP, \mS \in \mathrm{S}_n\}\), there exists \(\mS \in \text{Stab}_{\mathrm{S}_n}(\mP^T\eta\mP)\) such that \(\mS \neq \mI_n\) if and only if there exists an orthogonal transformation \(\mO \in G_\eta(d)\) with \(\mO \neq \mI_d\) such that \(\mO\mP = \mP\mS\).
\end{theorem}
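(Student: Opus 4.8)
The plan is to prove the two directions separately, treating the ``if'' direction as a short verification and reserving the real work for the ``only if'' direction.

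For the direction assuming a nontrivial $\mO\in G_\eta(d)$ with $\mO\mP = \mP\mS$, I would simply substitute. Since $\mO^T\eta\mO = \eta$,
$$(\mP\mS)^T\eta(\mP\mS) = \mP^T\mO^T\eta\mO\mP = \mP^T\eta\mP,$$
so $\mS\in\stab{\Sn}{\mP^T\eta\mP}$. To see that $\mS\neq\mI_n$, note that $\mS = \mI_n$ would give $(\mO-\mI_d)\mP = \mathbf{0}$; multiplying on the right by $\mP^T$ and using that $\mP\mP^T$ is invertible (as $\text{rank}(\mP) = d$) forces $\mO = \mI_d$, a contradiction.

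For the reverse direction I would build $\mO$ explicitly. Since $\text{rank}(\mP) = d$, I can choose an index set $J$ of size $d$ so that the submatrix $\mP_J\in\R^{d\times d}$ of the corresponding columns is invertible, and set $\mO\coloneq(\mP\mS)_J\,\mP_J^{-1}$, where $(\mP\mS)_J$ denotes the columns of $\mP\mS$ indexed by $J$. The first claim is that $\mO\in G_\eta(d)$: writing $\mG\coloneq\mP^T\eta\mP$ and using $\mS\in\stab{\Sn}{\mG}$, the $J\times J$ submatrix of $\mG$ equals that of $(\mP\mS)^T\eta(\mP\mS)$, i.e.\ $\mP_J^T\eta\mP_J = (\mP\mS)_J^T\eta(\mP\mS)_J$. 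Substituting this into $\mO^T\eta\mO = (\mP_J^{-1})^T(\mP\mS)_J^T\eta(\mP\mS)_J\mP_J^{-1}$ collapses it to $\eta$. In particular $\mO$ is invertible, so $(\mP\mS)_J = \mO\mP_J$ is invertible as well, and its columns form a basis of $\R^d$.

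The key remaining step, which I expect to be the main obstacle, is to upgrade $\mO\mP_J = (\mP\mS)_J$ to the full identity $\mO\mP = \mP\mS$ on all $n$ columns. Here I would use that the Gram matrix determines the coordinates of every column in the basis of $J$-columns: for column $i$, writing $\vp_i = \mP_J\vc_i$ and taking pseudo-inner products against the basis columns gives $\mG_{J,i} = \mG_{J,J}\vc_i$, hence $\vc_i = \mG_{J,J}^{-1}\mG_{J,i}$ with $\mG_{J,J} = \mP_J^T\eta\mP_J$ invertible. Since $\mP\mS$ has the same Gram matrix and its $J$-columns are a basis, the analogous coordinates coincide, so $(\mP\mS)_i = (\mP\mS)_J\vc_i$. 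Then $\mO\vp_i = \mO\mP_J\vc_i = (\mP\mS)_J\vc_i = (\mP\mS)_i$, giving $\mO\mP = \mP\mS$. Finally, $\mO\neq\mI_d$ follows because $\mO = \mI_d$ would give $\mP\mS = \mP$, which forces a repeated column whenever $\mS\neq\mI_n$; this is excluded by the hypothesis that $\mP$ has no repetitive columns, completing the proof.
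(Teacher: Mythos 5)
Your proof is correct, and its skeleton matches the paper's: both directions hinge on the observation that $\mS\in\stab{\Sn}{\mP^T\eta\mP}$ is equivalent to $\mP$ and $\mP\mS$ having the same Gram matrix with respect to $\eta$, and the substantive direction then produces an isometry carrying the columns of $\mP$ to those of $\mP\mS$. Where you differ is in how that isometry is obtained. The paper simply asserts that two spanning families with equal $\eta$-Gram matrices must be related by a linear isometry in $G_\eta(d)$; you construct it explicitly as $\mO=(\mP\mS)_J\mP_J^{-1}$ for a column basis $J$, verify $\mO^T\eta\mO=\eta$ from the $J\times J$ block of the Gram identity, and then extend from the basis columns to all $n$ columns by showing that the coordinates $\vc_i=\mG_{J,J}^{-1}\mG_{J,i}$ of each column in the chosen basis are determined by the Gram matrix alone (using that $\mG_{J,J}=\mP_J^T\eta\mP_J$ is invertible because $\mP_J$ and $\eta$ are). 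This fills in the one step the paper leaves implicit and makes clear which hypotheses are used where: $\operatorname{rank}(\mP)=d$ for the construction, and the no-repeated-columns assumption only to conclude $\mO\neq\mI_d$ from $\mS\neq\mI_n$. Your treatment of the easy direction, including deducing $\mS\neq\mI_n$ from the invertibility of $\mP\mP^T$, likewise matches the paper's.
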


\begin{proof}
    Let \(\mP = [\vv_1,\vv_2,\cdots, \vv_n]\) and \(\mP\mS = [\vw_1, \vw_2, \cdots, \vw_n]\) and recall that for any index $i=1,2,\cdots,n$, $\vv_i$ and $\vw_i$ are both non-null, and collectively, both $\{\vv_i\}_{i=1}^n$ and $\{\vw_i\}_{i=1}^n$ span the entire $d$-dimensional indefinite inner product space. Since \(\mS^T\mP^T\eta\mP\mS = (\mP\mS)^T\eta(\mP\mS) = \mP^T\eta\mP\), it follows that for all indices \(k\) and \(l\), \(\langle \vv_k, \vv_l \rangle = \langle \vw_k, \vw_l \rangle\), \textit{i.e.}, the same Gram matrix. Considering these as two different indefinite inner product spaces preserving the same inner product, there must exist a linear isometry \(\mO \in G_\eta(d)\) that transforms between these two spaces such that \(\mO\vv_k = \vw_k\). Since $\mS \ne \mI_n$, $\mO \ne \mI_d$. Therefore, we have \(\mO\mP = \mP\mS\) with $\mO\ne \mI_d$.
    
    Conversely, consider \(\mO\mP = \mP\mS\) and \(\mO \ne \mI_d\). Since \(\mP\) is \(d\)-rank and \(\mO\ne \mI_d\), it is impossible \(\mO\mP = \mP\). Therefore, \(\mS\ne \mI_n\) and we have \(\mS^T \mP^T \eta \mP\mS = \mP^T\mO^T \eta \mO \mP = \mP^T\eta\mP\), showing that \(\mS \in \text{Stab}_{\mathrm{S}_n}(\mP^T\eta\mP)\).
\end{proof}

Consequently, a non-trivial stabilizer \(\text{Stab}_{\mathrm{S}_n}(c(\mP^T\eta\mP))\) reveals a non-trivial point group with respect to the point cloud structure \(\mP\). Since our frame averaging \(\langle\Phi\rangle_{\mathcal F_\phi}\) in~\cref{sec:pcloud} is \(\mathrm{S}_n\times G_\eta(d)\)-invariant, for any \(\mO \in \mathcal{G}\), \(\langle\Phi\rangle_{\mathcal F_\phi}(\mO\mP) = \langle\Phi\rangle_{\mathcal F_\phi} (\mP\mS) = \langle\Phi\rangle_{\mathcal F_\phi}(\mP)\). Therefore, \(\langle\Phi\rangle_{\mathcal F_\phi}\) is \(\mathcal{G}\)-invariant.

Now we derive the $\Sn\times G_\eta(d)$ frame averaging. By~\cref{thm:prodG}, given a $G_\eta (d)$-invariant and $\Sn$-equivariant frame $\mathcal{F}$ and a $G_\eta(d)$-equivariant function $\Phi:\mathbb{R}^{d\times n}\rightarrow\mathbb{R}^{d\times n}$, we can obtain the $\Sn\times G_\eta(d)$-equivariant frame averaging
% \begin{equation}
%     \langle\Phi \rangle_{\mathcal{F}}(\mP) =\frac{1}{|\mathcal{F}(\mP)|} \sum_{\mS \in \mathcal{F}(\mP)} \Phi(\mP\mS^T) \mS.
% \end{equation}
\begin{equation}
    \langle\Phi \rangle_{\mathcal{F}}(\mP) =\frac{1}{|\mathcal{F}(\mP)|} \sum_{g \in \mathcal{F}(\mP)} g\cdot\Phi(g^{-1}\cdot\mP) =\frac{1}{|\mathcal{F}(\mP)|} \sum_{\rho^{-1}(\mS) \in \mathcal{F}(\mP)} \Phi(\mP\mS^T) \mS.
\end{equation}
In fact, $\Phi$ can be constructed by the frame averaging presented in~\cref{sec:linAlgG} and the induced $G_\eta(d)$-set is built on each $\mP\mS^T$. By~\cref{sec:pcloud}, let the canonical form of the $G_\eta (d)$-invariant and $\Sn$-equivariant induced set be $c(\mP^T\eta\mP) = \mS_0^T\mP^T \eta \mP\mS_0$, where $\mS_0\in\Sn$ transforms $\mP^T\eta\mP$ into its canonical form. Then the $\Sn\times G_\eta(d)$ frame can be defined as $\mathcal{F} (\mP) = \{(\mS, \mO) \mid \mS\in \rho^{-1}(\mS_0^T)\text{Stab}_{\Sn}(\mS_0^T\mP^T \eta \mP\mS_0), \mO\in \hat{\mathcal{F}}_{\phi}(\phi(\mP\mS^T)) \}$, where $\hat{\mathcal{F}}_{\phi}$ is an induced $G_\eta(d)$-set in~\cref{sec:mfa_Geta}. Continuing from~\cref{thm:point_group_invariant}, we can prove this \(\mathrm{S}_n\times G_\eta(d)\) frame is a minimal frame.

\begin{theorem}\label{thm:minProd}
    Given $\mP\in \mathbb{R}^{d\times n}$ with $\text{rank}(\mP) = d$ and without null and repetitive columns. Let the \(\mathrm{S}_n\times G_\eta(d)\) frame defined by $\mathcal{F} (\mP) = \{(\mS, \mO) \mid \mS\in \rho^{-1}(\mS_0^T)\text{Stab}_{\Sn}(\mS_0^T\mP^T \eta \mP\mS_0), \mO\in \hat{\mathcal{F}}_{\phi}(\phi(\mP\mS^T)) \}$, where $\hat{\mathcal{F}}_{\phi}$ is a $G_\eta$ minimal frame on an induced $G_\eta(d)$-set induced by $\phi$ defined in~\cref{sec:mfa_Geta}. Then $\mathcal{F}$ is a \(\mathrm{S}_n\times G_\eta\) minimal frame on $\mathbb{R}^{d\times n}$.
\end{theorem}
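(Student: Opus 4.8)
The plan is to reduce the statement to \cref{thm:minimal_frame} (equivalently \cref{lemma:multiCanMaps}): a frame is minimal as soon as it equals the set of group elements carrying a genuine canonical form back to the input. Concretely, writing $G = \Sn\times G_\eta(d)$, I will exhibit a valid joint canonical form $x_0 = c(\mP)$ and prove that
$\mathcal F(\mP) = \{g\in G\mid g\cdot x_0 = \mP\}$;
by \cref{lemma:multiCanMaps} this set is precisely the minimal frame, so the identification finishes the proof. That $\mathcal F$ is a legitimate $G$-equivariant frame is already established by the construction in~\cref{sec:pcloud} together with~\cref{thm:prodG}, so the only new content is minimality.

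First I would construct $x_0$ by mirroring the two stages defining $\mathcal F$. Since the Gram matrix $\mP^T\eta\mP$ is $G_\eta(d)$-invariant and $\Sn$-equivariant, I canonicalize it under $\Sn$ to get $\mS_0$ with canonical Gram matrix $\mS_0^T\mP^T\eta\mP\mS_0$, then apply the generalized QR canonicalization to $\mP\mS_0$ to obtain its unique orthogonal factor $\hat\mQ_0\in G_\eta(d)$, and set $x_0 = \hat\mQ_0^{-1}\mP\mS_0$. I would then check that $x_0$ meets \cref{def:can_form}: representativeness is immediate because $x_0\in\orb{G}{\mP}$, and uniqueness/$G$-invariance holds because the Gram canonicalization is $G_\eta(d)$-invariant and $\Sn$-canonical, while the QR canonical form is $G_\eta(d)$-invariant by \cref{thm:qr_inv}; hence $x_0$ depends only on the orbit of $\mP$.

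Next I would pin down the shape of $\mathcal F(\mP)$. Because $\text{rank}(\mP) = d$ and $\mP$ has no null or repeated columns, each induced set $\phi(\mP\mS^T)$ has full column rank $d' = d$, so by the uniqueness of the generalized QR decomposition (\cref{thm:qr_unique}) and the $d = d'$ case of \cref{thm:minimal_induced_by_QR}, the inner $G_\eta(d)$-frame $\hat{\mathcal F}_\phi(\phi(\mP\mS^T))$ is a singleton $\{\hat\mQ_\mS\}$. Thus $\mathcal F(\mP) = \{(\mS,\hat\mQ_\mS)\mid \mS\in\mathcal S_\mP\}$, where $\mathcal S_\mP$ is the minimal $\Sn$-frame coset associated with $\stab{\Sn}{\mP^T\eta\mP}$. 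The matching then proceeds in two directions. For the forward inclusion I would show that all $\mP\mS^T$ with $\mS\in\mathcal S_\mP$ lie in a common $G_\eta(d)$-orbit—this is exactly \cref{thm:point_group_invariant}, since the permutations in $\mathcal S_\mP$ differ by Gram-stabilizing permutations, which are realized by point-group elements—so they share the canonical form $x_0$, giving $\hat\mQ_\mS^{-1}\mP\mS^T = x_0$, i.e. $(\mS,\hat\mQ_\mS)\cdot x_0 = \mP$. For the reverse inclusion I would take arbitrary $g = (\mS,\mO)$ with $g\cdot x_0 = \mP$, compare Gram matrices to force $\mS\in\mathcal S_\mP$, and use $\text{rank}(\mP) = d$ to conclude that $\mO$ is then uniquely determined and must coincide with $\hat\mQ_\mS$.

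The crux, and the main obstacle, is making this two-way correspondence precise: I must verify that the permutation-indexed family of QR factors $\{\hat\mQ_\mS\}$ exactly exhausts the $G_\eta(d)$-components of the joint stabilizer coset, with neither extra elements nor omissions. This is where \cref{thm:point_group_invariant} is indispensable, as it guarantees that each permutation fixing the canonical Gram matrix is realized by a unique point-group element of $G_\eta(d)$; the resulting bijection between $\stab{\Sn}{\mP^T\eta\mP}$ and the point group forces $\lvert\mathcal F(\mP)\rvert = \lvert\stab{G}{\mP}\rvert$ and ensures the components assemble into a single coset. The remaining difficulty is purely bookkeeping—tracking left versus right cosets and the representation convention $\rho$ on the product group—but once the point-group correspondence is in hand, $\mathcal F(\mP) = \{g\mid g\cdot x_0 = \mP\}$ follows, and \cref{thm:minimal_frame} (via \cref{lemma:multiCanMaps}) yields that $\mathcal F$ is a minimal frame.
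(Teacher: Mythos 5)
Your proposal is correct and follows essentially the same route as the paper's proof: both hinge on \cref{thm:point_group_invariant} to realize the Gram-stabilizing permutations as elements of $G_\eta(d)$ (so that all $\mP\mS^T$ lie in one $G_\eta(d)$-orbit), use the full-rank uniqueness of the generalized QR decomposition to make each inner frame a singleton sharing a common canonical form, and conclude that $\mathcal F(\mP)$ is exactly a coset of the joint stabilizer. The only cosmetic difference is the final step — you invoke \cref{lemma:multiCanMaps} directly, whereas the paper shows $\mathcal F(\mP_0')\subseteq\stab{\Sn\times G_\eta(d)}{\mP_0'}$ and upgrades to equality via \cref{lem:minimal_frame} and the conjugation argument before applying \cref{thm:minimal_frame} — which amounts to the same thing.
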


\begin{proof}
    Let $\mP_0 = \mP\mS_0$ and we obtain $\mathcal{F} (\mP_0) = \{(\mS, \mO) \mid \mS\in \text{Stab}_{\Sn}(\mP^T_0 \eta \mP_0), \mO\in \hat{\mathcal{F}}_{\phi}(\phi(\mP_0\mS^T)) \}$. Given $\text{rank}(\mP_0) = d$ and any $(\mS_i, \mO_i),(\mS_j,\mO_j) \in  \mathcal{F}(\mP_0)$, by~\cref{thm:point_group_invariant}, there exists $\hat{\mO}_i,\hat{\mO}_j\in G_\eta(d)$ such that $\mP_0\mS_i^T = \hat{\mO}_i \mP_0$ and $\mP_0\mS_j^T = \hat{\mO}_j \mP_0$. Subsequently, both $\mP_0\mS_i^T \in \text{Orb}_{G_\eta(d)}(\mP_0)$ and $\mP_0\mS_j^T \in \text{Orb}_{G_\eta(d)}(\mP_0)$. In addition, since $\text{rank}(\mP_0) = d$, both $\phi(\mP_0\mS_i^T)$ and $\phi(\mP_0\mS_j^T)$ are square matrices. By the uniqueness of the QR decomposition shown in~\cref{sec:qr_decomposition}, $\phi(\mP_0\mS_i^T) = \hat{\mQ}_i\hat{\mR}_i$ and $\phi(\mP_0\mS_j^T) = \hat{\mQ}_j \hat{\mR}_j$ have unique $\hat{\mQ}_i$ and $\hat{\mQ}_j$ and the same canonical form $\hat{\mR}_i = \hat{\mR}_j$. By~\cref{lemma:multiCanMaps}, we obtain $\mO_i = \hat{\mQ}_i$ and $\mO_j = \hat{\mQ}_j$. Since $\mP_0\mS_i^T$ and $\mP_0\mS_j^T$ are in the same orbit, there exists $\mO'\in G_\eta (d)$ such that $\mO'\mO_i^{-1}\mP_0\mS_i^T = \mO_j^{-1}\mP_0\mS_j^T$. Applying $\phi$ on both sides of the equation we obtain $\phi(\mO'\mO_i^{-1}\mP_0\mS_i^T) = \mO' \mO_i^{-1}\phi(\mP_0\mS_i^T) = \mO'\hat{\mR}_i = \phi(\mO_j^{-1}\mP_0\mS_j^T) = \mO_j^{-1}\phi(\mP_0\mS_j^T) = \hat{\mR}_j $. Since $\hat{\mR}_i = \hat{\mR}_j$ and $\hat{\mR}_i,\hat{\mR}_j$ are full-rank square matrices, $\mO' = \mI_d$. Therefore, we obtain $\mO_i^{-1}\mP_0\mS_i^T = \mO_j^{-1}\mP_0\mS_j^T$. Let $\mP_0'= \mO_i^{-1}\mP_0\mS_i^T = \mO_j^{-1}\mP_0\mS_j^T$. Then $\mathcal{F} (\mP_0') = \mathcal{F} (\mO_j^{-1}\mP_0\mS_j^T) = (\mS_j^T,\mO_j^{-1}) \mathcal{F} (\mP_0)$. Now, for any $(\mS_i, \mO_i) \in  \mathcal{F}(\mP_0)$, every element of $\mathcal{F} (\mP_0')$ has a form of $(\mS_i\mS_j^T, \mO_j^{-1}\mO_i)$ such that
    \begin{equation}
        \mO_j^{-1}\mO_i\mP_0'\mS_i\mS_j^T = \mO_j^{-1}\mP_0\mS_i^T\mS_i\mS_j^T = \mO_j^{-1}\mP_0\mS_j^T = \mP_0'.
    \end{equation}
    Therefore, every element in $\mathcal{F} (\mP_0')$ stabilizes $\mP_0'$ and $\mathcal{F} (\mP_0') \subseteq \text{Stab}_{\Sn\times G_\eta(d)}(\mP_0')$. By~\cref{lem:minimal_frame}, there exists $\mP_0'' \in \text{Orb}_{\Sn \times G_\eta(d)}(\mP_0')$ such that $\text{Stab}_{\Sn\times G_\eta(d)}(\mP_0'')\subseteq \mathcal{F} (\mP_0'')$. Let $g \in \Sn\times G_\eta(d)$ such that $g\cdot \mP_0'' = \mP_0'$. Then we obtain
    \begin{equation}
        g\text{Stab}_{\Sn\times G_\eta(d)}(\mP_0'') \subseteq g'' \mathcal{F}(\mP_0'') = \mathcal{F}(\mP_0') \subseteq \text{Stab}_{\Sn\times G_\eta(d)}(\mP_0') = g \text{Stab}_{\Sn\times G_\eta(d)}(\mP_0'') g^{-1}
    \end{equation}
    where the last equality follows from~\cref{thm:stab}. Therefore, $\text{Stab}_{\Sn\times G_\eta(d)}(\mP_0'')\subseteq \text{Stab}_{\Sn\times G_\eta(d)}(\mP_0'') g^{-1}$, so $g = e$ and $\mP_0' = \mP_0''$. And we can obtain $\mathcal{F} (\mP_0') = \text{Stab}_{\Sn\times G_\eta(d)}(\mP_0')$. Since $\mP_0' \in \text{Orb}_{\Sn \times G_\eta(d)}(\mP)$, by~\cref{thm:minimal_frame}, this completes the proof.
    
\end{proof}
\colorblack
\section{Frame Averaging for Common Groups}
\label{sec:frame_group}

In this section, we detail the concrete equivariant frame averaging process for common groups, including translation group \((\mathbb{R}^{d}, +)\), linear algebraic group with equation \(\mO^T\eta \mO = \eta\), \(\mathrm{O}(d)/\mathrm{SO}(d)\) by~\citet{puny2021frame} and our improvement over it, \(\mathrm{E}(d)/\mathrm{SE}(d)\) as well as \(\mathrm{GL}(d,\mathbb{R})/\mathrm{SL}(d, \mathbb{R})\). We mainly focus on the domain of \(\mathbb{R}^{d\times n}\) and the equivariant frame averaging. For graph data and frame averaging on permutation group \(\mathrm{S}_n\), see~\cref{sec:mckay} and~\cref{sec:point_group}. And the invariant frame-averaging counterpart can be easily derived as it is a special case of equivariant frame averaging.

\subsection{Translation Group $(\mathbb{R}^{d}, +)$}
\label{sec:translation_group}

Consider the translation group $(\mathbb{R}^{d}, +)$ acting on a continuous function \(\Phi:\mathbb{R}^{d\times n}\rightarrow \mathbb{R}^{d\times n}\) by left addition via broadcasting. Let \(\mP\in \mathbb{R}^{d\times n}\), consider the induced \(\mathbb{R}^{d}\)-set \(\phi(\mP) = \frac{1}{n}\mP\mathbf{1}\) where \(\mathbf{1}\) is a all one column vector. As \(\mathbf{0}\) is in the orbit of \(\phi(\mP)\) and is unique, we can select \(\mathbf{0}\) as the canonical form and then the stabilizer is trivial. Thus, we obtain the frame \(\mathcal F_\phi (\phi(\mP)) = \frac{1}{n}\mP\mathbf{1}\) and the frame averaging \(\langle\Phi\rangle_{\mathcal{F}_\phi \circ \phi }(\mP) = f(\mP - \frac{1}{n}\mP\mathbf{1}) + \frac{1}{n}\mP\mathbf{1}\).

\subsection{Linear Algebraic Group by Induced $G$-set $\phi(\mP) = \mP\mM$}
\label{sec:frame_averaging_linear_algebraic_group}

Consider the linear algebraic group $G_\eta (d)$ by the equation \(\mO^T\eta\mO = \eta, \mO\in G_\eta(d)\) where \(\eta\) is a diagonal matrix with elements \(\pm 1\), and consider \(G_\eta(d)\) acting on a continuous function \(\Phi:\mathbb{R}^{d\times n}\rightarrow \mathbb{R}^{d\times n}\) by left multiplication. Consider the induced \(G_\eta(d)\)-set \(\phi(\mP) = \mP\mM\) where \(\mM = \mM\textsubscript{null}\mM\textsubscript{rank}\) is composed of two parts: \(\mM\textsubscript{null}\) filters all the null vectors of \(\mP\) and \(\mM\textsubscript{rank}\) selects \(\text{rank}(\mP\mM\textsubscript{null})\) linearly independent vectors. Let \(d' = \text{rank}(\phi(\mP))\).

\paragraph{Case I. \(d = d'\).} By the generalized QR decomposition presented in~\cref{sec:qr_decomposition}, \(\phi(\mP)\) can be decomposed into \(\hat{\mQ}\hat{\mR}\) and both \(\hat{\mQ} \in G_\eta (d)\) and \(\hat{\mR}\) is unique. Therefore, \(\hat{\mR}\) serves as the canonical form in the induced \(G_\eta(d)\)-set and the minimal frame \(\mathcal F_\phi(\phi(\mP)) = \{\hat{\mQ}\}\) by~\cref{thm:minimal_induced_by_QR}. Therefore, we obtain the minimal frame averaging
\begin{equation}
    \langle\Phi\rangle_{\mathcal{F}_\phi \circ \phi } (\mP) = \hat{\mQ}\Phi(\hat{\mQ}^{-1} \mP).
\end{equation}

\paragraph{Case II. \(d > d'\).} In this case, we only consider all column vectors in \(\mP\) to be non-null. By proof of~\cref{thm:linalgG_op}, let \(\mQ_0\) be \(\hat{\mQ}\) with all non-unique column vectors zero and \(\mP_0  = \hat{\mQ}^{-1} \mP\) be a fixed matrix, then by employing the counting measure \(\mu_{G_\eta(d)}\) and \(\rho(g)\coloneq \hat{\mQ}\) we obtain the frame averaging
\begin{equation}
\begin{split}
    \langle\Phi\rangle_{\mathcal{F}_\phi \circ \phi } (\mP) &= \int_{\mathcal{F}_\phi(\phi(\mP))} g\cdot \Phi(g^{-1}\cdot \mP)d\mu_{\mathcal{F}_\phi(\phi(\mP))}(g)\\
    &= \int_{\mathcal{F}_\phi(\phi(\mP))} g\cdot \Phi(\mP_0)d\mu_{\mathcal{F}_\phi(\phi(\mP))}(g)\\
    &= \mQ_0 \Phi(\mP_0).
\end{split}
\end{equation}

\paragraph{Determinant constraint.} Consider the special orthogonal group and proper Lorentz group where the group element has a determinant $1$. The simplest way to enforce determinant constraint to QR decomposition \(\phi(\mP) = \hat{\mQ}\hat{\mR}\) is to change the sign of one column of \(\hat{\mQ}\). We discuss this for different cases of \(d'\). If \(d = d'\) and \(\det (\hat{\mQ}) = -1\) then by flipping the sign of the last column of \(\hat{\mQ}\) and the last row of \(\hat{\mR}\), we can enforce \(\hat{\mQ}\) after flipping to belong to the group. If \(d = d' + 1\), the sign of the one indeterminant vector can be determined to ensure \(\det (\hat{\mQ}) = 1\). It is determined by adding one random linearly independent vector into $\phi(\mP)$ and follows the generalized Gram-Schmidt orthogonalization procedure. It is then treated as $d = d'$ case after orthogonalization. If \(d > d' +1\), there exist at least two non-unique orthonormal vectors. By flipping signs of two non-unique vectors of \(\hat{\mQ}\), we can obtain another orthonormal matrix belonging to the group, and by adding it to \(\hat{\mQ}\) we can cancel out those non-unique vectors without determining their signs. Therefore, if \(d > d' + 1\), the frame averaging is the same as the above \(d > d\) case without determinant constraint.

\subsection{Orthogonal Group $\mathrm{O}(d)$ by Induced $G$-set $\phi(\mP) = \mP\mP^T$}
\label{sec:ortho_ppt}

Consider the orthogonal group $\mathrm{O}(d)$ acting on a continuous function \(\Phi:\mathbb{R}^{d\times n}\rightarrow \mathbb{R}^{d\times n}\) by left multiplication. A induced \(G\)-set can be defined as
\[
\phi(\mP) = \mP\mP^T.
\]
with \(\mathrm{O}(d)\) acting upon \(\phi(\mP)\) by conjugate multiplication. Employing eigendecomposition, we have \(\mP\mP^T = \mO \mLambda \mO^T\), where \(\mLambda = \text{diag}(\lambda_1, \cdots, \lambda_d)\) is a diagonal matrix with its diagonal values in descending order. The canonical form is given by \(c(\mP\mP^T) = \mLambda\). With algebraic multiplicities of \(\mLambda\) as \(k_1, k_2, \cdots, k_m\) (where \(m \le d\)), the minimal frame at \(\mLambda\) can be defined as \(\hat{\mathcal{F}}_\phi (\mLambda) = \text{Stab}_{\mathrm{O}(d)}(\mLambda)\), where
\begin{equation}
    \text{Stab}_{\mathrm{O}(d)}(\mLambda) = \mathrm{O}(k_1) \times \mathrm{O}(k_2) \times \cdots \times \mathrm{O}(k_m).
\end{equation}

By the transitive property of group actions, there exists \(g_0 \in \mathrm{O}(d)\) such that \(\mP\mP^T = g_0 \cdot \mLambda = \rho(g_0) \mLambda \rho(g_0^{-1})\), where $\rho$ is the group representation of $\mathrm{O}(d)$. All the eigenvector matrices \(\mO\) of \(\mP\mP^T\) can be represented by such \(g_0\). Consequently, \(\mathcal{F}_\phi(g_0 \cdot \mLambda) = g_0 \cdot \mathcal{F}_\phi(\mLambda) = g_0 \text{Stab}_{\mathrm{O}(d)}(\mLambda)\) so that all such \(\mO\) constitute this left coset of \(\text{Stab}_{\mathrm{O}(d)}(\mLambda)\), which can be also shown by~\cref{lemma:multiCanMaps}. Hence, all minimal frame values \(\hat{\mathcal{F}}_\phi(\phi(\mP))\) can be deduced from the set of eigenvector matrices \(\mO\), aligning with the results in~\citet{puny2021frame}.

A potential issue arises when the eigenvalues of \(\mP\mP^T\) repeat or if \(\mP\mP^T\) is rank deficient, as this could lead to an infinite cardinality of \(\text{Stab}_{\mathrm{O}(d)}(\mP\mP^T)\). Merely selecting a matrix and forming a subset of \(\mathrm{O}(d)\) by altering the signs of the eigenvectors is insufficient, since frame averaging is effectively a group integral over (a coset of) the stabilizer. The following sections delve into two specific scenarios that emerge from this problem.

\paragraph{Case I. Rank-deficient \(\mP\mP^T\) without repeated nonzero eigenvalues.}

As previously mentioned, the rank deficiency of \(\mP\mP^T\) implies that the cardinality of \(\text{Stab}_{O(d)}(\mP\mP^T)\) is infinite, due to the infinite choices for eigenvectors corresponding to zero eigenvalues. Specifically, these eigenvectors, associated with zero eigenvalues, are orthogonal to each column vector in \(\mP\). Let \(\text{rank}(\mP) = d'\), and consider partitioning \(\mathcal{F}_\phi(\phi(\mP))\) into \(\{\mathcal{F}^i_\phi(\phi(\mP))\}_{i=1}^{2^{d'}}\), where each \(\mathcal{F}^i_\phi(\phi(\mP))\) consists of orthogonal matrices \(\rho(g)\coloneq\mO = [\mO_i^{d'}, \bar{\mO}]\), having a fixed \(\mO_i^{d'}\) with \(d'\) vectors and varying \(\mO^{*}\) with \(d-d'\) non-unique orthonormal vectors. The differences among \(\mO_i^{d'}\) in different \(\mathcal{F}^i_\phi(\mP)\) are the signs of each columns of \(\mO_i^{d'}\). For all \(\mO\in \mathcal{F}^i_\phi(\phi(\mP))\), \(\mP_i = \mO^{-1} \mP\) are the same as column vectors of \(\bar{\mO}\) are orthogonal to \(\mP\). 

Integrating over \(\bar{\mO}\) effectively sums over orthogonal transformations in the residual subspace, but these terms cancel out due to symmetry, i.e., by flipping the sign of the column in \(\bar{\mO}\), \(\mO\) is still an orthogonal matrix and summing up those \(\bar{\mO}\) will lead to zero. Define \(\mO_i = [\mO_i^{d'}, \mathbf{0}]\) and the counting measure $\mu_{\mathrm{O}(d)}$, and the frame averaging can be represented as
\begin{equation}
\begin{split}
   \langle\Phi\rangle_{\mathcal{F}_\phi \circ \phi }(\phi(\mP)) &= \int_{\mathcal{F}_\phi(\phi(\mP))}  g\cdot \Phi(g^{-1}\cdot \mP) \, d\mu_{\mathcal{F}_\phi(\phi(\mP))}(g) \\
    &=\sum_{i=1}^{2^{d'}} \int_{\mathcal{F}^i_\phi(\phi(\mP))} g\cdot \Phi(\mP_i) \, d\mu_{\mathcal{F}_\phi(\phi(\mP))}(g) \\
    &= \frac{1}{2^{d'}}\sum_{i=1}^{2^{d'}} \mO_i \Phi(\mP_i).
\end{split}
\end{equation}

\paragraph{Case II. \(\mP\mP^T\) with repeated nonzero eigenvalues.}
One method to estimate frame averaging in the presence of repeated nonzero eigenvalues is to use Monte Carlo integration, where each orthogonal matrix is sampled by drawing from each \(\mathrm{O}(k_i)\). However, this approximation becomes inefficient when a certain algebraic multiplicity \(k_i\) is large, especially in high-dimensional spaces. To address this, we suggest a first-order perturbation-based method to reduce the algebraic multiplicities of the eigenvalues. Consider a non-negative vector \(\vw \in \mathbb{R}^n\) and apply weighted PCA~\citep{atzmon2022frame}:
\begin{equation}
    \phi(\mP) = \mP \, \text{diag}(\vw) \, \mP^T,
\end{equation}
where \(\vw\) is \(\mathrm{O}(d)\)-invariant to \(\mP\), making \(\phi\) an \(\mathrm{O}(d)\)-equivariant function, thus inducing a induced \(G\)-set. The design of \(\text{diag}(\vw)\) can be such that it maps each \(\mP\) to a matrix with distinct nonzero eigenvalues, constructing the frame from the eigenvectors of \(\phi(\mP)\). Note that if \(\text{diag}(\vw) = \mI_n\), then \(\phi(\mP)\) reduces to the original case without perturbation.

Intuitively, \(\vw\) serves as scaling factors for each vector in \(\mP\). Adjusting the length of each vector in \(\mP\) helps to break the symmetry and reduce the corresponding cardinality of the stabilizer. Now, consider varying \(\vw\) starting from the all-one vector \(\mathbf{1}\). This alteration in \(\vw\) leads to perturbation in the eigenstructure of \(\phi(\mP)\).

\begin{theorem}
    There exist a non-negative vector $\vz\in \mathbb{R}^n$ such that all nonzero eigenvalues of $\phi(\mP)=\mP\text{diag}(\mathbf{1} + \vz)\mP^T$ have algebraic multiplicity exactly one.
\end{theorem}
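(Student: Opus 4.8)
The plan is to reduce the statement to a genericity fact about the spectrum of a symmetric matrix whose entries depend affinely on $\vz$, and then to conclude by a density argument. First I would record that since $\vz$ is non-negative, $\text{diag}(\mathbf{1}+\vz)\succ 0$, so $\text{rank}(\phi(\mP)) = \text{rank}(\mP) \eqqcolon r$; hence $\phi(\mP)$ has exactly $r$ positive eigenvalues together with a zero eigenvalue of multiplicity $d-r$, and the theorem asks precisely that these $r$ positive eigenvalues be pairwise distinct. Letting $\mB\in\mathbb{R}^{d\times r}$ have orthonormal columns spanning $\text{col}(\mP)$ and $\mC \coloneqq \mB^T\mP\in\mathbb{R}^{r\times n}$ (which has full row rank $r$), we have $\mP = \mB\mC$ and therefore $\phi(\mP) = \mB\,\mA(\vz)\,\mB^T$ with $\mA(\vz)\coloneqq \mC\,\text{diag}(\mathbf{1}+\vz)\,\mC^T\in\mathbb{R}^{r\times r}$. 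Because $\mB$ has orthonormal columns and $\mA(\vz)\succ 0$, the nonzero eigenvalues of $\phi(\mP)$ are exactly the eigenvalues of $\mA(\vz)$. So it suffices to find $\vz\ge 0$ making the $r\times r$ symmetric matrix $\mA(\vz)$ have $r$ distinct eigenvalues.

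Next I would pass to a polynomial formulation. The entries of $\mA(\vz)$ are affine in $\vz$, so the discriminant $\Delta(\vz)$ of the characteristic polynomial $\det(\lambda\mI - \mA(\vz))$ is a polynomial in the coordinates of $\vz$, and $\mA(\vz)$ has $r$ distinct eigenvalues if and only if $\Delta(\vz)\neq 0$. Granting for the moment that $\Delta$ is not the zero polynomial, its zero set has Lebesgue measure zero and, being the zero set of a nonzero polynomial, cannot contain the nonempty open positive orthant $\{\vz : \vz > 0\}$; hence there exists $\vz > 0$ (in particular $\vz \ge 0$) with $\Delta(\vz)\neq 0$, which proves the theorem.

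The crux is thus to show $\Delta\not\equiv 0$, i.e.\ to exhibit a single weight vector for which $\mA$ has simple spectrum. Since for this purpose $\vz$ need not be non-negative, I may reduce to the following lemma: for any full-row-rank $\mC$ there is a diagonal weighting making $\mC\,\text{diag}(\vw)\,\mC^T$ have $r$ distinct eigenvalues. Choosing $r$ linearly independent columns of $\mC$ as an invertible block $\mC_S$ and setting the remaining weights to $0$, this becomes the claim that some $\mC_S\,\text{diag}(\vt)\,\mC_S^T$ has distinct eigenvalues. I would prove this by induction on $r$, using the observation that the fully degenerate case is the easy one: if $\mC_S\mC_S^T = \lambda\mI$ then $\mC_S/\sqrt{\lambda}$ is orthogonal and $\mC_S\,\text{diag}(\vt)\,\mC_S^T = \lambda\,\mO\,\text{diag}(\vt)\,\mO^T$ has eigenvalues $\lambda t_1,\dots,\lambda t_r$, distinct once the $t_i$ are. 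A partially degenerate spectrum of $\mC_S\mC_S^T$ is split by a first-order perturbation of the weights, where the induction hypothesis applied to each repeated eigenspace (of dimension strictly smaller than $r$) supplies a direction separating that cluster, while eigenvalues belonging to distinct clusters remain separated for small enough perturbations.

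I expect this last step — establishing $\Delta\not\equiv 0$ — to be the main obstacle, since it requires breaking all degeneracies of $\mC_S\mC_S^T$ simultaneously and ruling out that every diagonal congruence $\mC_S\,\text{diag}(\vt)\,\mC_S^T$ is degenerate; the reduction and the concluding density argument are routine once this is in place. An alternative is to invoke that the locus of symmetric matrices with a repeated eigenvalue has codimension two, but one must still verify that the affine family $\{\mA(\vz)\}$ is not contained in it, so the constructive induction above seems cleaner.
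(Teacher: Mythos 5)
Your overall strategy is genuinely different from the paper's and, once completed, arguably cleaner. The paper gives a constructive algorithm: for each degenerate eigenvalue of $\mP\mP^T$ of multiplicity $m$ it applies $m-1$ successive reweightings $\vz_l$, each supported on a single coordinate so that the first-order perturbation matrix on the degenerate eigenspace is rank one, shows that each such step splits exactly one eigenvalue off the cluster, and tunes the magnitudes iteratively to avoid creating new collisions; the final vector is $\vz=\prod_l(\mathbf 1+\vz_l)-\mathbf 1$. You instead reduce to the $r\times r$ family $\mA(\vz)=\mC\,\mathrm{diag}(\mathbf 1+\vz)\,\mC^T$, observe that simplicity of the spectrum is the non-vanishing of a discriminant polynomial $\Delta(\vz)$, and conclude by a density argument once $\Delta\not\equiv0$ is known. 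This buys a generic (indeed strictly positive) witness with no bookkeeping about step sizes, whereas the paper's argument is closer to what one would implement numerically. Your reduction to $\mA(\vz)$ and the concluding density argument are correct as stated.

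The gap is where you anticipated it, and it is concrete: the induction you sketch for $\Delta\not\equiv0$ does not close. Two issues. First, after restricting to an invertible block $\mC_S$, the matrix governing the first-order splitting of a repeated eigenvalue with orthonormal eigenbasis $V_j$ is $(V_j^T\mC_S)\,\mathrm{diag}(\vt)\,(V_j^T\mC_S)^T$, where $V_j^T\mC_S$ is a \emph{rectangular} $m_j\times r$ full-row-rank matrix; so the induction hypothesis must be stated for rectangular full-row-rank matrices, and then your treatment of the fully degenerate case (``$\mC_S\mC_S^T=\lambda\mI$ implies $\mC_S/\sqrt{\lambda}$ is orthogonal'') is unavailable, since a rectangular $\mD$ with $\mD\mD^T=\lambda\mI$ is not a multiple of an orthogonal matrix and $\mD\,\mathrm{diag}(\vt)\,\mD^T$ does not have eigenvalues $\lambda t_i$. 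Second, ``breaking all degeneracies simultaneously'' needs an explicit argument: you need a single direction $\vw$ whose cluster matrices all acquire simple spectrum at once. Both are repairable. For the second, each cluster contributes a not-identically-zero discriminant by the induction hypothesis, and a finite union of proper algebraic subsets of $\mathbb{R}^n$ is proper, so a common good direction exists; for small $\epsilon$ the perturbation $\mathrm{diag}(\mathbf 1+\epsilon\vw)$ then splits every cluster while keeping distinct clusters apart. For the first, handle $\mD\mD^T=\lambda\mI_m$ by one rank-one reweighting $\vw=\mathbf 1+\epsilon e_k$ with $\vd_k\neq0$, giving $\lambda\mI+\epsilon\vd_k\vd_k^T$ with one simple eigenvalue split off, and then re-enter the inductive case. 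Note that this last repair is precisely the paper's rank-one splitting step, so in the hardest subcase the two proofs rest on the same mechanism.
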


\begin{proof} 
Given a matrix $\phi(\mP)=\mP\mP^T\in \mathbb{R}^{d\times d}$, consider the following perturbation procedure to lift degeneracy in its eigenvalues. The perturbation is performed on $\mP\mP^T$ using a series of vectors $\{\vz_l\}_{l=1}^{l_{\text{max}}}$ for each degenerate eigenvalue. The aim is to adjust the eigenvalues to resolve degeneracies without introducing new ones. The perturbation process begins by acknowledging that 
\begin{equation}
    \mP\text{diag}(\mathbf{1} + \vz)\mP^T = \mP\mP^T + \mP \text{diag}(\vz) \mP^T.
\end{equation}
Thus, $\phi(\mP)$ is effectively a perturbation by $\mP \vz \mP^T$. Under perturbation theory, the eigenvalue $\lambda'$ and eigenvector $\vv'$ of the perturbed matrix can be expressed as 
\begin{equation}
    \lambda' = \lambda^{(0)} + \sigma\lambda^{(1)} + \sigma^2\lambda^{(2)} + \cdots, \quad \vv' = \vv^{(0)} + \sigma\vv^{(1)} + \sigma^2\vv^{(2)} + \cdots
\end{equation}
where $\lambda^{(i)}$ and $\vv^{(i)}$ are the $i$-th order terms in the expansion, with $\lambda^{(0)}$ and $\vv^{(0)}$ being the eigenvalue and eigenvector of $\mP\mP^T$, respectively. Given a degenerate eigenvalue $\lambda$ and its degenerate eigenvectors $\vv_1, \vv_2,\cdots, \vv_m$, the first-order perturbation matrix $\mM \in \mathbb{R}^{m\times m}$ is defined with entries
\begin{equation}
    \mM_{ij} = \vv_i\mP \text{diag}(\vz) \mP^T \vv_j.
\end{equation}
Consider the iterative rank-one perturbation. The perturbation begins by iteratively modifying $\mP\mP^T$ with vectors $\{\vz_l\}$, where $\vz_l$ is non-zero only at the $k_l$-th entry during the $l$-th iteration for each degenerate eigenvalue. For example, the first iteration involves a rank-one perturbation using $\vz_1$, leading to
\begin{equation}
    \mM_{ij} = (\vz_1)_{k_1} (\mP^T \vv_i)_{k_1} (\mP^T \vv_j)_{k_1},
\end{equation}
which simplifies to $\mM = (\vz_1)_{k_1} \vu_1 \vu_1^T$ with $\vu_1 = \left[(\mP^T \vv_1)_{k_1}, \cdots, (\mP^T \vv_m)_{k_1}\right]$. The choice of $(\vz_1)_{k_1}$ is critical. It is selected through an iterative approach, incrementing from a small value and ensuring that the resulting perturbed eigenvalue does not introduce new degeneracies or match any other existing eigenvalue of $\mP\mP^T$. This process continues, with the perturbed eigenvalue being
\begin{equation}
    \lambda + (\vz_1)_{k_1} \sum_{i=1}^m (\vu_1)_i^2,
\end{equation}
and the algebraic multiplicity of $\lambda$ is reduced by one after each perturbation. Subsequent iterations involve perturbing $\mP_1\mP_1^T = \mP\text{diag}(\mathbf{1} + \vz_1)\mP^T$ with $\vz_2$, and so on. The process requires $l_{\text{max}} = m - 1$ iterations to fully resolve the degeneracy of $\lambda$. For a set of unique eigenvalues $\{\lambda_1, \lambda_2, \cdots, \lambda_{d'}\}$ of $\mP\mP^T$, where $\eta(\lambda)$ denotes the multiplicity of $\lambda$, the total time complexity of these perturbations is $O(\sum_{i=1}^{d'}\eta(\lambda_i) - d')$. The final vector $\vz$ is given by 
\begin{equation}
    \vz = \prod_{l=1}^{l_{\text{max}}} (\mathbf{1} + \vz_l) - \mathbf{1}.
\end{equation}
This procedure ensures that the eigenvalues of $\mP\mP^T$ are perturbed in a controlled manner to lift degeneracies without introducing new ones.
\end{proof}

The algorithm is shown in~\cref{alg:perturbation}. For efficiency, the algorithm does not recompute $\mP^T\vv$ but uses the one at the start of the iteration. Also note that for each eigenspace, the eigenvectors are produced from the Gram-Schmidt process by the projection of vectors of $\mP$, in each iteration, $\vz_l$ is $\mathrm{O}(d)$ invariant, and thus the perturbation $\vz$ is $\mathrm{O}(d)$ invariant. As a result, $\phi(\mP)=\mP\text{diag}(\mathbf{1} + \vz)\mP^T$ is an \(\mathrm{O}(d)\)-equivariant frame without degenerate eigenvalues.

\begin{algorithm}
\caption{Eigenvalue Perturbation for Degeneracy Reduction}
\label{alg:perturbation}
\begin{algorithmic}[1]
\STATE {\bfseries Input:} matrix $\mP$, perturbation increment $\epsilon$, perturbation start $\epsilon\textsubscript{start}$, maximum times of sub-iteration $l\textsubscript{submax}$
\STATE $\vz \gets \mathbf{1}$
\STATE $\vk\textsubscript{set} \gets \text{set}()$
\STATE Compute eigenvalues and eigenvectors of $\mP\mP^T$, denoted as $\lambda$ and $\vv$
\FOR{each degenerate eigenvalue $\lambda$}
    \STATE $m\gets \text{algebraic multiplicity of } \lambda$
    \STATE Project $\mP$ into the eigenspace of $\lambda$ as $\mP\textsubscript{proj}$
    \STATE Select $m$ linearly independent vectors from $\mP\textsubscript{proj}$
    \STATE Perform Gram-Schmidt orthogonalization to $m$ linearly independent vectors to obtain eigenvectors $\vv_1, \vv_2, \cdots, \vv_m$
    \STATE $\mV= \left[\mP^T \vv_1, \mP^T \vv_2, \cdots, \mP^T \vv_m\right]$
    
    \FOR{$l = 1$ to $m - 1$}
        \FOR{$k = 1$ to $d$}
            \STATE $\vz_l \gets \mathbf{0}$ 
            \IF{$k$ in $\vk\textsubscript{set}$}
                \STATE $\textbf{continue}$
            \ENDIF
            \STATE $\vu \gets \left[\mV_{1k}, \mV_{2k}, \cdots, \mV_{mk}\right]$
            \IF{$\sum_{i=1}^m (\vu)_i^2$ = 0}
                \STATE $\textbf{continue}$
            \ELSE
            \STATE Add $k$ to $\vk\textsubscript{set}$ 
            \ENDIF
            \STATE $\epsilon\textsubscript{perturb} = \epsilon\textsubscript{start}$
            \STATE $l\textsubscript{sub} = 0$
            \WHILE{true}
                \STATE $l\textsubscript{sub} = l\textsubscript{sub} + 1$
                \STATE $(\vz_l)_k = \epsilon\textsubscript{perturb} * \lambda / \sum_{i=1}^m (\vu)_i^2 $
                \STATE Compute perturbed eigenvalue of $\mP\text{diag}(\mathbf{1} + \vz_l)\mP^T$
                \IF{no new degeneracy is introduced}
                    \STATE $\textbf{break}$
                \ENDIF
                \IF{$l\textsubscript{sub} \ge l\textsubscript{submax}$}
                    \STATE $\textbf{break}$
                \ENDIF
            \ENDWHILE
            \STATE $\mP \gets \mP \sqrt{\text{diag}(\mathbf{1} + \vz_l)}$
            \STATE $\vz \gets \vz (\mathbf{1} + \vz_l)$
            \STATE $\textbf{break}$
        \ENDFOR
    \ENDFOR
\ENDFOR
\STATE $\vz \gets \vz - \mathbf{1}$
\STATE \textbf{return} $\vz$
\end{algorithmic}
\end{algorithm}

\subsection{Euclidean Group $\mathrm{E}(d)$ and Special Euclidean Group $\mathrm{SE}(d)$}

Consider the Euclidean group $\mathrm{E}(d) = \mathbb{R}^d \rtimes \mathrm{O}(d)$ and special Euclidean group $\mathrm{SE}(d) = \mathbb{R}^d \rtimes \mathrm{SO}(d)$ acting on a continuous function $\Phi:\mathbb{R}^{d\times n}\rightarrow \mathbb{R}^{d\times n}$. By~\citet{puny2021frame}, this can be achieved such that first create a translation-equivariant function by~\cref{sec:translation_group}, then apply a \(\mathrm{O}(d)/\mathrm{SO}(d)\)-equivariant frame to this translation-equivariant function, which can be derived from our previous~\cref{sec:frame_averaging_linear_algebraic_group}.

\subsection{General Linear Group \(\mathrm{GL}(d,\mathbb{R})\) and Special Linear Group \(\mathrm{SL}(d, \mathbb{R})\)}

Consider the general linear group $\mathrm{GL}(d, \mathbb{R})$ and special linear group $\mathrm{SL}(d,\mathbb{R})$ acting on the left on a continuous function $\Phi:\mathbb{R}^{d\times n}_*\rightarrow \mathbb{R}^{d\times n}_*$. Let $\mP \in \mathbb{R}^{d\times n}_*$, consider the induced $\mathrm{GL}(d,\mathbb{R})$-set orr $\mathrm{SL}(d,\mathbb{R})$-set \(\phi(\mP) = \mP\mM\), where the mask \(\mM \in \mathbb{R}^{n \times d}\) is defined similar to~\cref{sec:GLset} selecting the non-zero and linearly independent vectors from $\mP$. 

For $\mathrm{GL}(n,\mathbb{R})$, we have $\phi(\mP) \in \mathrm{GL}(n,\mathbb{R})$ and thus $\phi(\mP)^{-1} \in \mathrm{GL}(n,\mathbb{R})$. Let the canonical form $c(\phi(\mP)) = \phi(\mP)^{-1}\phi(\mP) = \mI_d$ which is in the orbit of \(\phi(\mP)\) and is unique. Then $\text{Stab}_{\mathrm{GL}(d, \mathbb{R})}(\mI_d) = \{\mI_d\}$ and the minimal frame $\hat{\mathcal{F}_\phi}(\phi(\mP)) = \{\phi(\mP)\}$, and we obtain minimal frame averaging $\langle\Phi\rangle_{\mathcal{F}_\phi \circ \phi }(\mP) = \phi(\mP) \Phi(\phi(\mP)^{-1}\mP)$. 

On the other hand, let $\mD = \frac{\text{sign}(\text{det}\phi(\mP))}{\sqrt[d]{|\text{det}\phi(\mP)|}} \mI_d$ and we have $\phi(\mP)\mD \in \mathrm{SL}(d,\mathbb{R})$ and thus $\mD^{-1}\phi(\mP)^{-1} \in \mathrm{SL}(d,\mathbb{R})$. Let the canonical form $c(\phi(\mP)) = \mD^{-1}\phi(\mP)^{-1} \phi(\mP) = \mD^{-1}$ which is in the orbit of \(\phi(\mP)\) and $\mathrm{SL}(d, \mathbb{R})$-invariant. Then, $\text{Stab}_{\mathrm{SL}(d, \mathbb{R})}(\mD^{-1}) = \{\mI_d\}$ and the minimal frame $\hat{\mathcal{F}}_\phi(\phi(\mP)) = \{\phi(\mP)\mD\}$, and we obtain minimal frame averaging $\langle\Phi\rangle_{\mathcal{F}_\phi \circ \phi }(\mP) = \phi(\mP)\mD \Phi(\mD^{-1}\phi(\mP)^{-1}\mP)$.

Note that the induced set $\phi(\mP)$ requires a determinant bound to ensure the stability of matrix inverse computation.

\begin{table}[t]
\begin{center}
\caption{Canonical forms of common groups on the domain of \(\mathcal S = \mathbb{R}^{d\times n}, \mathbb{C}^{d\times n}, \mathbb{R}^{d\times n}_*\) or \(\mathbb{C}^{d\times n}_*\). While the permutation group \(S_n\) acts on \(\mathcal S\) by right multiplication, all the other groups act on \(\mathcal S\) by left multiplication. Here, lexicographic sort refers to the sorting result of the columns of \(\mP \in \mathcal S\) by the row lexicographic order. The canonical graph is derived from the canonical labeling method~\citep{mckay2014practical}. The canonical form \(\hat{\mR}\) is derived from our generalized QR decomposition, where \(\phi(\mP) = \hat{\mQ} \hat{\mR}\) and \(\hat{\mQ}\) belongs to the corresponding group. The canonical form \(\mH\) is derived from the polar decomposition \(\phi(\mP) = \mU\mH\). For canonical form \(\mLambda\) and \(\mJ\), \(\mLambda_{ii}\) and \(\mJ_i\) denote the eigenvalue and the Jordan block, respectively. $\sim$ denotes the equivalent relation, and $\mLambda_{11}\sim \cdots \sim \mLambda_{dd}$ and $\mJ_1\sim\cdots\sim \mJ_m$ mean eigenvalues and Jordan blocks are unique up to permutation.}
\label{tb:quotient}

\resizebox{\columnwidth}{!}{
\begin{tabular}{cccccc}
\toprule
Group $G$ & Domain $\mathcal{S}$ & $\phi(\mP)$ & Induced $G$-set $\mathcal{S}_\phi$ & Canonical Form & Matrix Decomposition \\ 
\hline
\midrule
$\mathrm{S}_n$ & $\mathbb{R}^{d\times n}$ & $\mP$ & $\mathbb{R}^{d\times n}$ & Lexicographical sort & -\\
$\mathrm{S}_n$ & $\mathbb{R}^{d \times n}$ & $\mP^T\mP$ & $\mathrm{Sym}(n, \mathbb{R})$ & Canonical graph & -\\
$(\mathbb{R}^d,+)$ & $\mathbb{R}^{d\times n}$ & $\frac{1}{n}\mP\mathbf{1}$ & $\mathbb{R}^{d}$ & $\mathbf{0}$ & -\\
$\mathrm{O}(d)$ & $\mathbb{R}^{d\times n}$ & $\mP\mM$ & $\mathbb{R}^{d\times m}_*, d\ge m$ & $\hat{\mR}$ & QR decomposition \\
$\mathrm{SO}(d)$ & $\mathbb{R}^{d\times n}$ & $\mP\mM$ & $\mathbb{R}^{d\times m}_*, d\ge m$ & $\hat{\mR}$ & QR decomposition \\
$\mathrm{O}(1, d-1)$ & $\mathbb{R}^{d\times n}$ & $\mP\mM$ & $\mathbb{R}^{d\times m}_*, d\ge m$ & $\hat{\mR}$ & QR decomposition \\
$\mathrm{SO}(1, d-1)$ & $\mathbb{R}^{d\times n}$ & $\mP\mM$ & $\mathbb{R}^{d\times m}_*, d\ge m$ & $\hat{\mR}$ & QR decomposition \\
$\mathrm{U}(d)$ & $\mathbb{C}^{d\times n}_*$ & $\mP\mM$ & $\mathbb{C}^{d\times d}_*$ & $\hat{\mR}$ & QR decomposition \\
$\mathrm{SU}(d)$ & $\mathbb{C}^{d\times n}_*$ & $\mP\mM$ & $\mathbb{C}^{d\times d}_*$ & $\hat{\mR}$ & QR decomposition \\
$\mathrm{SL}(d, \mathbb{R})$ & $\mathbb{R}^{d\times n}_*$ & $\mP\mM$ & $\mathbb{R}^{d\times d}_*$ & $\text{sign}(\text{det}\phi(\mP))\sqrt[d]{|\text{det}\phi(\mP)|}\mI_d$ & - \\
$\mathrm{GL}(d, \mathbb{R})$ & $\mathbb{R}^{d\times n}_*$ & $\mP\mM$ & $\mathbb{R}^{d\times d}_*$ & $\mI_d$ & - \\
$\mathrm{O}(d)/\mathrm{SO}(d)$ & $\mathbb{R}^{d\times n}$ & $\mP\mP^T$ & $\mathrm{Sym}(d, \mathbb{R})$ & $\mLambda\mid \mLambda_{11}\sim \cdots \sim \mLambda_{dd}$ & Eigendecomposition \\
$\mathrm{U}(d)/\mathrm{SU}(d)$ & $\mathbb{C}^{d\times n}$ & $\mP\mP^T$ & $\mathrm{Sym}(d, \mathbb{C})$ & $\mLambda \mid \mLambda_{11}\sim \cdots \sim \mLambda_{dd}$ & Eigendecomposition \\
$\mathrm{O}(d)$ & $\mathbb{R}^{d\times n}_*$ & $\mP\mP^T$ & $\mathrm{SPos}(d, \mathbb{R})$ & $\mH$ & Polar decomposition \\
$\mathrm{U}(d)$ & $\mathbb{C}^{d\times n}_*$ & $\mP\mP^T$ & $\mathrm{SPos}(d, \mathbb{C})$ & $\mH$ & Polar decomposition \\
$\mathrm{GL}(d, \mathbb{C})/\mathrm{SL}(d, \mathbb{C})$ & $\mathbb{C}^{d\times n}$ & $\mP\mP^T$ & $\mathrm{Sym}(d, \mathbb{C})$ & $\text{diag}(\mJ_1, \cdots, \mJ_m)\mid \mJ_1\sim\cdots\sim \mJ_m$ & Jordan decomposition \\
\bottomrule
\end{tabular}
}
\end{center}
\end{table}

\section{Experimental Details}
\label{sec:experimental_details}

\subsection{Equivariance Errors on Random Synthetic Data}
\label{sec:relative_error}
In this section, we first present the equivariance error defined in~\cref{eqn:relative_error} for several common groups, including \(\mathrm{O}(d), \mathrm{SO}(d),\mathrm{U}(d),\mathrm{SU}(d), \mathrm{O}(1, d-1), \mathrm{SO}(1, d-1), \mathrm{E}(d),\mathrm{SE}(d), \mathrm{GL}(d, \mathbb{R})\), and \(\mathrm{SL}(d,\mathbb{R})\). We denote the model without any frames as \textbf{Plain}, our minimal frame averaging as \textbf{MFA},~\citet{puny2021frame}'s frame averaging method as \textbf{FA} and~\citet{duval2023faenet}'s stochastic frame averaging as \textbf{SFA}. We adopt six models to test the equivariance error. Unless otherwise specified, they are GIN~\citep{xu2018powerful}, GCN~\citep{kipf2016semi}, an MLP model, an MLP model with batch normalization, a nonlinear function with ReLU given by
\begin{equation}
    \Phi\textsubscript{ReLU}(x) = x + \text{ReLU}(x)
\end{equation}
and a nonlinear function with the $\sin$ function given by
\begin{equation}
    \Phi\textsubscript{Sine}(x) = \sin(x) - \frac{x}{\|x \|}.
\end{equation}
We provide further details regarding the synthetic data and the corresponding groups in the captions of the below figures. Unless otherwise specified, each group experiment is conducted on 100 point cloud samples with coordinates independently drawn from a Gaussian distribution with a mean of $0$ and a standard deviation of $1$ for each value. To better visualize the figure, we apply log scaling. To prevent zero input, \textit{e.g.}, in the equivariance test of the permutation group, we add \(\epsilon=10^{-12}\) to each resulting value when encountering zero input.

\begin{figure}[ht]
\centering
\begin{tabular}{c@{\hspace{50pt}}c}
    \includegraphics[height=0.2\textheight]{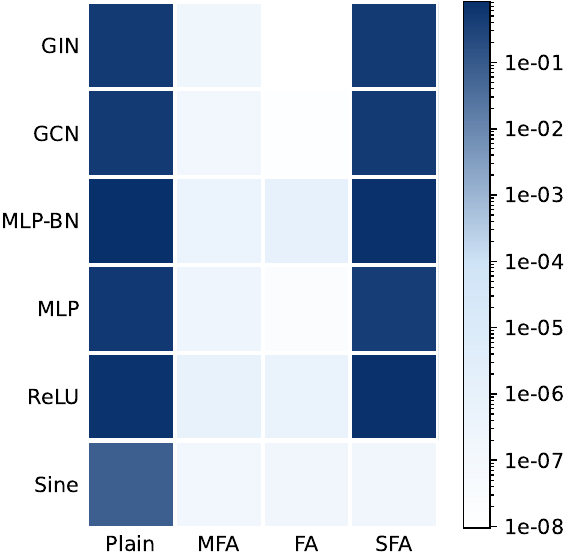} &
    \includegraphics[height=0.2\textheight]{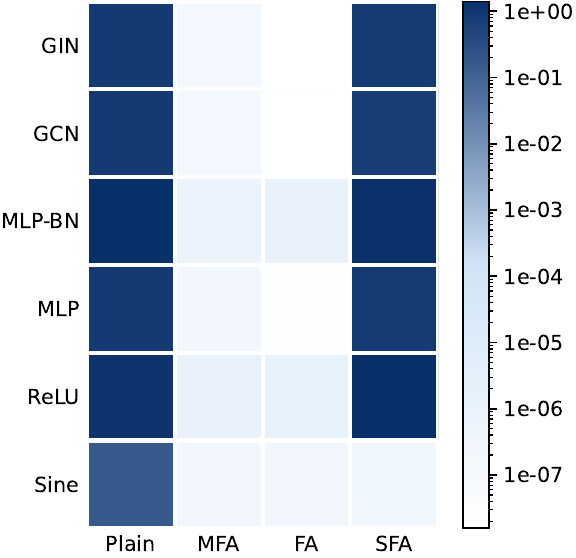} \\
    (a) $\mathrm{O}(3)$ & (b) $\mathrm{O}(5)$ \\
    \includegraphics[height=0.2\textheight]{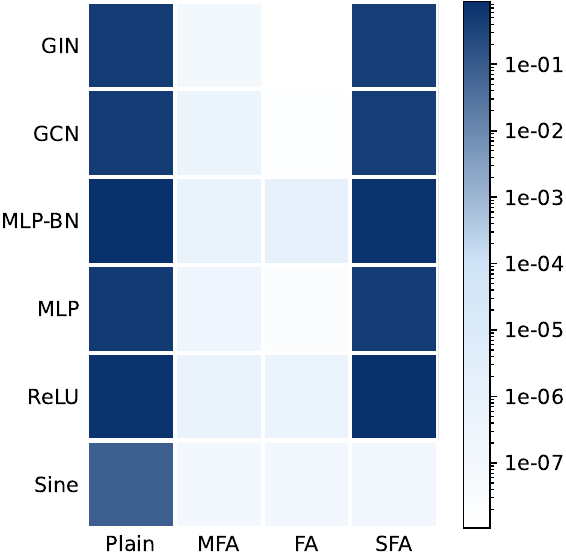} &
    \includegraphics[height=0.2\textheight]{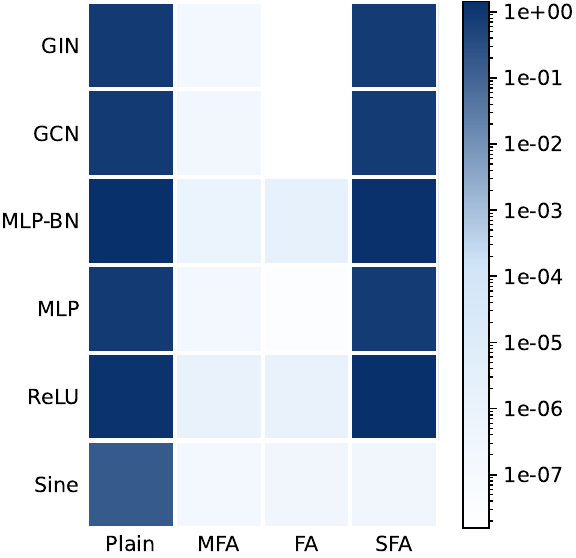} \\
    (c) $\mathrm{SO}(3)$ & (d) $\mathrm{SO}(5)$ \\
    \includegraphics[height=0.2\textheight]{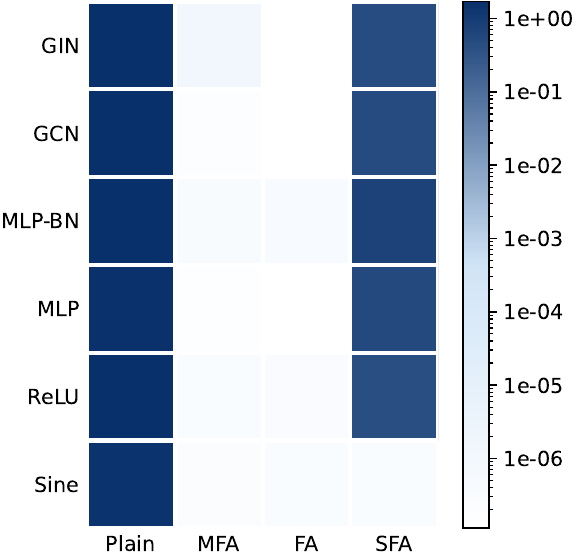} &
    \includegraphics[height=0.2\textheight]{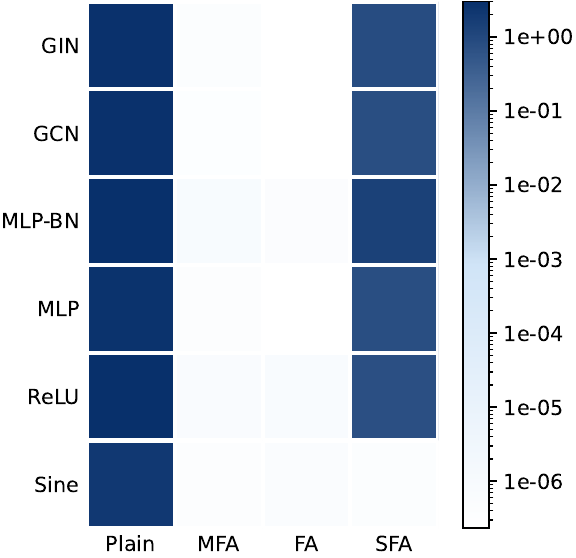} \\
    (e) $\mathrm{E}(3)$ & (f) $\mathrm{E}(5)$\\
    \includegraphics[height=0.2\textheight]{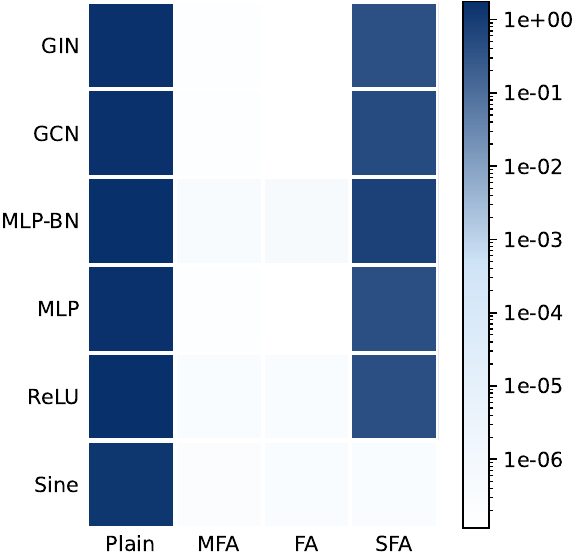} &
    \includegraphics[height=0.2\textheight]{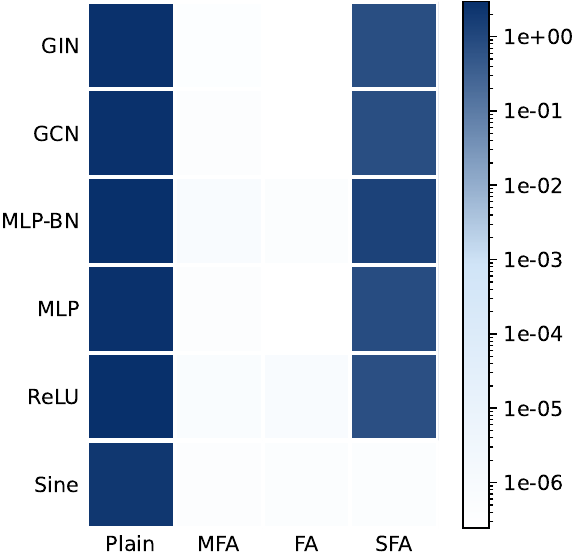} \\
    (g) $\mathrm{SE}(3)$ & (h) $\mathrm{SE}(5)$ \\
\end{tabular}
\caption{Comparative illustrations of equivariance test on the groups \(\mathrm O(3), \mathrm O(5), \mathrm{SO}(3), \mathrm{SO}(5), \mathrm E(3), \mathrm E(5), \mathrm{SE}(3)\), and \(\mathrm{SE}(5)\). Figures (a) and (b) depict the outcomes for $\mathrm O(3)$ and $\mathrm O(5)$, figures (c) and (d) display the results for $\mathrm{SO}(3)$ and $\mathrm{SO}(5)$, figures (e) and (f) show the results for $\mathrm{E}(3)$ and $\mathrm{E}(5)$, figures (g) and (h) demonstrate the results for $\mathrm{SE}(3)$ and $\mathrm{SE}(5)$, respectively. Data for left column is randomly sampled with a shape of \(100\times 3\) and data for the right column is randomly sampled with a shape of \(20\times 5\). All figures show the effectiveness of both our frame averaging method and original frame averaging method~\citep{puny2021frame}.}
\end{figure}

\begin{figure}[ht]
\centering
\begin{tabular}{cccc}
    \includegraphics[width=0.2\textwidth]{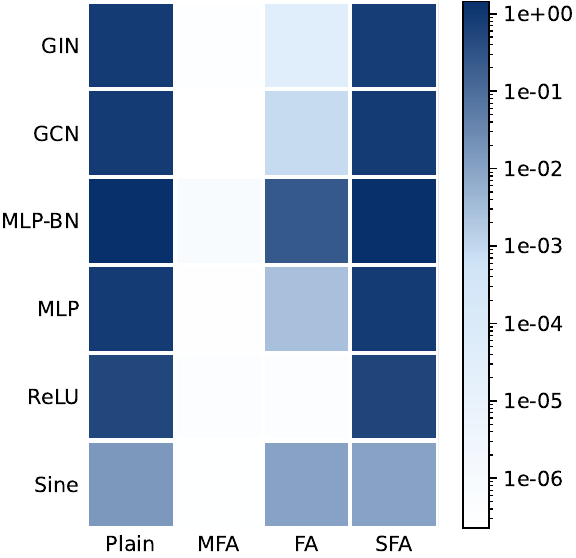} &
    \includegraphics[width=0.2\textwidth]{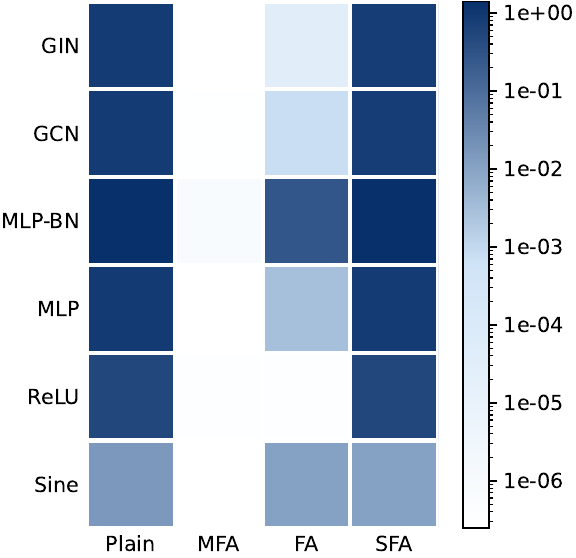} &
    \includegraphics[width=0.2\textwidth]{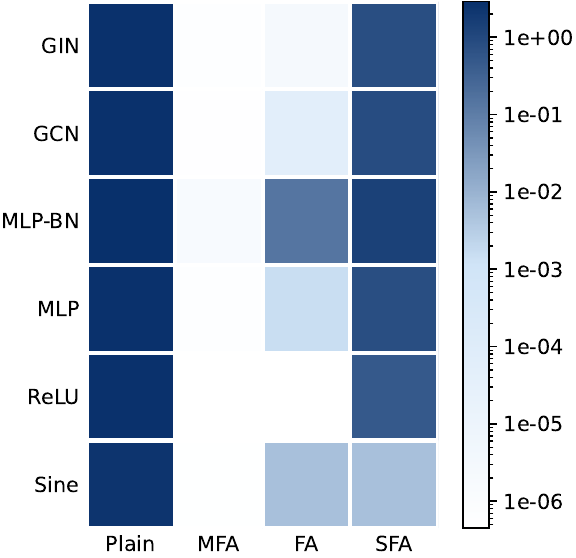} &
    \includegraphics[width=0.2\textwidth]{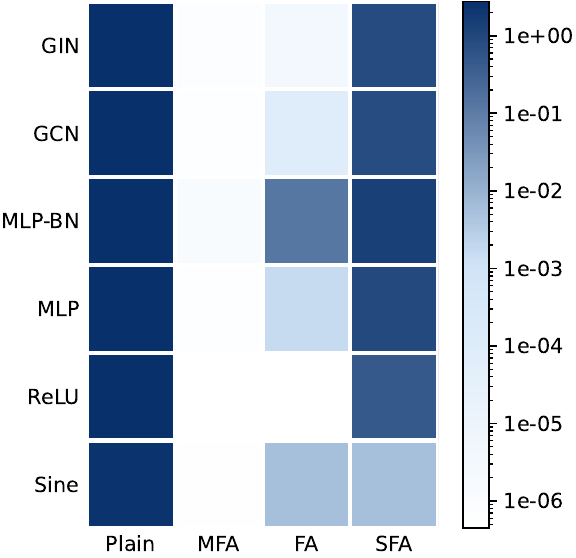} \\(a) $\mathrm{O}(5)$ & (b) $\mathrm{SO}(5)$ &
    (c) $\mathrm{E}(5)$ & (d) $\mathrm{SE}(5)$
\end{tabular}
\caption{Illustrations of equivariance test on the groups \(\mathrm O(5), \mathrm{SO}(5), \mathrm{E}(5), \mathrm{SE}(5)\) with degenerate singular values. All data is randomly sampled with a shape of \(20\times 5\) with $3$ repeated singular values. As shown above, the original frame averaging method~\citep{puny2021frame} fails the degenerate cases due to the repeated eigenvalues causing the frame size into infinity, while our method is not affected by the repeated eigenvalues and our minimal frame averaging is still equivariant to these groups.}
\end{figure}

\begin{figure}[ht]
\centering
\begin{tabular}{cccc}
    \includegraphics[height=0.2\textheight]{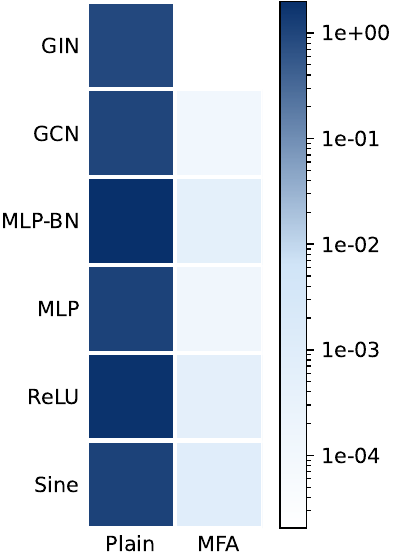} &
    \includegraphics[height=0.2\textheight]{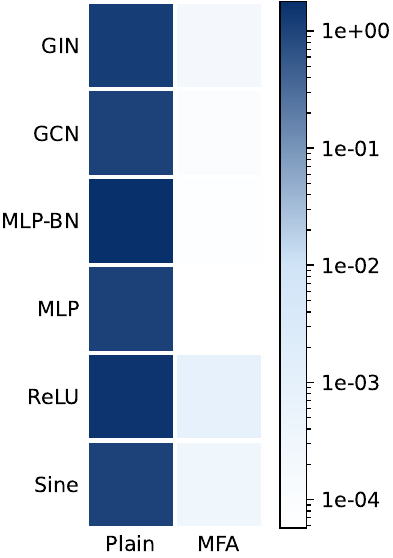} &
    \includegraphics[height=0.2\textheight]{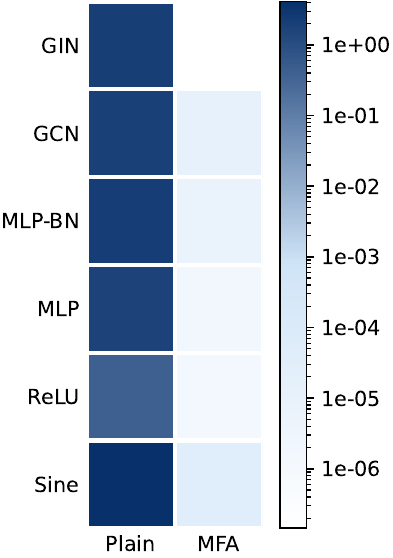} &
    \includegraphics[height=0.2\textheight]{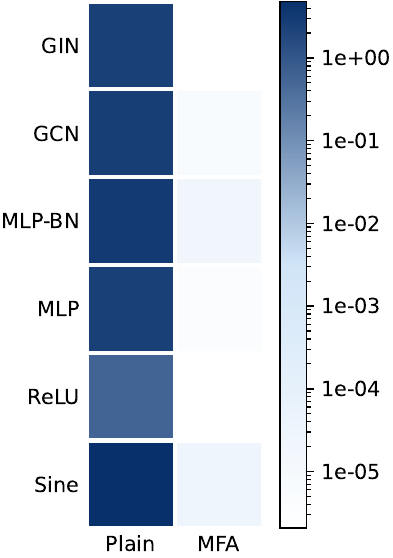} \\(a) $\mathrm{O}(1, 3)$ & (b) $\mathrm{SO}(1,3)$ &
    (c) $\mathrm{GL}(3,\mathbb{R})$ & (d) $\mathrm{SL}(3,\mathbb{R})$
\end{tabular}
\caption{Illustrations of equivariance test on the groups \(\mathrm O(1,3), \mathrm{SO}(1,3), \mathrm{GL}(3,\mathbb{R}), \mathrm{SL}(3,\mathbb{R})\). As the original frame averaging method~\citep{puny2021frame} does not give the frame construction for these groups, we only compare with the model without our method. The data for \(\mathrm O(1,3)\) and \(\mathrm{SO}(1,3)\) is randomly sampled with a shape of \(100\times 4\), and data for \(\mathrm{GL}(3,\mathbb{R})\) and \(\mathrm{SL}(3,\mathbb{R})\) is randomly sampled with a shape of \(100\times 3\). All the error scaling our method in the figures of is below \(1e^{1e-3}\), showing that our method is indeed equivariant with respect to these group.}
\end{figure}

\begin{figure}[ht]
\centering
\begin{tabular}{cccc}
    \includegraphics[width=0.2\textwidth]{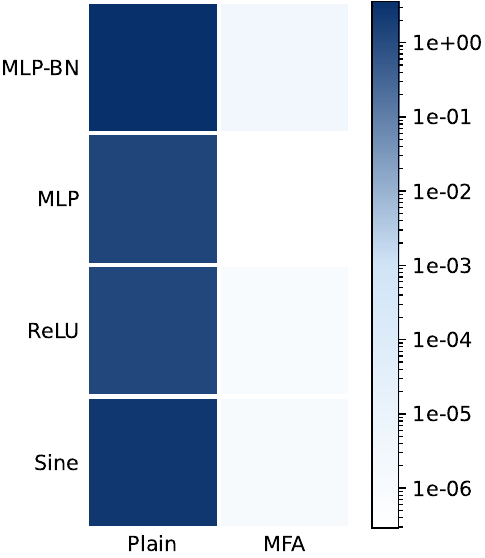} &
    \includegraphics[width=0.2\textwidth]{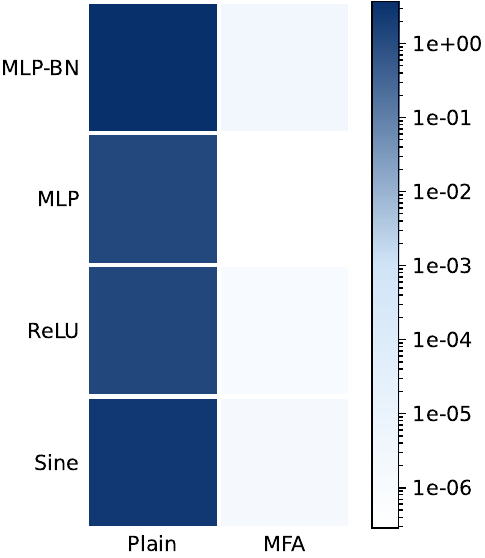} &
    \includegraphics[width=0.2\textwidth]{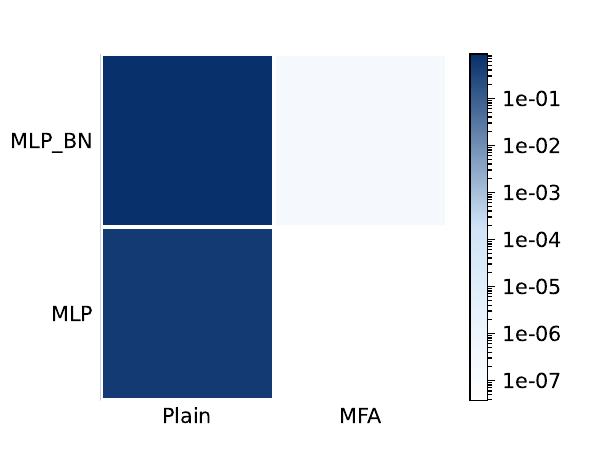} &
    \includegraphics[width=0.2\textwidth]{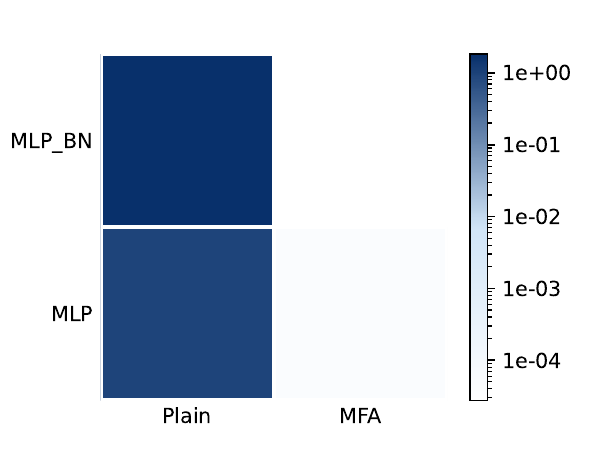} \\(a) $\mathrm{U}(3)$ & (b) $\mathrm{SU}(3)$ &
    (c) $\mathrm{S}_n\times \mathrm{O}(3)$ & (d) $\mathrm{S}_n\times\mathrm{O}(1,3)$
\end{tabular}
\caption{Illustrations of equivariance test on the groups \(\mathrm U(3), \mathrm{SU}(3), \mathrm{S}_n, \mathrm{S}_n \times \mathrm{O}(3), \mathrm{S}_n \times \mathrm{O}(1,3)\). In (d), the first two rows correspond to the error test of $\mathrm{S}_n \times \mathrm{O}(3)$ and the last two rows correspond to that of $\mathrm{S}_n \times \mathrm{O}(1,3)$. As the original frame averaging method~\citep{puny2021frame} does not give the frame construction for these groups, we only compare with the model without our method. The data for \(\mathrm U(3)\) and \(\mathrm{SU}(3)\) is randomly sampled with a shape of \(100\times 3\), data for \(\mathrm{S}_n\times \mathrm{O}(3)\) is randomly sampled with a shape of \(32\times 3\), and data for \(\mathrm{S}_n\times \mathrm{O}(1, 3)\) is randomly sampled with a shape of \(32\times 4\). Note that the networks used for these groups are different from the previous groups to accommodate the properties of these groups. The MLP models used for both \(\mathrm{U}(3)\) and \(\mathrm{SU}(3)\) are complex valued networks, and the MLP models used for \(\mathrm{S}_n\times\mathrm{O}(3) \) and \(\mathrm{S}_n\times\mathrm{O}(1,3)\) are transforming both the node dimension and the feature dimension, and are neither \(\mathrm{S}_n\times \mathrm{O}(3)\) nor \(\mathrm{S}_n\times\mathrm{O}(1,3)\)-equivariant. Specially, the \(\mathrm{S}_n\times \mathrm{O}(3)\) or \(\Sn\times\mathrm{O}(1,3)\)-equivariant frame is created by applying \(\mathrm{S}_n\)-equivariant and \(\mathrm{O}(3)\) or \(\mathrm{O}(1,3)\)-invariant frame to the MLP with \(\mathrm{O}(3)\) or \(\mathrm{O}(1,3)\)-equivariant frame, corresponding to~\cref{sec:pcloud}.}
\end{figure}

\subsection{\(n\)-Body Problem}
\label{sec:training_n_body}

\paragraph{Model.} Similar to~\citet{puny2021frame, kaba2023equivariance, ruhe2023clifford}, our model follows the architecture proposed by~\citet{satorras2021n}. Inspired by~\citet{ruhe2023clifford}, it integrates layer normalization~\cite{ba2016layer} and dropout~\cite{hinton2012improving} in message passing layers. The architecture consists of 4 message passing layers, each with a hidden dimension of 60, and employs a dropout rate of 0.1. Our generalized QR decomposition is used to compute the \(\mathrm O(3)\)-equivariant frame, which maintains a constant size of 1 across all data points, ensuring model equivariance with a single forward pass.

\paragraph{Training.} The model is optimized using a Mean Absolute Error (MAE) loss criterion, employing the Adam optimizer~\cite{kingma2014adam}. Training parameters include a batch size of 100, a learning rate of \(1 \times 10^{-3}\), and a weight decay of \(5 \times 10^{-6}\) over 10,000 epochs. During training, to alleviate the discontinuity of canonicalization, input augmentation through random rotations is applied, and FA-GNN is retrained under identical settings for fair comparison. Training and evaluation are conducted on a single NVIDIA GeForce RTX 2080 Ti GPU.

\subsection{Open Catalyst Dataset}
\label{sec:training_ocp}

\paragraph{Model.} We perform a direct comparison to \textsc{FAENet} by training it using the MFA paradigm with generalized QR decomposition in place of stochastic frame averaging as used by~\citet{duval2023faenet}. Network configurations include a radius cutoff of 6.0, 5 interaction layers, and a hidden dimension of 384. 
% Stochastic frame averaging in \textsc{FAENet} is replaced by MFA via QR decomposition, with 
Frames are calculated based on an identical point projection to 2D as employed by~\citet{duval2023faenet}. 
 
\paragraph{Training.} The Adam optimizer is employed with a batch size of 256 and an initial learning rate of \(2 \times 10^{-3}\), adjusted by a cosine annealing scheduler over 12 epochs. Random reflection augmentation is used during training. The model is trained and evaluated on an NVIDIA A100 GPU. Comprehensive results for the OC20 dataset are presented in~\cref{tab:OC}.

\begin{table*}[t]
    \caption{Results on the IS2RE (initial structure to relaxed energy) task from OC20~\cite{ocp_dataset}. Models are trained and evaluated on the IS2RE direct task, where the relaxed energy is predicted directly from the initial structure, as opposed to via relaxation or Noisy Nodes data augmentation~\cite{godwin2022simple,liao2022equiformer}. Baselines are \textsc{SchNet}~\cite{schutt2018schnet}, \textsc{DimeNet++}~\cite{gasteiger2020fast}, \textsc{GemNet-T}~\cite{gasteiger2019directional}, \textsc{SphereNet}~\cite{liu2022spherical}, \textsc{ComENet}~\cite{wang2022comenet}, SEGNN~\cite{brandstetter2021geometric}, \textsc{Equiformer}~\cite{liao2022equiformer} and \textsc{FAENet}~\cite{duval2023faenet}.}
    \label{tab:OC}
    \begin{center}
    \begin{sc}
    \resizebox{\textwidth}{!}{
\begin{tabular}{lccccc|ccccc}
\toprule
{} & \multicolumn{5}{c}{Energy MAE [eV]} & \multicolumn{5}{c}{EwT}\\
Model & ID &   OOD-Cat & OOD-Ads &  OOD-Both & Average & ID &   OOD-Cat & OOD-Ads &  OOD-Both & Average\\
\midrule
SchNet & 0.6372 & 0.6611 & 0.7342 & 0.7035 & 0.6840 & 2.96\% & 3.03\% & 2.22\% & 2.38\% & 2.65\% \\
DimeNet++ & 0.5716 & 0.5612 & 0.7224 & 0.6615 & 0.6283 & 4.26\% & 4.10\% & 2.06\% & \textbf{3.21}\% & 3.40\%\\
GemNet-T & 0.5561 & 0.5659 & 0.7342 & 0.6964 & 0.6382 & 4.51\% & 4.37\% & 2.24\% & 2.38\% & 3.38\%\\
SphereNet & 0.5632 & 0.5590 & 0.6682 & 0.6190 & 0.6024 & 4.56 \% & 4.59 \% & 2.70 \% & 2.70 \% & 3.64\%\\
ComENet & 0.5558 & 0.5491 & 0.6602 & 0.5901 & 0.5888 & 4.17\% & 4.53\% & 2.71\%  & 2.83\% & 3.56\% \\
SEGNN & 0.5310 & 0.5341 & 0.6432 & 0.5777 & 0.5715 & \textbf{5.32}\% & 4.89 \%  & 2.80 \% & 3.09 \% & \textbf{4.03} \%\\
Equiformer & \textbf{0.5088} & \textbf{0.5051} & 0.6271 & 0.5545 & \textbf{0.5489} & 4.88 \% & \textbf{4.92} \% & 2.93 \% & 2.98 \% & 3.93 \%\\
\midrule
FAENet & 0.5446 & 0.5707 & \textbf{0.6115} & \textbf{0.5449} & \textbf{0.5679} & \textbf{4.46}\% & \textbf{4.67}\% & 2.95 \% & \textbf{3.01} \% & \textbf{3.78}\% \\
MFAENet & \textbf{0.5437} & \textbf{0.5415} & 0.6203 & 0.5708 & 0.5691 & 4.33\% & 4.54\% & \textbf{2.96} \% & 2.97 \% & 3.70\%\\

\bottomrule
\end{tabular}
    }
    \end{sc}
    % \end{small}
    \end{center}
    \vspace{-0.05in}
\end{table*}

\subsection{Top Tagging Dataset}
\label{sec:training_top_tagging}

\paragraph{Model.} 
\textsc{LorentzNet}~\citep{gong2022efficient} is an very strong $\mathrm O(1,3)$-invariant baseline developed specifically for the top tagging task. Each layer of \textsc{LorentzNet} is designed to be $\mathrm O(1,3)$-equivariant and the final output is $\mathrm O(1,3)$-invariant. We replace the invariant layer with the outer product between the invariant features and equivariant vectors, following batch normalization and nonlinearities, thereby breaking the invariance of the final output. The model, termed \textsc{MinkGNN}, incorporates 6 message passing layers with a hidden dimension of 72. Further refinement is achieved through our generalized QR decomposition to establish an $\mathrm{O}(1,3)$-invariant frame, resulting in an $\mathrm{O}(1,3)$-invariant model referred to as \textsc{MFA-MinkGNN}.

\paragraph{Training.} Both \textsc{MinkGNN} and \textsc{MFA-MinkGNN} are optimized using the Adam optimizer, with a batch size of 64, a learning rate of \(5 \times 10^{-4}\), a weight decay of $1\times 10^{-2}$, and a cosine annealing scheduler over 100 epochs. We then evaluate them with the Exponential Moving Average (EMA) with a decay of 0.995. Training and evaluation are conducted on 1 NVIDIA A100 GPU. Comprehensive results for the top tagging dataset are presented in~\cref{tb:top-tagging}.

\begin{table}[t]
\caption{Accuracy and AUC for the top tagging experiment. Baselines are \textsc{ResNeXt}~\cite{xie2017aggregated}, \textsc{ParticleNet}~\cite{qu2020jet}, EGNN~\cite{satorras2021n}, LGN~\cite{bogatskiy2020lorentz}, \textsc{LorentzNet}~\cite{gong2022efficient}, and CGENN~\cite{ruhe2023clifford}.}
\label{tb:top-tagging}
\begin{center}
\begin{small}
\begin{sc}
\begin{tabular}{lcc}
\toprule
Method & Accuracy & AUC \\
\midrule
ResNeXt & 0.936 & 0.9837 \\
ParticleNet & 0.940 & 0.9858\\
EGNN & 0.922 & 0.9760\\
LGN & 0.929 & 0.9640\\
LorentzNet & \textbf{0.942} & 0.9868 \\
CGENN & \textbf{0.942} & \textbf{0.9869} \\
MFA-MinkGNN & \textbf{0.942} & \textbf{0.9869} \\
\bottomrule
\end{tabular}
\end{sc}
\end{small}
\end{center}
\end{table}

\subsection{Graph Separation}
\label{sec:training_graph_separation}

\paragraph{Model.} For both the \textsc{Graph8c} and EXP datasets, we adopt the MLP and GIN models as used by~\citet{puny2021frame} for the graph separation task. Using our custom implementation of the nauty canonical labeling algorithm~\cite{mckay2007nauty}, we generate canonical graphs for each sample and concatenate these to their respective inputs. Given the directed graph input and the permutation invariance of the task, frame sampling is unnecessary, allowing direct computation as per~\cref{eq:simplified_frame_averaging}.

\paragraph{Training.} Optimization is performed using the Adam optimizer, with settings including a learning rate of 0.001 and a batch size of 100 across 200 epochs.

\subsection{Convex Hull}
\label{sec:training_convex_hull}

\paragraph{Model.} We employ a MLP model with a hidden dimension of 32 for projections of both the feature and node dimensions, rendering the model neither \(\mathrm S_n\)-invariant nor \(\mathrm O(5)\)-invariant. After being centered, vectors from the origin of points are used as input. The frame is made \(\mathrm{S}_n\)-equivariant and \(\mathrm{O}(5)\)-invariant using the inner product matrix of vectors and is subsequently applied to a \(\mathrm{O}(5)\)-invariant frame to achieve an \(\mathrm S_n \times \mathrm O(5)\)-invariant frame averaging. The canonicalization of the inner product matrix follows the methodology described in~\cref{sec:mckay}.

\paragraph{Training.} The convex hull dataset, comprising 16,384 samples for each of the training, validation, and test sets, is generated following~\citet{ruhe2023clifford}. Training utilizes the Adam optimizer with a learning rate of 0.001, a batch size of 128, and a cosine annealing scheduler.

\end{document}